\newcommand{\R}{\mathbb{R}}
\renewcommand{\hat}{\widehat}
\DeclareMathOperator{\E}{\mathbb{E}}
\newcommand{\state}{y}
\newcommand{\prob}{p}
\newcommand{\expect}[2]{{\E}_{#1}\left[#2\right]}
\newcommand{\var}[2]{{\textsc{Var}}_{#1}\left[#2\right]}
\newcommand{\ECE}{\textsc{ECE}}
\newcommand{\ind}[1]{\mathbb{I}\left[#1\right]}
\newcommand{\pred}{\prob}
\newcommand{\vpred}{\bm{\pred}}
\newcommand{\vstate}{\bm{\state}}
\newcommand{\distcal}{\underline{\textsc{distCal}}}
\newcommand{\Ddistcal}{\textsc{distCal}}
\newcommand{\smooth}{\textsc{smCal}}
\newcommand{\intc}{\textsc{intCal}}
\newcommand{\CAL}{\textsc{Cal}}
\newcommand{\referr}{\textsc{Ref}}
\newcommand{\intvset}{\mathcal{\intv}}
\newcommand{\intv}{I}
\newcommand{\binECE}{\textsc{BinECE}}
\newcommand{\binECEtwo}{\ell_2\textup{-}\textsc{BinECE}}
\newcommand{\bin}{B}
\newcommand{\atb}{\mathrm{ATB}}
\newcommand{\atbone}{\ell_1\textup{-}\mathrm{ATB}}
\newcommand{\sign}{\mathsf{sign}}
\newcommand{\binparam}{b}
\newcommand{\distribution}{D}
\newcommand{\report}{r}
\newcommand{\vreport}{\bm{r}}
\newcommand{\feature}{X}
\newcommand{\bernoulli}{\text{Ber}}
\newcommand{\cB}{\mathcal B}
\newcommand{\accp}{\mathsf{accP}}
\newcommand{\Z}{\mathbb Z}
\newcommand{\calF}{F}
\newcommand{\calR}{\mathcal R}
\newcommand{\zset}{Z}
\newcommand{\calD}{\Gamma}
\newcommand{\N}{\mathbb N}
\newcommand{\sv}{\bm {s}}
\newcommand{\ECEtwo}{\ell_2\text{-}\ECE}
\newcommand{\ubse}{\text{UBSE}}
\newcommand{\stepfunc}{h}
\newcommand{\distcaltwo}{\ell_2\text{-}\distcal}
\newtheorem{theorem}{Theorem}[section]
\newtheorem*{theorem*}{Theorem}
\newtheorem{lemma}[theorem]{Lemma}
\newtheorem{definition}[theorem]{Definition}
\newtheorem{proposition}[theorem]{Proposition}
\newtheorem{corollary}[theorem]{Corollary}
\newtheorem{observation}[theorem]{Observation}
\newtheorem{remark}[theorem]{Remark}
\newtheorem{claim}[theorem]{Claim}
\newtheorem{example}[theorem]{Example}
\newtheorem*{example*}{Example}
\title{A Perfectly Truthful Calibration Measure}
 \author{    
 Jason Hartline\\
     Northwestern University\\
     Computer Science\\
     \texttt{hartline@northwestern.edu}
     \and 
     Lunjia Hu\thanks{This work was done in part when Lunjia Hu was a Postdoctoral Fellow at Harvard University, supported by the Simons Foundation Collaboration on the Theory of Algorithmic Fairness and the Harvard Center for Research on Computation and Society (CRCS).} \\
     Northeastern University\\
     Khoury College of Computer Sciences\\
     \texttt{lunjia@alumni.stanford.edu}
     \and 
         Yifan Wu\thanks{The work was done when Yifan Wu was a PhD student at Northwestern University. }\\
     Microsoft Research, New England\\
     \texttt{yifan.wu2357@gmail.com}
     }
\date{}
\begin{document}

\maketitle
\begin{abstract}
   Calibration requires that predictions are conditionally unbiased and, therefore, reliably interpretable as probabilities. A calibration measure quantifies how far a predictor is from perfect calibration. 
 As introduced by \cite{haghtalab2024truthfulness}, a calibration measure is truthful if it is minimized in expectation when a predictor outputs the ground-truth probabilities. 
   Predicting the true probabilities guarantees perfect calibration, but in reality, when calibration is evaluated on a random sample,
   all known calibration measures incentivize predictors to lie in order to appear more calibrated. 
Such lack of truthfulness motivated
\citet{haghtalab2024truthfulness} and \citet{qiao2025truthfulness} to construct \emph{approximately} truthful calibration measures in the sequential prediction setting, but no \emph{perfectly} truthful calibration measure was known to exist even in the more basic batch setting.

    We design a simple, perfectly and strictly truthful, sound and complete calibration measure in the batch setting: averaged two-bin calibration error (ATB). 
   ATB is quadratically related to two existing calibration measures: the smooth calibration error $\smooth$ and the lower distance to calibration $\distcal$.
   The simplicity in our definition of ATB makes it efficient and straightforward to compute, 
   allowing us to give the first linear-time calibration testing algorithm, improving a result of \cite{hutesting}. We also introduce a general recipe for constructing truthful measures based on the variance additivity of independent random variables, which proves the truthfulness of ATB as a special case and allows us to construct other truthful calibration measures such as quantile-binned $\ell_2$-ECE.  
   
\end{abstract}
\thispagestyle{empty}
\newpage
\setcounter{page}{1}


\section{Introduction}

Probabilistic forecasting has become increasingly important in modern AI-assisted decision-making. Unlike deterministic classification, probabilistic forecasts provide uncertainty quantification, allowing assessment of risks. One desired property of probabilistic predictions is \textit{calibration}, which requires predictions to be conditionally unbiased and, therefore, reliably interpretable as probabilities. 
For example, neural networks for tumor diagnosis are trained to output a prediction $r\in [0, 1]$, ideally interpretable as the expectation of a binary state $y\in \{0, 1\}$: the tumor segment being malignant or not. 
The neural network is calibrated if, conditioned on the output $r$ being, say $40\%$, the probability that the tumor is malignant $\Pr[y = 1|r = 40\%]$ is also $40\%$.

 A calibration measure quantifies how far a predictor is from perfect calibration. 
 The Expected Calibration Error ($\ECE$) is a canonical calibration measure proposed by \citet{foster1997calibrated}. Given an empirical distribution of predictions and states, if conditioned on a reported prediction $\report\in [0,1]$, the actual empirical frequency of the state $y = 1$ is $\hat{r} := \Pr[\state = 1 | \report]$, then the absolute bias in prediction is $|\report - \hat{r}|$. $\ECE$ is defined as the expected bias in predictions, $\expect{\report}{|\report - \hat{r}|}$. 

A recent line of work studies the \emph{truthfulness} of calibration measures \citep{haghtalab2024truthfulness, qiao2025truthfulness}. An error measure is truthful if it incentivizes a predictor to output the truth, i.e., the expected error is minimized when the predictor reports the true probabilities. 
However, no known calibration measure is truthful.
Even a miscalibrated predictor can have lower expected error than the truthful predictor when evaluated by known calibration measures. We explain this non-truthfulness in \Cref{sec: intro non truthful of existing}. 
%

\subsection{Non-truthfulness of Known Calibration Measures}
\label{sec: intro non truthful of existing}

For existing non-truthful calibration measures, there exists obvious and uninformative prediction strategies that induce a lower calibration error than truth-telling. Concretely, the ``obvious'' strategy is to ignore all features and always predict the base rate, i.e., output the same constant for every input. In this section, we explain why this obvious strategy has a strictly lower error in expectation, and a lower loss even for every realization of samples. 
Following this observation, it is well-known that calibration is a poor measure of accuracy \citep{degroot1983comparison}.  

\Cref{example: intro nontruthful} illustrates the non-truthfulness of $\ECE$. 
We consider the batch setting: a sample of $T$ individuals whose binary states $\vstate = (y_1,\ldots,y_T)\in \{0,1\}^T$ are independently drawn from the Bernoulli distributions with means $\vpred = (p_1,\ldots,p_T)\in [0,1]^T$ (denoted by $\vstate\sim \vpred$). 
We say $\vpred = (p_1,\ldots,p_T)$ are the ground-truth probabilities.
In the example below, a predictor strictly benefits from reporting the base rate of ground truths. 
\begin{example}[ECE is 
not truthful, c.f.\ \citealp{sidestep}]
\label{example: intro nontruthful}
Suppose the ground-truth probabilities  are $\vpred = (\pred_1, \dots, \pred_T)$ where each $p_t$ is distributed independently and uniformly from $[1/3, 2/3]$. 
    An uninformative predictor that always predicts $r_1 = \cdots = r_T = 0.5$ achieves an expected empirical $\ECE=O\left( \sqrt{1/T}\right)$, the sampling error. However, a truthful predictor who reports $r_t = p_t$ results in a higher empirical $\ECE \ge 1/3$. This is because the predictions $r_1,\ldots, r_T\in [1/3,2/3]$ are almost surely distinct, so the empirical conditional expectation $\hat r_t := \E_{(r,y)\sim \mathrm{Unif}((r_t,y_t)_{t\in [T]})}[y|r = r_t]$ is simply $y_t\in \{0,1\}$, giving $|r_t - \hat r_t| = |r_t - y_t| \ge 1/3$.
    
\end{example}


\Cref{example: intro nontruthful} shows that the obvious uninformative prediction achieves a lower $\ECE$. 
Even worse, there are \emph{miscalibrated} predictors (e.g.\ the predictor that always predicts $0.5 + \epsilon$ for a small $\epsilon > 0$) achieving smaller ECE 
than the \emph{calibrated} truthful predictor.
Thus ECE does not rank predictors correctly based on how calibrated they are.


Predicting the uninformative base rate incurs a lower calibration error for all known calibration measures. 
It happens not just for $\ECE$ in \Cref{example: intro nontruthful} and its variants\footnote{Variants of $\ECE$ include $\ell_\alpha$-$\ECE$, where we replace the absolute bias $|\report - \hat{\report}|$ with $|\report - \hat{\report}|^\alpha$ for an arbitrary $\alpha \ge 1$, as well as binned versions of $\ell_\alpha$-$\ECE$.}, but also for continuous calibration measures\footnote{These are calibration measures that are continuous as a function of the predictions. Note that $\ECE$ and binned ECE are not continuous.} such as the smooth calibration error \citep{smooth}, the distance to calibration \citep{utc} and its variants, etc, 
irrespective of the sample size $T$. Moreover, it happens consistently accross \emph{every} realization of the states $\vstate$, \emph{not just in expectation}. Specifically, for \emph{every} realization of $\vstate \in \{0,1\}^T$  and \emph{every} prediction sequence $\vreport = (\report_1,\ldots,\report_T)\in [0,1]^T$, all these calibration measures $\CAL$ satisfy
\begin{equation}
\label{eq:intro-avg}
\CAL(\vreport, \vstate) \ge \CAL(\bar \vreport, \vstate),
\end{equation}
where $\bar \vreport = (\bar \report,\ldots,\bar\report)$ 
is the constant predictor that always predicts the average $\bar \report := \frac 1T \sum_{t = 1}^T \report_t$. 
This obvious and uninformative strategy always achieves (weakly) lower calibration error. For many realizations of states $\vstate$, the error is strictly lower. 
We formally prove this observation in \Cref{thm:avg}.

\subsection{Our Goal: Perfectly Truthful Calibration Measures}
Measuring and optimizing for calibration \textbf{non-truthfully} only makes the predictions \textbf{less trustworthy}, going in the very opposite direction of the goal of calibration. 
Recall the definition of truthfulness:
a calibration measure $\CAL$ is truthful if for every ground-truth probabilities $\vpred\in [0,1]^T$, the expected empirical calibration error $
\E_{\vstate\sim\vpred}[\CAL(\vreport,\vstate)]$ of predictions $\vreport\in [0,1]^T$ on a random sample $\vstate\sim \vpred$ is minimized when $\vreport = \vpred$.
From a machine learning perspective, 
 a truthful measure helps 
 identify the Bayes optimal predictor \citep{gneiting2011making} because it correctly ranks ground-truth predictions with the lowest expected error. 
From a game-theoretic perspective, a truthful measure incentivizes an optimizing predictor to output their true beliefs, where we view $\vpred$ as the predictor's subjective belief about the probabilities, which might differ from the true probabilities. 
For the tumor risk prediction task, if assessed by a non-truthful calibration measure, a doctor is incentivized to report a prediction different from their true probabilistic assessment of the tumor risk, to make the predictions look more calibrated. Such an incentivized misreport can hardly be trusted. 

We focus on the perfect truthfulness in the batch setting. Previous work \citep{haghtalab2024truthfulness, qiao2025truthfulness} design approximately truthful calibration measures in the sequential prediction setting, where the states $y_1,\ldots,y_T$ are revealed sequentially after each prediction $r_t$ is made. 
We observe that, in the simpler batch setting, some existing measures are approximately truthful, such as the smooth calibration error \citep{smooth, utc} and the calibration measures proposed by \citet{haghtalab2024truthfulness} and \citet{qiao2025truthfulness}. Yet, no known calibration measure is perfectly truthful.

The two minimum requirements of a calibration measure are \emph{completeness} and \emph{soundness}. A calibration measure should 
be able to distinguish calibrated predictors from miscalibrated ones given a sufficiently large sample. 
Completeness requires vanishing error when a predictor is calibrated (\Cref{def:consistent}), and soundness requires non-vanishing error when a predictor is miscalibrated as the sample size increases (\Cref{def:sound}).  \citet{haghtalab2024truthfulness} point out that some error metrics, such as the well-known squared error $\frac{1}{T}\sum_t (\report_t  - \state_t)^2$, are truthful but far from being a complete and sound calibration measure. The squared error of a calibrated predictor 
may not vanish as the sample size $T$ increases.


Our main result shows that truthfulness can be achieved via surprisingly simple constructions in the batch setting, while preserving the completeness and soundness of existing calibration measures. 

\subsection{A Roadmap of Our Contributions}


As our main contribution, we construct a perfectly truthful, complete and sound calibration measure: Averaged Two-Bin Calibration Error ($\atb$). In addition, ATB maintains and even significantly improves other desired properties considered in the literature: 
continuity and consistency \citep{utc}, low sample complexity, and high computational efficiency. 


We propose a general framework for constructing truthful error metrics, where ATB is a special case. 
This general framework allows us to construct other truthful calibration measures while preserving existing properties. For example, we construct the truthful quantile-binned $\ell_2$-ECE with surprisingly simple changes to binned ECE. 


As bonus side products of our simple truthful construction, we give the first linear-time algorithm for the calibration testing problem studied by \cite{hutesting} as well as a simple, faster constant-factor approximation to the smooth calibration error and the Distance to Calibration\footnote{Following \citet{arunachaleswaran2025elementary, qiao-distance}We refer to the computable Lower Distance to Calibration in \citet{utc} as the Distance to Calibration. }. 

\paragraph{Perfectly Truthful Calibration Measure.} 
\Cref{sec:atb} introduces ATB, constructed with two bins and a randomized binning boundary: 
\begin{itemize}
    \item The prediction space of $[0, 1]$ is divided into two bins $[0,q)$ and $[q,1]$, with the bin boundary $q$ chosen uniformly at random from $[0,1]$.
    \item Within each bin, we compute the squared error between the sum of the predictions and the sum of the states. 
    \item After summing up the errors in the two bins and dividing the result by $T^2$, we define $\atb$ to be the expectation over the random choice of the bin boundary $q$.
\end{itemize}

\begin{definition}[Averaged two-bin calibration error]
\label{def: intro atb}
Given predictions $\vreport = (\report_1,\ldots,\report_T)\in [0,1]^T$ and states $\vstate = (\state_1,\ldots,\state_T)\in \{0,1\}^T$, we define
\begin{align*}
\atb(\vreport,\vstate) & := \E_{q\sim \mathrm{Unif}([0,1])}\left[\frac 1{T^2} \left(\left(\sum_{t: \report_t < q} (\report_t - \state_t)\right)^2 + \left(\sum_{t: \report_t \ge q}(\report_t - \state_t)\right)^2\right)\right].
\end{align*}
\end{definition}

We prove the following properties of ATB.
\begin{itemize}
    \item Perfect truthfulness (\Cref{thm: atb truthful}). We additionally show that ATB is \emph{strictly} truthful, meaning that truth-telling is the only minimizer of the expected error, in the ex-ante stage where a predictor is evaluated on a random sample of $T$ i.i.d.\ individuals. 
    \item Lipschitz continuity in predictions. For every two prediction sequences 
$\vreport_1,\vreport_2\in [0,1]^T$,
\[
|\atb(\vreport_1, \vstate) - \atb(\vreport_2, \vstate)| \le \frac 6T \|\vreport_1 - \vreport_2\|_1.
\]


\item Completeness and soundness. The completeness and soundness of ATB follow from the quadratic relationship to 
two known continuous, complete and sound calibration measures: the smooth calibration error $\smooth$ \citep{smooth} and the lower distance to calibration $\distcal$ \citep{utc}. See \Cref{thm: complete sound}. 





\item Sample and computational complexity. 
ATB can be estimated within $\varepsilon$ error using $T = O(\varepsilon^{-2})$ examples in $O(T)$ time.
For exact computation of ATB, \Cref{thm:compute} shows an $O(T\log T)$-time algorithm on $T$ examples.
For approximate computation, \Cref{thm:compute} shows an $O(T + 1/\varepsilon)$-time algorithm for approximating up to additive error $\varepsilon > 0$. For $\varepsilon\approx 1/\sqrt T$, the running time is $O(T + 1/\varepsilon) = O(T)$.

These algorithms are simple and easy to implement, following the simplicity of the definition of ATB. 
For exact computation, our algorithm sorts the predictions in $O(T\log T)$ time, and does the rest of the computation in $O(T)$ time. 
For $\varepsilon$-additive approximation, 
the sorting step can be implemented in $O(T + 1/\varepsilon)$ time by discretizing the predictions before sorting.
For comparison, the best-known algorithms for computing $\smooth$ and $\distcal$ are much more complicated and take $O(T\log ^2 T)$ and $O(T^2 \log T)$ time, respectively, even when $O(1/\sqrt T)$ additive error is allowed \citep{hutesting}.

\end{itemize}

\paragraph{General Framework for Constructing Truthful Calibration Measures.} 
We establish the truthfulness of ATB by showing that it is a member of a general family of truthful error metrics, which we term the Unnormalized Binned Squared Errors (UBSEs) (\Cref{sec:ubse}).


We define UBSEs as binning-based calibration measures where the binning scheme can be randomized and can depend arbitrarily on the predictions $\vreport$. The key to its truthfulness lies in how the biases in each bin are combined to calculate the final error. We compute the squared biases in each bin similarly to the standard $\ell_2$ ECE, but there is a subtle but crucial difference in how these squared biases are weighted and combined. We discuss this family in more detail in \Cref{sec:tech-truthful} and \Cref{sec:ubse}. We prove that each UBSE has the following key property of error decomposition, which implies truthfulness. 

\begin{lemma}[Informal, \Cref{lem: error decomposition}]
\label{lem: intro error decomposition}
\begin{align*}
\underbrace{\E_{\vstate \sim \vpred}[\ubse(\vreport,\vstate)]}_{\text{Empirical $\ubse$}} \quad = \underbrace{\ubse(\vreport,\vpred)}_{\text{$\ubse$ on the true distribution}}  + \underbrace{\frac 1{T^2}\sum_{t = 1}^T p_t(1 - p_t).}_{\text{ Variance of avg.\ of $\vstate$ (invariant of the predictions $\vreport$) }}
\end{align*}
\end{lemma}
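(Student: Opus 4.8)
The plan is to reduce the statement to a bin-by-bin computation and then to the variance-additivity of independent random variables. Recall that a UBSE is defined by choosing a (possibly randomized, prediction-dependent) partition of $[T]$ into bins, computing within each bin the squared bias $\bigl(\sum_{t \in \text{bin}}(\report_t - \state_t)\bigr)^2$ normalized by $1/T^2$, summing over bins, and finally taking the expectation over the randomness of the binning. Since both sides of the claimed identity are linear in this outer expectation over the random binning (the binning depends only on $\vreport$, which is fixed on both sides), it suffices to prove the identity \emph{conditionally on a fixed binning} $\binset = \{\bin_1,\dots,\bin_k\}$, and then average. So the real content is: for a fixed subset $\bin \subseteq [T]$,
\[
\E_{\vstate \sim \vpred}\Bigl[\Bigl(\textstyle\sum_{t\in \bin}(\report_t - \state_t)\Bigr)^2\Bigr] = \Bigl(\textstyle\sum_{t \in \bin}(\report_t - p_t)\Bigr)^2 + \sum_{t \in \bin} p_t(1-p_t).
\]

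First I would write $\sum_{t\in\bin}(\report_t - \state_t) = \sum_{t\in\bin}(\report_t - p_t) + \sum_{t\in\bin}(p_t - \state_t)$, i.e.\ split each term into a deterministic part and a mean-zero random part $Z_t := p_t - \state_t$ with $\E[Z_t] = 0$ and $\var[Z_t] = p_t(1-p_t)$. Expanding the square and taking expectations, the cross term vanishes because $\E[Z_t] = 0$, leaving the deterministic square $\bigl(\sum_{t\in\bin}(\report_t - p_t)\bigr)^2$ plus $\E\bigl[(\sum_{t\in\bin} Z_t)^2\bigr]$. Then, crucially, since $\state_1,\dots,\state_T$ are \emph{independent}, the $Z_t$ are independent, so $\E\bigl[(\sum_{t\in\bin} Z_t)^2\bigr] = \var[\sum_{t\in\bin} Z_t] = \sum_{t\in\bin}\var[Z_t] = \sum_{t\in\bin} p_t(1-p_t)$ — this is exactly the variance-additivity step. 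Dividing by $T^2$ and summing over the bins $\bin_1,\dots,\bin_k$ of the fixed partition, the deterministic squares assemble into $\ubse(\vreport,\vpred)$ evaluated at this binning, and the variance terms, because the bins partition $[T]$, sum to $\frac{1}{T^2}\sum_{t=1}^T p_t(1-p_t)$ regardless of which partition was chosen. Finally, averaging over the random binning: the first term becomes $\ubse(\vreport,\vpred)$, and the second term is already partition-independent, so it is unchanged by the expectation, yielding the claimed decomposition.

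The one point requiring care — and the closest thing to an obstacle — is that the variance term $\frac{1}{T^2}\sum_t p_t(1-p_t)$ must come out \emph{independent of the binning}; this is what makes the whole argument work and is precisely where the requirement that the bins form a \emph{partition} of $[T]$ (so every index is counted exactly once) is used. This is also the structural reason a UBSE must normalize by the global $1/T^2$ rather than by per-bin counts: any bin-size-dependent normalization would make the variance contribution depend on the realized binning and on $\vreport$, breaking truthfulness. Everything else is the routine "bias–variance" expansion of a square of a sum of independent random variables, so I do not anticipate technical difficulty beyond bookkeeping the partition structure.
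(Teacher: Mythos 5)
Your proposal is correct and follows essentially the same route as the paper's proof of \Cref{lem: error decomposition}: condition on a fixed partition (justified because the binning depends only on $\vreport$, not $\vstate$), apply the bias--variance expansion $\E[\Delta_i^2] = \E[\Delta_i]^2 + \mathrm{Var}[\Delta_i]$ within each bin, invoke variance additivity of the independent $y_t$'s, and observe that summing over a partition of $[T]$ makes the variance term partition-independent. Your explicit expansion of the square with the mean-zero variables $Z_t = p_t - y_t$ is just an unpacked form of the same identity the paper cites, and your closing remark about why the global $1/T^2$ normalization (rather than per-bin normalization) is essential matches the paper's discussion.
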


To see the truthfulness, when $\vreport = \vpred$, the middle term is $0$, while the third term is invariant of the predictions. The expected error is thus minimized at $\vreport = \vpred$. 

Our construction of $\atb$ is a special case of $\ubse$, and thus the truthfulness of $\atb$  
follows immediately from \Cref{lem: intro error decomposition}.
We remark that while every UBSE is truthful, it may not have the other desired properties of ATB, such as soundness, continuity, and the relationship to $\smooth$ and $\distcal$.

To illustrate the idea behind our definition  of the UBSEs, we provide a technical overview in \Cref{sec:tech-truthful} and demonstrate, as an example, how to \textbf{make binned ECE truthful} using a quantile-based binning scheme, giving another truthful calibration measure \emph{Quantile $\ell_2$-BinECE}.



\paragraph{Linear-Time Calibration Test via Validity.} We introduce a quantitative framework for evaluating the validity of a calibration error using calibration tests. As the sample size increases, the notions of completeness and soundness require a calibration error to vanish  given a calibrated predictor, and not vanish given a mis-calibrated predictor.
We define the \textit{validity} of a calibration error using its ability to distinguish calibration from mis-calibration, which can be viewed as a quantitative combination of completeness and soundness (see \Cref{def: validity}). Any valid calibration measure implies an algorithm for solving the \textit{calibration testing} problem in \citet{hutesting}. 

%
We show that our $\atb$ is optimally valid for the smooth calibration error and the lower distance to calibration. 
\begin{theorem}[Informal, see \Cref{thm: optimal validity}]
\label{thm:intro-test}
    Given $T$ examples, $\atb$ can distinguish a calibrated predictor from a predictor with $\smooth \ge C/\sqrt T$ for an absolute constant $C > 0$. Moreover, this rate is information-theoretically optimal (up to constant factors).
    This also holds when we replace $\smooth$ with $\distcal$ because the two are constant-factor approximations of each other as shown by \cite{utc}.
\end{theorem}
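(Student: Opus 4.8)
The plan is to prove the two halves of the statement separately: first that $\atb$ is a valid calibration measure certifying miscalibration at resolution $\Theta(1/\sqrt T)$ with respect to $\smooth$ (and, since $\smooth$ and $\distcal$ are constant-factor equivalent by \citet{utc}, also $\distcal$), and then a matching information-theoretic lower bound showing that no test using $T$ samples can certify a better resolution. Throughout I work in the batch model underlying the calibration testing problem of \citet{hutesting}: $\vreport\in[0,1]^T$ is the predictor, $\vpred\in[0,1]^T$ the true probabilities, the ``$T$ samples'' are the states $\vstate\sim\vpred$, and ``calibrated'' means $\distcal(\vreport,\vpred)=0$.

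For the positive half, the starting point is the error decomposition of \Cref{lem: error decomposition} applied to $\atb$ (a UBSE), which gives
\[
\E_{\vstate\sim\vpred}\big[\atb(\vreport,\vstate)\big] \;=\; \atb(\vreport,\vpred) \;+\; \tfrac1{T^2}\textstyle\sum_{t=1}^T \pred_t(1-\pred_t),
\]
where the last term lies in $[0,\tfrac1{4T}]$. If $\vreport$ is calibrated then $\atb(\vreport,\vpred)=0$ (by the quadratic relation of \Cref{thm:intro-relation}), so the expected empirical error is at most $\tfrac1{4T}$. If instead $\smooth(\vreport,\vpred)\ge C/\sqrt T$, then $\atb(\vreport,\vpred)\ge\tfrac29\smooth(\vreport,\vpred)^2\ge\tfrac{2C^2}{9T}$ by the same theorem (this direction of the relation depends only on the residuals $\report_t-\state_t$ and continues to hold with the real-valued $\vpred$ as the second argument), so the expected empirical error is at least $\tfrac{2C^2}{9T}$. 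Hence, for a large enough absolute constant $C$, a threshold $\tau=K_0/T$ with a suitable absolute constant $K_0$ separates the two cases in expectation; the same works with $\distcal$ in place of $\smooth$ via the second relation in \Cref{thm:intro-relation}.

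The delicate step, which I expect to be the main obstacle, is to turn this separation in expectation into a high-probability test: at this scale the random fluctuation of $\atb(\vreport,\vstate)$ is itself of order $1/T$, the same order as the gap. I would control it via Efron--Stein. Since $\atb(\vreport,\vstate)$ is a degree-two function of the independent, $[-1,1]$-bounded residuals $\report_t-\state_t$, resampling one residual changes $\atb$ by at most $\tfrac1{T^2}(2M+1)$ where $M:=\int_0^1\!\big(\big|\sum_{t:\report_t<q}(\report_t-\state_t)\big|+\big|\sum_{t:\report_t\ge q}(\report_t-\state_t)\big|\big)\,dq$, and $\E[M^2]\le 2T^2\,\E_{\vstate\sim\vpred}[\atb(\vreport,\vstate)]$ by Cauchy--Schwarz and the definition of $\atb$; summing the squared effects over the $T$ coordinates yields
\[
\var{\vstate}{\atb(\vreport,\vstate)} \;=\; O\!\left(\frac{\E_{\vstate\sim\vpred}[\atb(\vreport,\vstate)]}{T}+\frac1{T^3}\right).
\]
In the calibrated case the standard deviation is then $O(1/T)$, so Chebyshev gives $\atb(\vreport,\vstate)\le\tau$ with probability $1-O(1/K_0^2)$; in the miscalibrated case the mean exceeds $\tau$ by a factor $\Theta(C^2)$ while the standard deviation is only $O(\sqrt{\E_{\vstate\sim\vpred}[\atb(\vreport,\vstate)]/T})=O(C/T)$, so Chebyshev gives $\atb(\vreport,\vstate)>\tau$ with probability $1-O(1/C^2)$. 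Choosing $K_0$ and then $C$ to be large enough absolute constants (with, say, $C^2\ge 9K_0$) makes both failure probabilities small, which is the validity claim.

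For the lower bound, I would exhibit two batch instances that are statistically indistinguishable from $T$ samples. The YES instance sets $\report_t=\pred_t=1/2$ for all $t$, which is perfectly calibrated. The NO instance sets $\report_t=1/2$ and $\pred_t=1/2+\varepsilon$ for all $t$ with $\varepsilon=c/\sqrt T$; its only calibrated predictor is the constant $1/2+\varepsilon$, so $\smooth(\vreport,\vpred)=\varepsilon$ and $\distcal(\vreport,\vpred)=\Theta(\varepsilon)$, i.e.\ miscalibration $\Theta(1/\sqrt T)$. The observed states are distributed as $\bernoulli(1/2)^{\otimes T}$ versus $\bernoulli(1/2+\varepsilon)^{\otimes T}$, whose total variation distance is at most $\sqrt{\tfrac T2\,\mathrm{KL}\big(\bernoulli(1/2)\,\|\,\bernoulli(1/2+\varepsilon)\big)}=O(\varepsilon\sqrt T)=O(c)$ by Pinsker, which is below $1/3$ for a small enough absolute constant $c$. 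Therefore no function of the $T$ states --- in particular no threshold rule on any calibration measure --- distinguishes a calibrated predictor from one with $\smooth\ge c/\sqrt T$ (equivalently $\distcal\ge c'/\sqrt T$) with success probability above $2/3$, matching the $C/\sqrt T$ guarantee of the positive half up to the constant and establishing optimality of the $1/\sqrt T$ rate.
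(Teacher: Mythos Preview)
Your proposal is correct but takes a different route from the paper on the concentration step. For the calibrated case, the paper does not need Efron--Stein at all: since $\E_{\vstate\sim\vpred}[\atb(\vreport,\vstate)]\le 1/(4T)$ by the error decomposition, Markov's inequality alone gives $\Pr[\atb(J_S)\le 1/T]\ge 3/4$ (\Cref{lm:test-cons}). For the miscalibrated case, the paper routes through $\atbone$ rather than bounding the variance of $\atb$ directly: it invokes the uniform-convergence bound of \Cref{thm:sample} (a Rademacher/VC argument) to get $|\atbone(J_S)-\atbone(J)|\le C'/\sqrt T$ with probability $\ge 3/4$, combines this with $\atbone(J)\ge \tfrac13\distcal(J)\ge C/(3\sqrt T)$ from \Cref{cor:relationship}, and finally converts via $\atb(J_S)\ge\tfrac12\atbone(J_S)^2$ (\Cref{lem: atb 1-atb}) to conclude $\atb(J_S)>1/T$. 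Your Efron--Stein bound $\var{}{\atb}=O(\E[\atb]/T+1/T^3)$ is a genuinely different and self-contained argument that avoids the $\atbone$ detour; it is arguably more direct for this particular theorem, whereas the paper's approach reuses its separately-stated sample-complexity lemma. A minor difference in framing: the paper states and proves the result in the i.i.d.\ sampling model of \Cref{def: validity}, whereas you work in the fixed-batch model with $\vreport,\vpred$ fixed and $\vstate\sim\vpred$; the paper's proof of the calibrated case (\Cref{lm:test-cons}) in fact conditions on the drawn predictions to reduce to exactly your batch setting. Your lower-bound construction is the same as the paper's (\Cref{lm:test-lower}).
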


As mentioned earlier, ATB can be computed in $O(T\log T)$ time. Thus, \Cref{thm:intro-test} implies a faster calibration tester than the previous $O(T\log ^2 T)$-time calibration tester for $\smooth$ and $\distcal$ from \cite{hutesting}. Moreover, the running time can be further improved to $O(T)$ because we do not need to compute ATB exactly and just need to approximate it up to error $O(1/T)$ (see \Cref{sec:test}). This gives the first linear-time algorithm for calibration testing.

\paragraph{Simple and Efficient Constant-factor Approximation of $\smooth$ and $\distcal$.} 
As mentioned above, the completeness and soundness are proven by the quadratic relationship of ATB to $\smooth, \distcal$. This relationship is established via the following non-truthful $\ell_1$ variant of $\atb$, which quadratically approximates $\atb$. We show that this variant gives a \emph{constant-factor} approximation for $\smooth$ and $\distcal$.
\begin{definition}
\label{def: atb1}
We define the $\ell_1$ variant of $\atb$:
    \begin{align*}
        \atbone(\vreport,\vstate) &: = \E_{q\sim \mathrm{Unif}([0,1])}\left[\frac 1T \left(\left|\sum_{t: \report_t < q} (\report_t - \state_t)\right| + \left|\sum_{t: \report_t \ge q}(\report_t - \state_t)\right|\right)\right].\\
    \end{align*}
\end{definition}

\begin{theorem}[Informal, \Cref{cor:relationship}]
\label{thm: intro atb1 constant approx}
    $\atbone$ is a constant-factor approximation to $\smooth$ and $\distcal$:
    \begin{align*}
        \frac 13\, \distcal(\vreport,\vstate) & \le \atbone(\vreport,\vstate) \le 3\,\distcal(\vreport,\vstate),\\
                \frac 23\, \smooth(\vreport,\vstate) & \le \atbone(\vreport,\vstate) \le 6\,\smooth(\vreport,\vstate).
    \end{align*}
\end{theorem}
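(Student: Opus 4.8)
The plan is to establish two self-contained bounds, $\atbone(\vreport,\vstate)\le 3\,\distcal(\vreport,\vstate)$ and $\atbone(\vreport,\vstate)\ge \tfrac23\,\smooth(\vreport,\vstate)$, and then obtain the other two inequalities by plugging in the constant-factor equivalence of $\smooth$ and $\distcal$ from \citet{utc}. Throughout, for a threshold $q\in[0,1]$ I abbreviate $f_{<q}:=\sum_{t:\report_t<q}(\report_t-\state_t)$ and $f_{\ge q}:=\sum_{t:\report_t\ge q}(\report_t-\state_t)$, so that $f_{<q}+f_{\ge q}=S:=\sum_t(\report_t-\state_t)$ for every $q$ and $\atbone(\vreport,\vstate)=\tfrac1T\int_0^1(|f_{<q}|+|f_{\ge q}|)\,dq$; in particular $\tfrac1T|S|=\tfrac1T|f_{<q}+f_{\ge q}|\le\atbone(\vreport,\vstate)$, a fact I use repeatedly.

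For the upper bound I would combine two observations. First, $\atbone$ vanishes on calibrated reports: if $\vreport^*$ is perfectly calibrated w.r.t.\ $\vstate$, then for every $q$ the set $\{t:\report_t^*<q\}$ is a union of level sets of $\vreport^*$, and the bias over each level set is zero, so $f_{<q}=f_{\ge q}=0$ and $\atbone(\vreport^*,\vstate)=0$. Second, $\atbone$ is Lipschitz, $|\atbone(\vreport_1,\vstate)-\atbone(\vreport_2,\vstate)|\le\tfrac3T\|\vreport_1-\vreport_2\|_1$, proved one coordinate at a time: moving $\report_t$ over a segment of length $\delta$ changes $|f_{<q}|+|f_{\ge q}|$ by at most $\delta$ for thresholds $q$ outside the segment (the residual of $t$ stays in one bin and $S$ moves with it) and by at most $2$ for the measure-$\delta$ set of thresholds inside it (where $t$ switches bins, each residual being at most $1$ in absolute value). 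Minimizing $\tfrac3T\|\vreport-\vreport^*\|_1$ over calibrated $\vreport^*$ yields $\atbone(\vreport,\vstate)\le 3\,\distcal(\vreport,\vstate)$, and the $\smooth$ upper bound follows from the known relation $\distcal\le 2\,\smooth$ (\citealp{utc}).

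For the lower bound I would expand a generic $1$-Lipschitz test function $w:[0,1]\to[-1,1]$ through its (a.e.-defined) derivative in the symmetric form $w(\report_t)=\tfrac{w(0)+w(1)}2+\tfrac12\int_0^1 w'(q)\,\sign(\report_t-q)\,dq$, substitute into $\tfrac1T\sum_t w(\report_t)(\report_t-\state_t)$, and swap the finite sum with the integral to reach $\tfrac1T\big(\tfrac{w(0)+w(1)}2\,S+\tfrac12\int_0^1 w'(q)\,(f_{\ge q}-f_{<q})\,dq\big)$. Using $|\tfrac{w(0)+w(1)}2|\le1$, $|w'|\le1$, and $|f_{\ge q}-f_{<q}|\le|f_{<q}|+|f_{\ge q}|$, the first term is at most $\tfrac1T|S|\le\atbone(\vreport,\vstate)$ and the second at most $\tfrac12\atbone(\vreport,\vstate)$, so $\tfrac1T\sum_t w(\report_t)(\report_t-\state_t)\le\tfrac32\atbone(\vreport,\vstate)$; taking the supremum over $w$ gives $\smooth(\vreport,\vstate)\le\tfrac32\atbone(\vreport,\vstate)$, i.e.\ $\atbone\ge\tfrac23\,\smooth$, and then $\atbone\ge\tfrac13\,\distcal$ again via $\distcal\le2\,\smooth$.

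The step I expect to be most delicate is the Lipschitz estimate: one must carefully account for which thresholds $q$ a coordinate move affects, the Lebesgue measure of that set, and the worst-case change there, so that the constant comes out exactly $3/T$ and not larger; the degenerate thresholds $q\in\{0,1\}$ and ties among the $\report_t$ are measure-zero or handled by one-sided limits. The test-function decomposition is routine once one invokes absolute continuity of Lipschitz functions (so the fundamental theorem of calculus applies and $w'$ is bounded by $1$ a.e.) together with Fubini for the finite sum, the set $\{q=\report_t\}$ where $\sign$ is undefined being irrelevant.
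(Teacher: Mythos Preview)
Your strategy mirrors the paper's: prove $\atbone \le 3\,\distcal$ via Lipschitz continuity (plus vanishing on calibrated inputs), prove $\smooth \le \tfrac32\,\atbone$ by decomposing a Lipschitz test function into threshold pieces, then invoke $\tfrac12\,\distcal \le \smooth \le 2\,\distcal$ from \citet{utc}. Your lower-bound argument is correct and is a clean continuous analogue of the paper's discrete computation: the paper sorts the $\report_t$, writes $w(\report_t)=\tfrac12\sum_{s=0}^{T}(w(\report_{s+1})-w(\report_s))\,\sign(t-s)$ with the convention $w(\report_0)=w(\report_{T+1})=0$, bounds the interior terms $s=1,\dots,T-1$ by $\atbone$ using $|w(\report_{s+1})-w(\report_s)|\le \report_{s+1}-\report_s$, and bounds each of the two boundary terms $s\in\{0,T\}$ by $\tfrac1T|S|\le\atbone$. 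Your integral version with $w(\report_t)=\tfrac{w(0)+w(1)}{2}+\tfrac12\int_0^1 w'(q)\,\sign(\report_t-q)\,dq$ is the same decomposition and arguably slicker.

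There is one genuine gap in your upper bound. Minimizing $\tfrac3T\|\vreport-\vreport^*\|_1$ over calibrated \emph{sequences} $\vreport^*$ yields $3\,\Ddistcal(\vreport,\vstate)$, the \emph{upper} distance to calibration, whereas $\distcal(\vreport,\vstate)$ in this paper denotes the \emph{lower} distance, defined via couplings $\Pi$ of $(u,v,y)$ that may split the mass at a single index $t$ across several $u$-values. Since in general $\distcal\le\Ddistcal$ with only a quadratic reverse relation, your sequence-level Lipschitz bound does not by itself give $\atbone\le 3\,\distcal$. The paper closes this by proving continuity directly at the coupling level: for any joint law $\Pi$ of $(v_1,v_2,y)$ it shows $|\atbone(J_1)-\atbone(J_2)|\le 3\,\E_\Pi|v_1-v_2|$ via an intermediate quantity that replaces $v_1$ by $v_2$ inside the residual while keeping the indicator $\ind{v_1<q}$ fixed. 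Your coordinate-wise argument can be lifted to couplings (e.g.\ approximate a rational coupling by replicating each index into $M$ copies, which leaves $\atbone$ unchanged since it depends only on $J_{\vreport,\vstate}$, then apply your bound on the length-$MT$ sequence), but you need to state and carry out that extension rather than identify the sequence infimum with $\distcal$.
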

Based on \Cref{thm: intro atb1 constant approx}, the quadratic relationship between $\atb$ and $\smooth,\distcal$ follows from the relatively easy observation that $\atb$ and $\atbone$ are themselves quadratically related. 

Similar to $\atb$, $\atbone$ can also be easily computed in $O(T\log T)$ time and approximated up to $\varepsilon >0$ additive error in $O(T + 1/\varepsilon)$ time (see \Cref{sec:compute}). Therefore, \Cref{thm: intro atb1 constant approx} implies a faster constant-factor approximation algorithm for $\smooth$ and $\distcal$ than direct computation, for which the best-known algorithms take $O(T\log^2 T)$ and $O(T^2\log T)$ time, respectively \citep{hutesting}.

Previously, \citet{utc} also introduced a binning-based approximation to the distance to calibration, termed the \emph{interval calibration error}.
However, the definition of interval calibration error is more involved than $\atbone$. It requires optimizing the binning scheme (including the number of bins) to minimize the sum of the average bin width and the binned calibration error. Also, the interval calibration error only gives a \emph{quadratic} approximation for $\smooth$ and $\distcal$.
Our \Cref{thm: intro atb1 constant approx} shows, perhaps surprisingly, that using only $2$ bins suffices to give a \emph{constant-factor} approximation for $\smooth$ and $\distcal$ (see \Cref{sec:tech-atbone} for a technical overview).

\subsection{Technical Overview}
\label{sec: intro technical overview}
We give a high-level explanation for the two major technical ingredients we use to establish our results. The first is a general recipe for constructing truthful error metrics leveraging the variance additivity of independent random variables. The second is our analysis establishing the constant-factor approximation between $\atbone$ and the smooth calibration error.

\subsubsection{Truthfulness from Variance Additivity} 
\label{sec:tech-truthful}
We discuss the idea behind our construction of a general family of truthful measures, i.e.,\ Unnormalized Binned Squared Errors (UBSEs). As mentioned earlier, ATB is a member of this family, so its truthfulness follows as a consequence.

Our goal is to measure the calibration error of a sequence of predictions $\vreport = (r_1,\ldots,r_T)\in [0,1]^T$ on the states $\vstate = (y_1,\ldots,y_T)\in \{0,1\}^T$. 
Here, each state $y_t$ is sampled independently from the Bernoulli distribution with mean $p_t\in [0,1]$, where $\vpred = (p_1,\ldots,p_T)\in [0,1]^T$ are the true probabilities (denoted by $\vstate\sim \vpred$). 

Our first observation is that, if we divide the predictions $r_t$ into bins based on the \emph{indices} $t\in [T]$ rather than the \emph{values} $r_t\in [0,1]$, then truthfulness can be easily achieved by the $\ell_2$ version of binned ECE, $\binECEtwo$. Concretely, consider a fixed partition  $\cB = (B_1,\ldots,B_k)$ of the index space $[T]$ into bins: $[T] = B_1\cup \cdots \cup B_k$. The $\binECEtwo$ follows the standard computation of $\ECE$ but replacing the $\ell_1$ error with squared error:
\begin{align}
\binECEtwo_\cB(\vreport,\vstate) &= \sum_{i\in [k]} \underbrace{\frac{|B_i|}{T}}_{\text{weigh by fraction}}\cdot\left(\underbrace{\frac 1{|B_i|}}_{\text{normalize by size}} \underbrace{\sum_{t\in B_i}(r_t - y_t)}_{\text{the bias in $\bin_i$}}\right)^2  \nonumber\\
&=\sum_{i\in [k]} \frac{1}{T|B_i|}\left(\sum_{t\in B_i}(r_t - y_t)\right)^2. \label{eq:tech-1-0}
\end{align}

Assuming the index partition $\cB$ is fixed, the truthfulness of $\binECEtwo$ comes from the truthfulness of squared error: within each bin $B_i$, the expected squared bias over $\vstate\sim \vpred$
\begin{equation}
\label{eq:tech-1-0-1}
\E_{\vstate\sim\vpred}\left[\left(\sum_{t\in B_i}(r_t - y_t)\right)^2\right]
\end{equation}
is minimized if and only if $\sum_{t\in B_i}r_t = \sum_{t\in B_i}p_t$, implying minimized by predicting the truth $r_t = p_t$.

However, $\binECEtwo$ in its standard form is not truthful because it does not bin based on indices $t$, but rather the prediction values $r_t$. Specifically, $\binECEtwo$ combines adjacent predictions into the same bin. It works by first partitioning the prediction space $[0, 1]$ into intervals, with one bin corresponding to each interval. Each bin consists of the predictions $v_t$ that fall within the corresponding interval.  A strategic predictor will try to reduce the expected error by manipulating the partition $\cB$ via making untruthful predictions. For the same \Cref{example: intro nontruthful}, always predicting the same value $r_1 = \cdots = r_T$ puts all the indices in the same bin, resulting in a different index partition $\cB'$ than predicting truthfully. This different partition can significantly reduce the expected $\binECEtwo$ due to the bin-size-based normalization $1/|B_i|$ in \eqref{eq:tech-1-0}.

The example above hints that 
we can achieve truthfulness if the expected error of truthful predictions is \emph{invariant} to the index partition $\cB$. To see this, our analysis above shows that for any index partition $\cB'$ that could potentially be manipulated from a strategic report $\vreport$, predicting the truth achieves smaller or equal expected error on the same index partition $\cB'$: 
\begin{equation}
\label{eq:tech-1-1}
\E_{\vstate\sim \vpred}[\binECEtwo_{\cB'}(\vpred,\vstate)] \le \E_{\vstate\sim \vpred}[\binECEtwo_{\cB'}(\vreport,\vstate)].
\end{equation}
To establish truthfulness, we need to show that the truthful predictions $\vpred$ achieve smaller or equal expected error on the un-manipulated partition $\cB$ from truthful predictions: 
\begin{equation}
\label{eq:tech-1-2}
\E_{\vstate\sim \vpred}[\binECEtwo_{\cB}(\vpred,\vstate)] \le \E_{\vstate\sim \vpred}[\binECEtwo_{\cB'}(\vreport,\vstate)].
\end{equation}
To bridge the gap between what we have \eqref{eq:tech-1-1} and what we need \eqref{eq:tech-1-2}, it suffices if the expected truthful error is invariant of the binned partition for truthful predictions:
\begin{equation}
\label{eq:tech-1-invariance}
\E_{\vstate\sim \vpred}[\binECEtwo_{\cB'}(\vpred,\vstate)] = \E_{\vstate\sim \vpred}[\binECEtwo_{\cB}(\vpred,\vstate)] \quad\text{for any partitions $\cB,\cB'$}.
\end{equation}

We obtain \Cref{eq:tech-1-invariance} with our second key observation: we can modify the $\binECEtwo$ in \eqref{eq:tech-1-0} to achieve partition-invariance by removing the bin-size-based normalization $1/|\bin_i|$. To see this, let us compute the expected squared bias in each bin $B_i$ for truthful predictions ($r_t = p_t$):
\begin{equation}
\label{eq:tech-1-variance}
\E_{\vstate\sim\vpred}\left[\left(\sum_{t\in B_i}(p_t - y_t)\right)^2\right] = \var{\vstate\sim\vpred}{\sum_{t\in B_i}y_t} = \sum_{t\in B_i}\var{\vstate\sim\vpred}{y_t} = \sum_{t\in B_i} p_t(1 - p_t),
\end{equation}
Here we crucially use the variance additivity of independent random variables: since the $y_t$'s are independent, the variance of their sum is equal to the sum of their variances. If we directly add up \eqref{eq:tech-1-variance} over the bins $i = 1,\ldots,k$ without any bin-size-based normalization, we get
\[
\E_{\vstate\sim\vpred}\left[\sum_{i = 1}^k\left(\sum_{t\in B_i}(p_t - y_t)\right)^2\right] = \sum_{t\in [T]} p_t(1 - p_t),
\]
which is indeed invariant to the partition $\cB$, satisfying \eqref{eq:tech-1-invariance}. Therefore, our analysis shows that the following unnormalized calibration error is truthful, where now the partition $\cB = (B_1,\ldots,B_k)$ (including the choice of $k$) can (arbitrarily!) depend on the predictions $r_1,\ldots,r_T$:
\[
\CAL(\vreport,\vstate):= \frac k{T^2}\sum_{i = 1}^k\left(\sum_{t\in B_i}(r_t - y_t)\right)^2.
\]
Moreover, one can show that any calibrated predictions (not just truthful predictions) achieve the same expected error $\frac k{T^2}\sum_{t\in [T]} p_t(1 - p_t) = O\left(\frac{k}{T}\right)$ (see \Cref{lem: error decomposition}), which vanishes as $T\to \infty$, implying the completeness of the error (\Cref{def:consistent}) in addition to truthfulness. Note that the global normalization by $k/T^2$ ensures completeness while not affecting the truthfulness property, unlike the per-bin normalization in \eqref{eq:tech-1-0} depending on individual bin sizes $|B_i|$.

The final observation that completes our construction of UBSEs is that, by linearity of expectation, we can even allow the partition $\cB$ to be randomized and use the expected error over the random partition $\cB$, without breaking the truthfulness analysis above.
This allows us to construct a general family of truthful calibration errors, each using a different, possibility randomized, partition $\cB$ that can depend on the predictions $r_1,\ldots,r_T$.
We term these calibration errors \emph{Unnormalized Binned Squared Errors} (UBSEs) and present the formal definition and analysis in \Cref{sec:ubse}. We note that this family is not necessarily sound. The soundness depends on the binning strategy. Below, we present a sound and complete calibration error. 

\paragraph{Example: Quantile $\binECEtwo$ is truthful.} As an example of $\ubse$, simple modifications make $\binECEtwo$ truthful by binning predictions according to quantiles. 
With $k$ bins, the following $\ubse$ is truthful.
\begin{itemize}
    \item Sort the samples by reported predictions with $\report_1\leq \dots\leq \report_T$. Break ties uniformly at random. 
    \item Divide predictions into $k$ bins, with $\frac{T}{k}$ predictions in each bin.
    \item Calculate $\ubse$. 
\end{itemize}
Binning according to quantiles ensures that each bin contains the same number of predictions and thus, the normalization factors based on bin sizes $|B_i|$ in \eqref{eq:tech-1-0} no longer break truthfulness.

\subsubsection{Two-Bin Approximation of the Smooth Calibration Error}
\label{sec:tech-atbone}

Our UBSE framework is flexible with regard to how the bins should be chosen (including how many bins should be chosen). However, it is not obvious to find an appropriate binning scheme and show that the corresponding UBSE is polynomially related to existing calibration error metrics such as $\smooth$ and $\distcal$.

Our construction of $\atb$ is quadratically related to $\smooth$ and $\distcal$. As mentioned earlier, we prove this result by showing that $\atbone$ (\Cref{def: atb1}) gives a constant-factor approximation for $\smooth$ and $\distcal$.
Here we explain the intuition behind this analysis.

Our analysis is divided into the following two results, showing the upper and lower bounds on $\atbone$ separately:
\begin{align}
\text{\Cref{lm:atb-distcal}:  }\quad &
\atbone(\vreport,\vstate)  \le 3\, \distcal(\vreport,\vstate) \label{eq:tech-2-1}\\
\text{\Cref{thm:smooth-atb}: }\quad & \smooth(\vreport,\vstate)  \le \frac 32\, \atbone(\vreport,\vstate) \label{eq:tech-2-2}.
\end{align}
The desired constant-factor approximation (\Cref{thm: intro atb1 constant approx}) then follows from the previous result that $\smooth$ and $\distcal$ are themselves constant-factor approximations of each other (\Cref{prop:dist-smooth}) \citep{utc}.

While neither inequality is straightforward to prove, the relatively more technically involved and, perhaps, more surprising direction is the latter inequality \eqref{eq:tech-2-2} showing that $\smooth$ can be upper-bounded by $\atbone$ up to a constant factor.
Indeed, the intuition behind the previous notion of \emph{interval calibration error} $\intc$ \citep{utc} is that having too few bins tends to underestimate $\smooth$, and if the calibration error is much smaller than the average bin width, we should increase the number of bins to faithfully capture $\smooth$.\footnote{Consequently, the number of bins used to define $\intc(\vreport,\vstate)$ depends on both $\vreport$ and $\vstate$. In UBSE, the binning scheme can only depend on $\vreport$ in order for our truthfulness analysis to hold.}
The reasoning is that having fewer bins makes more predictions fall into the same bin, among which the positive and negative biases $r_t - y_t$ cancel out, thus more likely to cause underestimation.
For example, having only one bin gives the following UBSE:
\[
\CAL(\vreport,\vstate) = \left(\frac 1T\sum_{t = 1}^T(r_t - y_t)\right)^2,
\]
which clearly underestimates $\smooth$ (it can be zero even when $\vreport$ is mis-calibrated, in which case $\smooth(\vreport,\vstate)$ is always positive).
Therefore, based on this previous intuition, it is somewhat surprising that having just two bins  suffices to establish \eqref{eq:tech-2-2}.

Proving \eqref{eq:tech-2-2} is equivalent to showing that for any $1$-Lipschitz weight function $w:[0,1]\to [-1,1]$,
\begin{equation}
\label{eq:threshold-goal}
\frac{1}{T}\sum_{t\in [T]}w(\report_t)\cdot(\report_t - \state_t) \le \frac 32 \atbone(\vreport,\vstate).
\end{equation}
This equivalence follows from the definition of $\smooth$: it is the supremum of the left-hand side over all $1$-Lipschitz $w:[0,1]\to [-1,1]$ (\Cref{def:smooth}).

\begin{figure}
    \centering
    \includegraphics[scale=0.5]{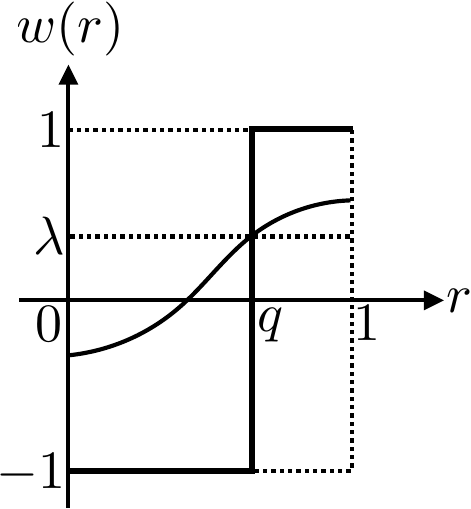}
    \caption{Writing $w$ as a convex combination of threshold functions.}
    \label{fig:threshold}
\end{figure}

To illustrate our proof idea, let us first assume that the weight function $w$ is not only Lipschitz, but also monotonically increasing and differentiable (represented by the curve in \Cref{fig:threshold}). The key observation is that we can write $w$ as a convex combination of threshold functions as follows.
Take a random threshold $\lambda$ uniformly distributed from $[-1,1]$ and consider the threshold function $w_\lambda(r) := \sign(w(r) - \lambda)$ (represented by the bold step function in \Cref{fig:threshold}). That is, $w_\lambda(r) = 1$ if $w(r) \ge \lambda$, and $w_\lambda(r) = -1$ if $w(r) < \lambda$. The following key identity expresses $w$ as a convex combination of the threshold functions $w_\lambda$:
\begin{equation}
\label{eq:threshold-0}
w(r) = \E_{\lambda\sim \mathrm{Unif}([-1,1])}[w_\lambda(r)]\quad \text{for every $r\in [0,1]$}.
\end{equation}
Now for a fixed threshold $\lambda\in [-1,1]$, let $q:=w^{(-1)}(\lambda)\in [0,1]$ be the corresponding threshold on the $r$-axis, where $w^{(-1)}$ is the inverse of $w$ (see \Cref{fig:threshold}). In the boundary cases when $\lambda > w(1)$, we choose $q = 1$, and similarly, when $\lambda < w(0)$ we choose $q = 0$. This ensures\footnote{One tiny caveat which we ignore here is that when $\lambda > w(1)$ and thus $q = 1$, this identity does not hold at one point: $r = 1$.}
\begin{equation}
\label{eq:threshold-0-1}
w_\lambda(r) = \sign(r - q) \quad \text{for every $r\in [0,1]$}.
\end{equation}

Let $Q$ be the distribution of the resulting $q$ from~$\lambda\sim \mathrm{Unif}([-1,1])$. By \eqref{eq:threshold-0} and \eqref{eq:threshold-0-1}, we can rewrite the left-hand side of \eqref{eq:threshold-goal} as
\begin{equation}
\label{eq:threshold-1}
\frac{1}{T}\sum_{t\in [T]}w(\report_t)\cdot(\report_t - \state_t) = \E_{q\sim Q}\left[\frac{1}{T}\sum_{t\in [T]}\sign(r_t - q)\cdot(\report_t - \state_t)\right].
\end{equation}
For each fixed choice of $q$, it is straightforward to show that the quantity inside the expectation in \eqref{eq:threshold-1} is upper-bounded by $\atbone$ at the same fixed bin threshold $q$ (\Cref{def: atb1}). However, the random variable $q$ is distributed differently in the two cases. It is drawn from the distribution $Q$ in \eqref{eq:threshold-1}, whereas it is uniformly distributed over $[0,1]$ in the definition of $\atbone$.

What remains is to relate the two distributions: $Q$ and $\mathrm{Unif}([0,1])$. Recall that $q\sim Q$ is obtained as $q = w^{(-1)}(\lambda)$ for uniformly distributed $\lambda\in [-1,1]$. It follows that the probability density function (PDF) of $q\sim Q$ is exactly the PDF of $\lambda$ (which is $1/2$ everywhere in $[-1,1]$) times the derivative $\nabla w(q)$, except at the boundaries $q = 0,1$. Since $w$ is $1$-Lipschitz, we have $\nabla w(q) \le 1$, and thus the PDF of $q\sim Q$ is at most $1/2$ everywhere in the open interval $(0,1)$. This is sufficient to bound the expectation over $q\sim Q$ in \eqref{eq:threshold-1} by the expectation over $\mathrm{Unif}([0,1])$ in the definition of $\atbone$ (\Cref{def: atbone}). The boundary cases of $q = 0,1$ need to be handled separately, but that turns out to be relatively straightforward.

To fully prove \eqref{eq:tech-2-2}, we need to remove the monotonicity and differentiability assumptions on $w$, which is achieved by our formal proof in \Cref{sec:atbone}. Roughly speaking, without monotonicity, the convex combination of the threshold functions that expresses $w$ might have negative coefficients (so it is a linear combination rather than a convex combination), but the absolute values of the coefficients can still be controlled using the Lipschitzness of $w$. The differentiability assumption can be removed by focusing on the finite set $\{r_1,\ldots,r_T\}$ rather than the full domain $[0,1]$ of $w$.

\subsection{Truthfulness and Monotonicity}
\label{sec:monotonicity}
In this subsection, we include an informal discussion about why it may appear challenging to construct a perfectly truthful calibration measure. This may give a partial explanation for why truthful calibration measures have not appeared before our work, despite the variety of calibration measures introduced in the literature.

At first glance, the existence of a perfectly truthful calibration measure may appear counterintuitive, which arises from the (seemingly) conflicting monotonicity of a truthful error metric and a sound and complete calibration measure. Previous theories on truthful error metrics (a.k.a.\ proper losses) show that truthfulness is closely tied to convexity and monotonicity. In contrast, the set of calibrated predictors is not convex, and a complete and sound calibration measure should not monotonically decrease as we move the predictions closer to the truth.


More concretely, \citet{lambert2011elicitation} characterizes a truthful error metric to be monotonically decreasing when a prediction moves closer to the truth. Formally, a truthful error of a report must be higher than any convex combination of the report with the state, shown in \Cref{fig:intro order sensitive}.
Calibration, however, specifies a conflicting monotonicity where all calibrated predictors are ranked lowest. As an example, suppose the $T$ realized states are $(0, 1, 0, 1, \dots)$ alternating between $0$ and $1$, with an empirical frequency of $50\%$. An uninformative predictor who always predicts $50\%$ should achieve a vanishing calibration error as well as a perfect predictor whose outputs deterministically match the states. Now consider the uniform interpolation between $50\%$ and the ground truth, an alternating prediction sequence of $(25\%, 75\%, 25\%, \dots)$. Intuitively, this interpolation seems to move ``closer'' to the perfect predictor. By monotonicity, the error of the interpolation should be upper-bounded by the vanishing error of reporting $50\%$. Yet this interpolation is very miscalibrated, and soundness requires it to receive a high calibration error. 

\begin{figure}
    \centering
    \begin{tikzpicture}[scale=0.3]

   \draw[thick] (0,0) circle (7.1);


  
    


  \draw[dashed, thick] (-5,5) -- (0,0);

    \draw[thick, ->] (-1,4) -- (2,1) node  [right]{\small increasing loss};

  \filldraw[black] (-5,5) circle (0.3) node [above left]{\small{realized state}};
  \draw[black, thick, fill=white] (-2.5,2.5) circle (0.3);
  \filldraw[black] (0,0) circle (0.3) node [below]{\small report};

\end{tikzpicture}
    \caption{The order sensitivity of a truthful error metric. The large circle is an abstraction of the probabilistic space, with a realized state on a corner of the space. The reported prediction lies in the interior of the space. Fixing the realized state, the truthful error, as a function of the prediction, is increasing along the convex combination from the realized state to the reported prediction. For one binary state prediction, fixing the realized state, a truthful error is monotone in the distance between the reported prediction and the state. 
    }
    \label{fig:intro order sensitive}
\end{figure}
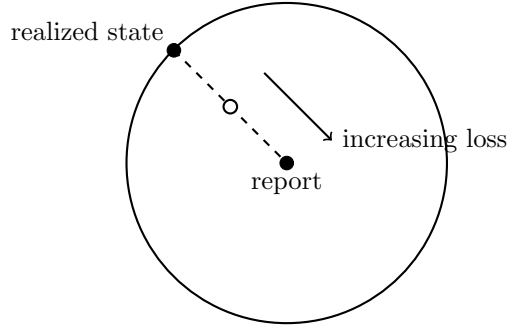

Perhaps due to the reasons above, previous \citep{haghtalab2024truthfulness} and concurrent \citep{qiao2025truthfulness} work focuses on achieving the weaker \textit{approximate} truthfulness in the more challenging online setting. 
We show that there exists a \textit{perfectly} truthful calibration error in the offline setting. The reason behind the counterintuitive possibility of perfect truthfulness is that when the states are drawn independently across individuals (i.e., a product distribution), the monotonicity of a truthful calibration measure is not violated, as the interpolation of two non-trivial product distributions is no longer a product distribution. Thus, the intuition of ``moving closer'' in the example above does not hold rigorously true. 
Our truthful construction based on variance additivity indeed crucially uses this independence / product distribution structure. 


\subsection{Related Work}

\paragraph{Truthful Calibration Errors.}
Previous work \citep{haghtalab2024truthfulness, qiao2025truthfulness} on approximate truthful calibration errors are closest to our paper. They design multiplicatively truthful calibration error in the sequential prediction problem. In the sequential prediction setting, a sequence of $T$ potentially correlated states is drawn from a distribution. At each period, the predictor predicts, and one state is revealed. Our work studies a different batch setting where all $T$ states are independently drawn and revealed simultaneously after all predictions. An error metric is approximately truthful if predicting the true conditional probability of the next state is a constant approximation of the optimal strategy. \citet{haghtalab2024truthfulness} shows that subsampled smooth calibration error is multiplicatively truthful for the sequential prediction setting, implying the smooth calibration error is multiplicatively truthful for the batch setting. \citet{qiao2025truthfulness} shows that, in the sequential setting, there does not exist a perfectly truthful calibration error that upperbounds the worst-case external regret for decision-makers.  The impossibility in sequential setting does not apply to our problem. It also remains open whether there exists a perfectly truthful calibration error metric for the sequential setting while satisfying other completeness and soundness properties.

\paragraph{Calibration Measures.} \citet{foster1997calibrated} first proposes the Expected Calibration Error (ECE). The binned ECE serves as a widely-used empirical proxy of ECE \citep{guo2017calibration, minderer2021revisiting}. \citet{kleinberg2023u} observes that, if predictions are used for downstream decision-making, $\ECE$ upperbounds the swap regret of any downstream decision-maker. Following the decision-making purpose of calibration, \citet{hu2024predict} proposes Calibration Decision Loss (CDL), the worst-case swap regret of any normalized downstream decision task, and shows CDL is quantitatively different from ECE. \citet{proper-cal} introduce the notion of proper calibration as a key ingredient for designing improved algorithms for omniprediction \citep{omni,loss-oi}.

\citet{utc} introduced the distance to calibration. In their framework, a calibration error is consistent if it is polynomially related to the distance to calibration.
They showed that the smooth calibration error \citep{smooth} and the Laplace kernel calibration error \citep{kernel} are both consistent, and introduced a binning-based consistent calibration error: the interval calibration error.


\paragraph{Proper Scoring Rules (a.k.a. truthful losses).} Initiated by \citet{Mcc-56, sav-71}, extensive work focused on the characterization of proper scoring rules, the class of truthful loss functions. \citet{lambert2011elicitation} characterizes elicitable statistics of a distribution, for example, the mean of a distribution, not the variance of a distribution. \citet{winkler1996scoring} provides proper scoring rules for the confidence interval, and \citet{frongillo2014general} provides a characterization of proper scoring rules for eliciting linear properties. \citet{li2022optimization} gives computational results of proper scoring rules.  

\subsection{Paper Organization}
The rest of the paper is organized as follows.
\Cref{sec: prelim} establishes the basic setup, including the definitions of existing calibration errors (\Cref{sec: prelim existing error}), completeness and soundness (\Cref{sec: prelim c and s}), the validity of calibration errors via calibration tests (\Cref{sec: prelim calibration test}), and the truthfulness of calibration errors (\Cref{sec: prelim truthful}). In \Cref{sec:ubse}, we introduce the Unnormalized Binned Squared Errors (UBSEs), a general family of truthful binning-based error metrics. In \Cref{sec:atb}, we introduce our proposed calibration error, the Averaged Two-Bin calibration error ($\atb$)  as a special case of UBSE and prove its truthfulness, continuity, sample efficiency, and computationally efficiency. In \Cref{sec:atbone}, we prove the quadratic relationship between $\atb$ and the existing calibration errors $\smooth,\distcal$ by showing that $\atbone$ is a constant-factor approximation of $\smooth$ and $\distcal$.
In \Cref{sec:test}, we show that $\atb$ is optimally valid for $\smooth$ and $\distcal$, implying a linear-time calibration tester for $\smooth$ and $\distcal$.  

\section{Preliminaries}
\label{sec: prelim}
Throughout the paper, we use $D$ to denote a joint distribution of $(x,y)$ pairs, where $x\in X$ represents an individual in a domain $X$, and $y\in \{0,1\}$ is the corresponding state (a.k.a.\ outcome or label). A predictor $r:X\to [0,1]$ reports a prediction $r(x)\in [0,1]$ for each individual $x\in X$.


We present useful definitions and preliminary theorems for our paper. \Cref{sec: prelim existing error} introduces existing calibration measures. \Cref{sec: prelim c and s} defines the completeness and soundness of a calibration measure. \Cref{sec: prelim truthful} formalizes truthfulness of an error measure.  \Cref{sec: prelim calibration test} introduces calibration test, preparing for the result on linear-time calibration tester.

\subsection{Calibration}
\label{sec: prelim existing error}



We present the formal definitions of a few important calibration error metrics in the literature. We start with the definition of calibration:

\begin{definition}[Calibration]
\label[definition]{def:calibration}
    A predictor $\report:X\to [0,1]$ is calibrated on an underlying distribution $D$ of $(x,y)\in X\times \{0,1\}$ if $\E_{D}[y|r(x)] = r(x)$ holds almost surely. 
\end{definition}

An important property of the definition of calibration is that it only depends on the distribution of the prediction-state pair $(r(x),y)\in [0,1]\times \{0,1\}$. That is, we can determine whether a predictor $r$ is calibrated on a distribution $D$ just based on the distribution of $(r(x),y)$, without having to know the full joint distribution of $(x,r(x),y)$. Thus, using a random variable $v$ to represent the prediction value $r(x)$, we can define calibration simply given a distribution $J$ of $(v,y)\in [0,1]\times \{0,1\}$:
\begin{definition}[Calibration of prediction-state distributions]
\label[definition]{def:calibration-2}
    We say a distribution $J$ of $(v,y)\in [0,1]\times \{0,1\}$ is calibrated if $\E_J[y|v] = v$ holds almost surely.
\end{definition}

For a distribution $D$ of $(x,y)\in X\times \{0,1\}$ and a predictor $r:X\to [0,1]$, we use $J_{D,r}$ to denote the joint distribution of $(r(x),y)$. With that, $r$ is calibrated on $D$ if and only if $J_{D,r}$ is calibrated as in \Cref{def:calibration-2}.

A calibration measure $\CAL_D(r)\in \R_{\ge 0}$ evaluates the deviation of a predictor $r$ from being perfectly calibrated on a distribution $D$.
Naturally, we define a calibration measure $\CAL(J)$ first for general prediction-state distributions $J$ of $(v,y)\in [0,1]\times \{0,1\}$, and then define 
\[
\CAL_D(r) : = \CAL(J_{D,r}).
\]

%
The most well-known calibration measure is the \emph{expected calibration error (ECE)}:
\begin{definition}[Expected Calibration Error (ECE),  \citealt{foster1997calibrated}]
\label[definition]{def:ece}
    Let $J$ be a distribution of $(v,y)\in [0,1]\times \{0,1\}$, and let random variable $\widehat v:= \E_J[y|v]$ be the conditional expectation of the state $y$ given the prediction value $v$. The expected calibration error (ECE) is defined as
    \[
    \ECE(J):= \E|v - \widehat v|.
    \]
    Correspondingly, for a distribution $D$ of $(x,y)\sim X\times \{0,1\}$ and a predictor $r:X\to [0,1]$, defining $\hat r(x):= \E_D[y|r(x)]$, we have
    \[
    \ECE_D(r):= \ECE(J_{D,r}) = \E_{D}|r(x) - \widehat r(x)|.
    \]
More generally, for every $\alpha \ge 1$, we define $\ell_\alpha$-ECE as follows:
\[
    \ell_\alpha\textup-\ECE(J):= \E[|v - \widehat v|^\alpha], \quad \ell_\alpha\textup-\ECE_D(r):= \E_D[|r(x) - \hat r(x)|^\alpha].
\]
\end{definition}
A downside of the ECE is its discontinuity: slight changes in the predictions $r(x)$ can cause significant changes to the ECE value. 
This motivated \cite{utc} to introduce a continuous calibration error metric, termed the \emph{distance to calibration}. It measures the earthmover distance from the prediction-state distribution $(v,y)$ to a calibrated distribution $(u,y)$.
\begin{definition}[(Lower) Distance to Calibration, \citealt{utc}]
\label[definition]{def:distcal}
Let $J$ be a distribution of $(v,y)\in [0,1]\times\{0,1\}$. Consider a joint distribution (i.e.\ coupling) $\Pi$ of $(u,v,y)\in [0,1]\times [0,1]\times \{0,1\}$, where $(v,y)$ is distributed according to $J$, and the distribution of $(u,y)$ is calibrated as in \Cref{def:calibration-2}. The (lower) distance to calibration is defined as the following infimum over all such couplings $\Pi$:
\[
\distcal(J):= \inf_{\Pi}\E_\Pi|u - v|.
\]
Correspondingly, given a distribution $D$ of $(x,y)\sim X\times \{0,1\}$ and a predictor $r:X\to [0,1]$, we define $\distcal_D(r):= \distcal(J_{D,r})$.
\end{definition}

One might imagine a different definition of the distance to calibration as the minimum $L_1$ distance $\E_D|r(x) - r'(x)|$ from the given predictor $r$ to a calibrated predictor $r'$. 
Indeed, this notion (denoted $\Ddistcal$) is the first definition of the distance to calibration introduced by \cite{utc}.
However, as shown by \cite{utc}, this definition is different from the $\distcal$ in \Cref{def:distcal} and has the disadvantage of depending on the full joint distribution of $(x,r(x),y)$, not just the prediction-state distribution of $(r(x),y)$. To address this disadvantage, \cite{utc} introduced the $\distcal$ in \Cref{def:distcal} and termed it the \emph{lower} distance to calibration. They also showed that the two definitions are quadratically related:
\[
\frac 1{16}\,\Ddistcal_D(r)^2 \le \distcal_D(r) \le \Ddistcal_D(r).
\]
We will focus on the lower distance to calibration in \Cref{def:distcal} throughout the paper and will often drop the word ``lower'' for brevity.


Another important continuous calibration measure is the \emph{smooth calibration error} introduced by \citet{smooth} (originally termed \emph{weak calibration}). As shown by \citet{utc}, the smooth calibration error $\smooth$ gives a constant factor approximation to $\distcal$ (see \Cref{prop:dist-smooth} below).

\begin{definition}[Smooth Calibration Error \citep{smooth}]
\label[definition]{def:smooth}
    Let $W_1$ be the family of $1$-Lipschitz functions $w:[0,1]\to [-1,1]$. For any distribution $J$ of $(v,y)\in [0,1]\times\{0,1\}$, the smooth calibration error is defined as
    \begin{equation}
    \label{eq:smooth}
    \smooth(J):= \sup_{w\in W_1}\E_J[(v - y)w(v)].
    \end{equation}
    Correspondingly, for a distribution $D$ of $(x,y)\sim X\times \{0,1\}$ and a predictor $r:X\to [0,1]$, we have
    \[
    \smooth_D(r):= \smooth(J_{D,r}) = \sup_{w\in W_1}\E_J[(r(x) - y)w(r(x))].
    \]
\end{definition}

Without the Lipschitzness constraint on $w$, the smooth calibration error would become the ECE (\Cref{def:ece}), where the supremum in \eqref{eq:smooth} is achieved by
\begin{align*}
    w(v) = \begin{cases}
     1,    &  \text{if }\hat v > v;\\
      -1,   & \text{otherwise.}
    \end{cases}
\end{align*}

The following proposition shows that $\distcal$ and $\smooth$ are constant factor approximations of each other:
\begin{proposition}[\citealt{utc}]
\label[proposition]{prop:dist-smooth}
    For any distribution $J$ of $(v,y)\in [0,1]\times \{0,1\}$,
    \begin{equation*}
       \frac{1}{2}\,\distcal(J) \leq \smooth(J) \leq 2\,\distcal(J).
    \end{equation*}
\end{proposition}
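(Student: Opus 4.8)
The plan is to prove the two inequalities separately. Throughout, write $\beta := \Pr_J[y=1]$, let $\mu$ be the marginal law of $v$ under $J$, and let $\mu_0,\mu_1$ be the sub-measures of $J$ on $\{y=0\}$ and $\{y=1\}$, so that $\mu = \mu_0 + \mu_1$ and $\E_J[(v-y)w(v)] = \int v\,w(v)\,d\mu_0 + \int (v-1)\,w(v)\,d\mu_1$ for any $w$. The direction $\smooth(J)\le 2\,\distcal(J)$ is the easy one: I would fix any $w\in W_1$ and any coupling $\Pi$ of $(u,v,y)$ with $(v,y)\sim J$ and $(u,y)$ calibrated, and write $v-y = (v-u)+(u-y)$ together with $w(v) = w(u) + (w(v)-w(u))$. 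Expanding, $\E_J[(v-y)w(v)]$ splits into $\E_\Pi[(v-u)w(v)]$, which is $\le \E_\Pi|v-u|$ since $|w|\le 1$; $\E_\Pi[(u-y)w(u)]$, which is $0$ since $(u,y)$ is calibrated, i.e.\ $\E_\Pi[u-y\mid u]=0$; and $\E_\Pi[(u-y)(w(v)-w(u))]$, which is $\le \E_\Pi|v-u|$ since $|u-y|\le 1$ and $w$ is $1$-Lipschitz. Hence $\E_J[(v-y)w(v)] \le 2\,\E_\Pi|v-u|$ for every $w$ and every $\Pi$; taking $\sup_w$ then $\inf_\Pi$ gives $\smooth(J)\le 2\,\distcal(J)$.

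The direction $\distcal(J)\le 2\,\smooth(J)$ is the hard one, and I would attack it by LP duality. A calibrated law of $(u,y)$ with the same $y$-marginal as $J$ is exactly one whose $u$-marginal is a probability measure $\rho$ on $[0,1]$ with $\int u\,d\rho = \beta$ (and then $\Pr[y{=}1\mid u]=u$); since the coupling $\Pi$ factors through its single $y$-coordinate, its cost decomposes as a sum of two one-dimensional transport costs, giving $\distcal(J) = \inf_{\rho:\,\int u\,d\rho = \beta}\big[\,W_1(\mu_0,(1-u)\rho) + W_1(\mu_1,u\rho)\,\big]$. I would dualize each $W_1$ by Kantorovich--Rubinstein, introducing $1$-Lipschitz potentials $f_0,f_1$; as $\{\rho:\int u\,d\rho=\beta\}$ is convex and weak-$*$ compact and the objective is bilinear, Sion's minimax theorem swaps $\inf_\rho$ with $\sup_{f_0,f_1}$, and the inner infimum over $\rho$ becomes a barycenter-constrained linear program whose value is $\mathrm{conc}(\phi)(\beta)$ — the least concave majorant on $[0,1]$ of $\phi(u):=(1-u)f_0(u)+u f_1(u)$, evaluated at $\beta$. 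This yields the dual form
\[
\distcal(J) = \sup_{f_0,f_1\ \text{$1$-Lipschitz}}\Big[\,\int f_0\,d\mu_0 + \int f_1\,d\mu_1 - \mathrm{conc}(\phi)(\beta)\,\Big],
\]
where, by translation invariance, we may normalize $f_0(0)=f_1(0)=0$, so $|f_0|,|f_1|\le 1$. Given near-optimal $f_0,f_1$, I would read off the witness $w:=\tfrac12(f_0-f_1)\in W_1$. Using the identities $v(f_0(v)-f_1(v)) = f_0(v)-\phi(v)$ and $(v-1)(f_0(v)-f_1(v)) = f_1(v)-\phi(v)$ (immediate from the definition of $\phi$), one obtains the clean identity $\E_J[(v-y)w(v)] = \tfrac12\big(\int f_0\,d\mu_0 + \int f_1\,d\mu_1 - \int \phi\,d\mu\big)$. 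Since $\mu$ is a probability measure with barycenter $\E_J[v]$, Jensen gives $\int\phi\,d\mu \le \mathrm{conc}(\phi)(\E_J[v])$, and $|\E_J[v]-\beta| = |\E_J[v-y]| \le \smooth(J)$ (take $w\equiv 1$) together with Lipschitzness of $\mathrm{conc}(\phi)$ bridges $\mathrm{conc}(\phi)(\E_J[v])$ and $\mathrm{conc}(\phi)(\beta)$; combining this with the dual form and with $\smooth(J)\ge \E_J[(v-y)w(v)]$ rearranges to $\distcal(J)\le O(\smooth(J))$.

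Combining the two directions gives $\tfrac12\distcal(J)\le\smooth(J)\le 2\,\distcal(J)$. I expect the second direction to be the main obstacle: the minimax/duality bookkeeping is routine, but recovering the sharp constant $2$ rather than a larger absolute constant — the crude concave-envelope estimate sketched above only gives some $O(1)$ — requires a more careful tracking of these inequalities; this is the technical content of \citet{utc}, which we invoke.
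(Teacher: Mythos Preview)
The paper does not prove this proposition; it is quoted from \citet{utc} and used as a black box, so there is no in-paper argument to compare against.

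Your proof of $\smooth(J)\le 2\,\distcal(J)$ is correct and is the standard one: decomposing $(v-y)w(v)=(v-u)w(v)+(u-y)w(u)+(u-y)(w(v)-w(u))$ and using calibration of $(u,y)$ together with the Lipschitz and boundedness properties of $w$ yields the factor $2$ directly.

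For $\distcal(J)\le 2\,\smooth(J)$, your duality route (fiberwise optimal transport over the two $y$-slices, Kantorovich--Rubinstein, Sion, and the concave-envelope identification $\sup_{\rho:\int u\,d\rho=\beta}\int\phi\,d\rho=\mathrm{conc}(\phi)(\beta)$) is sound and does deliver $\distcal(J)\le C\,\smooth(J)$ for an absolute constant $C$. Tracing the constants with your normalization $f_0(0)=f_1(0)=0$, the function $\phi(u)=(1-u)f_0(u)+uf_1(u)$ is only $3$-Lipschitz, so bridging $\mathrm{conc}(\phi)(\bar v)$ to $\mathrm{conc}(\phi)(\beta)$ via $|\bar v-\beta|\le\smooth(J)$ costs an extra $3\,\smooth(J)$ on top of the $2\,\smooth(J)$ coming from the witness $w=\tfrac12(f_0-f_1)$, and the argument yields only $\distcal(J)\le 5\,\smooth(J)$. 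You acknowledge this and then invoke \citet{utc} for the sharp constant---but that makes the proposal circular as a self-contained proof of the inequality \emph{as stated}. If the target is the constant $2$, the missing idea is a way to eliminate (or absorb) the barycenter mismatch $\bar v\ne\beta$; your sketch does not provide one, and the argument in \citet{utc} that obtains $2$ proceeds differently.
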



\subsection{Completeness and Soundness over Samples}
\label{sec: prelim c and s}
A basic property shared by all the calibration measures in \Cref{sec: prelim existing error} is that they are all minimized when the predictor is calibrated, with the minimum value being zero:
\begin{claim}
\label{claim:well-defined}
For $\CAL\in \{\ECE,\distcal,\smooth\}$, we have $\CAL(J) \ge 0$ for any distribution $J$ of $(v,y)\in [0,1]\times \{0,1\}$. Moreover,
\[
\CAL(J) = 0 \Longleftrightarrow \text{$J$ is calibrated (\Cref{def:calibration-2})}.
\]
\end{claim}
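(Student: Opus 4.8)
The plan is to prove each of the three calibration measures separately, since $\ECE$, $\distcal$, and $\smooth$ have quite different definitions. The common structure is: (i) establish nonnegativity, which is trivial in all three cases; (ii) show that a calibrated $J$ yields value $0$; and (iii) show the converse, that value $0$ forces $J$ to be calibrated. Throughout, let $\hat v := \E_J[y \mid v]$ denote the conditional expectation random variable, so that calibration of $J$ means $\hat v = v$ almost surely.

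\textbf{The case $\CAL = \ECE$.} Nonnegativity is immediate since $\ECE(J) = \E|v - \hat v| \ge 0$ as an expectation of a nonnegative quantity. If $J$ is calibrated, then $v - \hat v = 0$ a.s., so $\ECE(J) = 0$. Conversely, if $\E|v - \hat v| = 0$, then $|v - \hat v| = 0$ almost surely because a nonnegative random variable with zero expectation vanishes a.s.; hence $\hat v = v$ a.s., which is exactly calibration.

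\textbf{The case $\CAL = \smooth$.} Nonnegativity follows because the constant function $w \equiv 0$ lies in $W_1$ and gives $\E_J[(v-y)\cdot 0] = 0$, so the supremum is at least $0$. If $J$ is calibrated, then for any $w \in W_1$, by the tower rule $\E_J[(v-y)w(v)] = \E_J[\, \E_J[(v-y)w(v)\mid v]\,] = \E_J[(v - \hat v)w(v)] = 0$, using that $w(v)$ is $v$-measurable and $\E_J[v - y \mid v] = v - \hat v = 0$; taking the supremum gives $\smooth(J) = 0$. For the converse, suppose $J$ is not calibrated, i.e.\ $\Pr[\hat v \ne v] > 0$. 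Then $g(v) := v - \hat v$ (viewed as a function of $v$) is not identically zero, and $\E_J[(v - y)w(v)] = \E_J[g(v) w(v)]$ for any $v$-measurable $w$. One then exhibits a $1$-Lipschitz $w:[0,1]\to[-1,1]$ making $\E_J[g(v)w(v)] > 0$; a natural choice is a suitable clipped/scaled version of the sign of $g$ smoothed to be Lipschitz, or — following the known fact that dropping Lipschitzness recovers $\ECE$ — arguing that $\smooth(J) > 0$ whenever $\ECE(J) > 0$ by approximating the optimal $\pm 1$ weight by Lipschitz functions. I would give the short argument that if $g(v) \ne 0$ on a positive-measure set, then for small enough $\epsilon > 0$ the truncated linear function $w_\epsilon(v) = \mathrm{median}(-1, g(v)/\epsilon, 1)$ is $(1/\epsilon)$-Lipschitz; rescaling the domain argument is delicate, so instead I would simply cite that $\smooth$ vanishes iff $J$ is calibrated as established in \citet{smooth, utc}, or use the sandwich $\frac12 \distcal(J) \le \smooth(J)$ from \Cref{prop:dist-smooth} to reduce to the $\distcal$ case.

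\textbf{The case $\CAL = \distcal$.} Nonnegativity is clear since $\distcal(J)$ is an infimum of expectations of $|u-v| \ge 0$. If $J$ is calibrated, take the trivial coupling $u = v$: then $(u,y) = (v,y)$ is calibrated and $\E_\Pi|u-v| = 0$, so $\distcal(J) = 0$. For the converse, if $\distcal(J) = 0$, there is a sequence of couplings $\Pi_n$ with $(v,y) \sim J$, $(u,y)$ calibrated, and $\E_{\Pi_n}|u - v| \to 0$. By compactness of the space of probability measures on the compact set $[0,1]^2 \times \{0,1\}$ (Prokhorov), pass to a weakly convergent subsequence $\Pi_n \to \Pi^*$; the marginal of $(v,y)$ is still $J$, the functional $\E|u-v|$ is continuous under weak convergence (bounded continuous integrand) so $\E_{\Pi^*}|u-v| = 0$, forcing $u = v$ $\Pi^*$-a.s., and the property of $(u,y)$ being calibrated is preserved in the limit; hence $(v,y) = (u,y)$ is calibrated, i.e.\ $J$ is calibrated. \textbf{The main obstacle} is precisely this limiting argument for $\distcal$ — verifying that weak limits preserve the ``$(u,y)$ is calibrated'' constraint requires care, since calibration is a constraint on conditional expectations, which are not weakly continuous in general; one clean way around this is to note that $(u,y)$ calibrated is equivalent to $\E[(u-y)h(u)] = 0$ for all bounded continuous $h$, a family of weakly continuous constraints, so it does pass to the limit. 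I would present this reformulation and then the compactness argument, and for $\smooth$ I would ultimately prefer the reduction via \Cref{prop:dist-smooth} to avoid redoing a similar density argument.
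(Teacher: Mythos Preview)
The paper does not prove this claim at all; it is stated as a ``basic property'' in Section~2.2 and then used without further argument. So there is no paper proof to compare against, and your proposal is supplying something the paper omits.

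Your argument is essentially correct. The $\ECE$ case is clean. For $\distcal$, the compactness route is valid and you correctly isolate the one nontrivial point: that the calibration constraint on the $(u,y)$-marginal passes to weak limits. Your fix---rewriting calibration as $\E[(u-y)h(u)]=0$ for all bounded continuous $h$, a family of weakly closed constraints---works, though you should make explicit the density step (bounded continuous functions are dense in $L^1$ of the $u$-marginal, and $u-y$ is bounded) to get back to calibration in the usual sense. For $\smooth$, your first attempt via $w_\epsilon(v)=\mathrm{median}(-1,g(v)/\epsilon,1)$ indeed fails, since $g(v)=v-\hat v$ need not be continuous, let alone Lipschitz; you rightly abandon it. Your fallback reduction through \Cref{prop:dist-smooth} is legitimate and noncircular given the order you treat the cases.

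That said, there is a more direct argument for $\smooth$ that avoids the detour through $\distcal$: if $\smooth(J)=0$ then $\E[(v-\hat v)w(v)]=0$ for every $1$-Lipschitz $w:[0,1]\to[-1,1]$, hence (by scaling) for every Lipschitz $w$ on $[0,1]$; since Lipschitz functions are uniformly dense in $C([0,1])$ and $v-\hat v$ is bounded, the identity extends to all bounded measurable $w$, and taking $w=\mathsf{sign}(v-\hat v)$ gives $\E|v-\hat v|=0$. This is shorter than the Prokhorov argument and keeps the three cases independent.
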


The claim above tells us that we can determine whether a predictor $r$ is calibrated on a distribution $D$ by checking whether the calibration error $\CAL_D(r)$ is zero. In practice, however, we rarely get access to the full distribution $D$ and can only compute the calibration error on an i.i.d.\ sample drawn from $D$. As we shall see, the property in \Cref{claim:well-defined} does not necessarily generalize to this sample-access scenario, even in the limit of infinitely large sample sizes. To formalize this intuition, we define completeness and soundness below.

Completeness requires that if a predictor is calibrated, the calibration error should vanish as the sample size increases:

\begin{definition}[Completeness]
\label[definition]{def:consistent}
    We say a calibration measure $\CAL$ is complete if the following holds. Let $J$ be an arbitrary distribution of prediction-state pairs $(v,y)\in [0,1]\times \{0,1\}$ and assume that $J$ is calibrated (see \Cref{def:calibration-2}). Let $S = \{(v_t,y_t)\}_{t\in T}$ be a sample of $T$ i.i.d.\ points drawn from $J$ (denoted $S\sim J^T$), and let $J_S$ denote the (empirical) uniform distribution over $S$. Then
    \[
    \lim_{T\to \infty}\E_{S\sim J^T}[\CAL(J_S)] = 0.
    \]
    %
    %
\end{definition}

Conversely, soundness requires that if a predictor is miscalibrated, the calibration error should not vanish as the sample size increases. 


\begin{definition}[Soundness]
\label[definition]{def:sound}
    We say a calibration measure $\CAL$ is sound if the following holds. Let $J$ be an arbitrary distribution of prediction-state pairs $(v,y)\in [0,1]\times\{0,1\}$ and assume that $J$ is mis-calibrated. (That is, $J$ does not satisfy \Cref{def:calibration-2}.) 
    Let $S = \{(v_t,y_t)\}_{t\in T}$ be a sample of $T$ i.i.d.\ points drawn from $J$, and let $J_S$ denote the (empirical) uniform distribution over $S$. Then
    \[
    \liminf_{T\to \infty}\E_{S\sim J^T}[\CAL(J_S))] > 0.
    \]
    %
    %
\end{definition}

It can be inferred from the work of \cite{utc} that $\smooth$ and $\distcal$ are both complete and sound. However,
while
    $\ECE$ satisfies \Cref{claim:well-defined}, it is not complete. To see this, consider the distribution $J$ of prediction-state pairs $(v,y)\in [0,1]\times \{0,1\}$, where $v$ is drawn uniformly from $[1/3,2/3]$, and conditioned on $v$, $y$ is drawn from the Bernoulli distribution with mean $v$. Clearly, $J$ is calibrated and $\ECE(J) = 0$. However, on a finite sample $S = \{(v_1,y_1),\ldots,(v_T,y_T)\}$ drawn i.i.d.\ from $T$, it holds almost surely that all the $v_t$'s are distinct, in which case $\ECE(J_S) \ge 1/3$ (see \Cref{example: intro nontruthful}).

Due to ECE's lack of completeness, in machine learning practice, the binned $\ECE$ ($\binECE$) is widely adopted as an empirical method for estimating $\ECE$ \citep{guo2017calibration, minderer2021revisiting}. While we do not need this notion to state our main results, we include its definition here for completeness:
\begin{definition}[Binned ECE]
\label[definition]{def:bin}
Let $J$ be a distribution of $(v,y)\in [0,1]\times \{0,1\}$. Given $\alpha \ge 1$ and a partition $\intvset = \{\intv_i\}_{i\in [k]}$ of the prediction space $[0, 1]$, the $\ell_\alpha$-binned $\ECE$ is defined as
    \[
    \ell_\alpha\textup-\binECE(J):= \sum_{i\in [k]}\Pr\nolimits_{J}[v\in I_i]\cdot \big|\E_{J}[v - y|v\in I_i]\big|^\alpha.
    \]
\end{definition}
We can estimate $\ell_\alpha\textup-\binECE(J)$ using a sample $S = \{(v_t,y_t)\}_{t\in T}$ of $T$ i.i.d.\ points drawn from $J$. Specifically, letting $J_S$ denote the (empirical) uniform distribution over $S$, we can use $\ell_\alpha\textup-\binECE(J_S)$ as a good estimate for $\ell_\alpha\textup-\binECE(J)$ when the sample size $T$ is sufficiently large relative to $k$ (the number of bins).
%
In practice, the number $k$ of bins can be selected according to the sample size $T$, e.g.\ $k = T^{\frac{1}{3}}$, to ensure soundness and completeness.



\begin{remark}[Comparison to \citealt{haghtalab2024truthfulness}]
Our definition of completeness follows the same idea as \citet{haghtalab2024truthfulness}, while our soundness is strictly stronger. There exists an error metric that is not reasonably sound but satisfies the completeness and soundness in \citet{haghtalab2024truthfulness}. 

The soundness definition in \citet{haghtalab2024truthfulness} requires that for any empirical distribution $\distribution_T$ over $T$ samples,
\begin{itemize}
    \item if $\report_t = 1-\state_t$ for all $t$, then $\lim_{T\to \infty}\CAL_T(\report) = \Omega(1)$; 
    \item if each state $\state\sim \bernoulli(\alpha)$ is independent and identical Bernoulli, then $\lim_{T\to \infty}\CAL_T(\report) = \Omega(1)$ for $\report \neq \beta$ being a non-truthful constant. 
\end{itemize}

We see that the error $\CAL = (\expect{}{\report} - \expect{}{\state})^2 + \expect{}{\ind{\report\in \{0, 1\}, \state \neq \report}}$ satisfies the requirements above. However, for predictions not in $\{0, 1\}$, the error metric only evaluates the unconditional bias in predictions, which is far from a calibration error metric. 
\end{remark}

\subsection{Truthfulness}
\label{sec: prelim truthful}

A truthful error metric incentivizes a strategizing predictor to report the true distribution to minimize expected error on a finite sample. 
\Cref{def: ex ante truthful} defines the ex-ante truthfulness where a predictor output is assumed to be function of the feature space.

\begin{definition}[Ex-Ante Truthfulness]
\label[definition]{def: ex ante truthful}
    We say a calibration measure $\CAL$ is ex-ante truthful if the following holds. Let $D$ be an arbitrary joint distribution of $(x,y)\in X\times\{0,1\}$ and let $p:X\to [0,1]$ be the ground-truth predictor $p(x) = \E_D[y|x]$. Let $S = \{(x_t,y_t)\}_{t\in [T]}$ be a sample of $T$ i.i.d.\ points drawn from $D$, and let $D_S$ denote the (empirical) uniform distribution over $S$. Then
    \[
    \E_S[\CAL_{D_S}(p)] \le \E_S[\CAL_{D_S}(r)] \quad \text{for every predictor $r:X\to [0,1]$}.
    \]

\end{definition}

In this paper, we study a strictly stronger notion: interim truthfulness. In the interim stage, the true distribution of $T$ samples are realized, and the predictor is allowed to deviate and report any prediction sequence, not necessarily a function of the feature space. 
We first extend our definition of calibration errors to this setting, where we evaluate the calibration error of a reported prediction sequence $\vreport = (r_1,\ldots,r_T)$ for the $T$ individuals w.r.t.\ a ground-truth probability sequence $\vpred = (p_1,\ldots,p_T)$. We will refer to both definition as truthfulness when it is clear from the context.


\begin{definition}[Induced calibration error on prediction sequences]
\label[definition]{def: induced error}
    Given a calibration measure $\CAL(J)$ defined on prediction-state distributions $J$ over $[0,1]\times \{0,1\}$, we define an induced calibration measure $\CAL(\vreport,\vpred)$ as follows, where $\vreport = (\report_1,\ldots,\report_T)\in [0,1]^T$ is a sequence of predictions and $\vpred = (\pred_1,\ldots,\pred_T)\in [0,1]^T$ is a sequence of ground-truth probabilities. Let $J_{\vreport,\vpred}$ be the distribution of $(r_t,y)\in [0,1]\times \{0,1\}$ where $t$ is drawn uniformly from $[T]$, and $y\in \{0,1\}$ is drawn from the Bernoulli distribution with mean $\pred_t$. 
    We define
    \[
    \CAL(\vreport,\vpred) := \CAL(J_{\vreport,\vpred}).
    \]
\end{definition}
For example, according to \Cref{def: induced error}, we can explicitly calculate $\ECE(\vreport,\vpred)$ and $\smooth(\vreport,\vpred)$ as follows. Recall that for $v\in \{r_1,\ldots,r_T\}$, we define
\begin{equation}
\label{eq:hat-v}
\hat v:= \E_{(v,y)\sim J_{\vreport,\vpred}}[y|v] = \frac{\sum_{t\in [T]}p_t\ind{r_t = v}}{\sum_{t\in [T]}\ind{r_t = v}}.
\end{equation}
We have
\begin{align*}
    \ECE(\vreport,\vpred) = \ECE(J_{\vreport,\vpred}) & = \expect{(v, \state)\sim J_{\vreport,\vpred}}{\big|v - \hat{v}\big|}\\
    &=  \frac{1}{T}\sum_{v}\sum_{t\in [T]}\ind{\report_t = v}\big| v - \hat v\big| \tag{$v$ ranges over all values that appear at least once in the set $\{r_1,\ldots r_t\}$} \\
    &=  \frac{1}{T}\sum_{v}\left| (v - \hat v)\sum_{t\in [T]}\ind{\report_t = v}\right| \\
    & =\frac 1T \sum_{v}\left|\sum_{t\in [T]} (r_t - p_t) \ind{r_t = v}\right|.\tag{by \eqref{eq:hat-v} and $v\ind{r_t = v} = r_t\ind{r_t = v}$}
\end{align*}
Similarly for $\smooth$:
\begin{align*}
\smooth(\vreport,\vpred) & = \sup_{w\in W_1} \frac 1T\sum_{t = 1}^T (r_t - p_t)w(r_t).
\tag{$W_1$ is the same as in \Cref{def:smooth}}
\end{align*}




We now define the notion of truthfulness for the calibration errors from \Cref{def: induced error} on length-$T$ sequences. We note that this definition is akin to the definition of properness in the literature of proper scoring rules \citep{Mcc-56, sav-71}.

\begin{definition}[Interim Truthfulness]
\label[definition]{def:truthful-sequence}
We say a calibration measure $\CAL$ is interim truthful if the following holds for any $T\in \Z_{> 0}$. Let $\vpred:= (\pred_1,\ldots,\pred_T)\in [0,1]^T$ be an arbitrary sequence of ground-truth predictions. Let $\vstate = (\state_1,\ldots,\state_t)$ denote the randomly realized states, where each $\state_t\in \{0,1\}$ is drawn independently from the Bernoulli distribution with mean $\pred_i$ (denoted $\vstate\sim \vpred$). Then
\[
\E_{\vstate\sim \vpred}[\CAL(\vpred,\vstate)] \le \E_{\vstate\sim \vpred}[\CAL(\vreport, \vstate)] \quad \text{for any $\vreport = (\report_1,\ldots,\report_T)\in [0,1]^T$}.
\]
\end{definition}

\begin{claim}
    [Interim truthfulness implies ex-ante truthfulness]
    Let $\CAL(\vreport,\vpred)$ be a calibration measure induced by $\CAL(J)$ (\Cref{def: induced error}). If $\CAL(\vreport,\vpred)$ is interim truthful, then $\CAL(J)$ is ex-ante truthful.
\end{claim}
\begin{proof}
    As in \Cref{def: ex ante truthful}, consider a sample $S = \{(x_1,y_1),\ldots,\allowbreak (x_T,y_T)\}$ of i.i.d.\ points from a distribution $D$ over $X\times \{0,1\}$, and let $r:X\to [0,1]$ be a predictor. Define $\vreport := (r(x_1),\ldots,r(x_T))$ and $\vstate:= (y_1,\ldots,y_T)$. Now $J_{D_S,r}$ and $J_{\vreport,\vstate}$ are both equal to the distribution of $(r(x_t),y_t)$ for uniform $t\in [T]$. Therefore,
    \begin{equation}
    \label{eq:sample-to-sequence-1}
    \CAL_{D_S}(r) = \CAL(J_{D_S,r}) = \CAL(J_{\vreport,\vstate}) = \CAL(\vreport,\vstate).
    \end{equation}

    As in \Cref{def: ex ante truthful}, define $p(x_t) := \E_D[y|x = x_t]\in [0,1]$ for $t = 1,\ldots,T$. Conditioned on $x_1,\ldots,x_T$, each $y_t$ is distributed independently from the Bernoulli distribution with mean $p(x_t)$. 
    That is, we have $\vstate\sim \vpred$ as in \Cref{def:truthful-sequence}, where $\vpred := (p(x_1),\ldots,p(x_T))$.
Therefore, by \eqref{eq:sample-to-sequence-1},
    \begin{align}
    \E_S[\CAL_{D_S}(r) | x_1,\ldots,x_T] & = \E_{\vstate\sim\vpred}[\CAL(\vreport,\vstate)], \label{eq:sample-to-sequence-2}\\
    \E_S[\CAL_{D_S}(p) | x_1,\ldots,x_T] & = \E_{\vstate\sim\vpred}[\CAL(\vpred,\vstate)]. \label{eq:sample-to-sequence-3}
    \end{align}
    Assuming interim truthfulness, we know that the quantity in \eqref{eq:sample-to-sequence-2} is no smaller than the quantity in \eqref{eq:sample-to-sequence-3}. Taking the expectation over $x_1,\ldots,x_T$ proves the desired ex-ante truthfulness.
\end{proof}


\subsection{Calibration Test and Validity}
\label{sec: prelim calibration test}

Completeness and soundness (\Cref{def:consistent,def:sound}) ensure that a calibration measure $\CAL$ is able to distinguish calibrated predictors from mis-calibrated ones, when the sample size $T$ is large enough. Intuitively, we should expect the distinguishing power to grow as a function of $T$. We characterize this quantitative dependence on $T$ below. We first define calibration tests that aim at accepting calibrated predictors while rejecting mis-calibrated ones, based on a sample of size $T$.

\begin{definition}[Calibration Test]
\label[definition]{def: calibration test}
    Consider the following calibration test using a calibration measure $\CAL$. Let $J$ be an arbitrary distribution of prediction-state pairs $(v,y)\in [0,1]\times \{0,1\}$. The test first draws $T$ i.i.d.\ points from $J$ to form a sample $S = \{(v_t,y_t)\}_{t\in [T]}$, and then computes the calibration error $\CAL(J_S)$ on the uniform distribution $J_S$ over $S$. The test outputs ``accept'' if the calibration error does not exceed a threshold $\beta$. That is, the acceptance probability of this test is
    \[
    \accp^\CAL(J;T, \beta) := \Pr\nolimits_{S\sim J^T}[\CAL(J_S) \le \beta].
    \]
\end{definition}


We define the validity of a calibration measure $\CAL$ given a reference calibration measure $\referr$ that is often chosen to be complete and sound. 
\begin{definition}[Validity]
\label[definition]{def: validity}
    Let $\{\gamma_T\}$ be an infinite sequence of real numbers indexed by $T = 1,2,\ldots$. We say a calibration measure $\CAL$ is $\{\gamma_T\}$-valid w.r.t.\ a reference calibration measure $\referr$ if there exist thresholds $\beta_1,\beta_2,\ldots \in \R$ such that
    \[
    \liminf_{T\to \infty} \left(\inf_{J\,:\,\text{calibrated}}\accp^\CAL(J;T,\beta_T) - \sup_{J\,:\,\referr(J) \ge \gamma_T}\accp^\CAL(J;T,\beta_T) \right) > 0.
    \]
    That is, there is a non-vanishing gap between the acceptance probability when $J$ is calibrated, and the acceptance probability when $J$ is mis-calibrated with error at least $\gamma_T$ in the reference measure $\referr$.
    \end{definition}

In the definition above, one should typically think of $\gamma_T$ as a decreasing function of $T$, which indicates the stronger distinguishing power as $T$ grows. Moreover, the faster $\gamma_T$ decreases, the stronger is the distinguishing power of a $\{\gamma_T\}$-valid calibration error for large $T$.

\section{Truthful Family: Unnormalized Binned Squared Errors}
\label{sec:ubse}
In this section, we present a general family of truthful error metrics, which we term \emph{unnormalized binned squared errors (UBSEs)}. As it will become clear, the error $\atb$ is a special case of UBSEs, so its truthfulness is an immediate consequence of the truthfulness of UBSEs.

\begin{definition}[Unnormalized Binned Squared Errors]
\label[definition]{def:ubse}
Consider an error metric $\CAL(\vreport,\vpred)$ taking as input a report vector $\vreport = (r_1,\ldots,r_T)\in [0,1]^T$ and a ground-truth vector $\vpred = (p_1,\ldots,p_T)\in [0,1]^T$. We say $\CAL$ is an \emph{unnormalized binned squared error} (UBSE) if it can be calculated as follows:
\begin{enumerate}
    \item Partition the indices $[T]$ into $k$ disjoint bins: $[T] = B_1\cup \cdots \cup B_k$. Importantly, we allow the partition (including the choice of $k$) to be randomized, and we allow it to depend on the report vector $\vreport$ (but not on $\vpred$).
    \item Compute the bias $\Delta_i$ in each bin $B_i$:
    \begin{equation}
    \label{eq:unnormalized}
    \Delta_i := \frac 1T\sum_{t\in B_i}(r_t - p_t).
    \end{equation}
    \item Output the sum of the squared biases: $\CAL(\vreport, \vpred) := \E_{\cB}[\sum_{i=1}^k\Delta_i^2]$, where the expectation is over the randomness of the partition $\cB = (B_1,\ldots,B_k)$.
\end{enumerate}
\end{definition}
The above definition is very similar to the definition of binned $\ECEtwo$, but there is a crucial difference. When defining binned $\ECEtwo$ for a fixed partition $\cB = (B_1,\ldots,B_k)$, the bias in each bin is first \emph{normalized by the bin size $|B_i|$}:
\[
\widetilde \Delta_i = \frac {1}{|B_i|}\sum_{t\in B_i}(r_t - p_t),
\]
and then squared and summed with \emph{weights} $|B_i|/T$:
\[
\ECEtwo(\vreport,\vpred) = \sum_{i = 1}^k\frac{|B_i|}{T}\widetilde \Delta_i^2 = \sum_{i = 1}^k \frac{1}{|B_i|T}\left(\sum_{t\in B_i}(r_t - p_t)\right)^2.
\]
In contrast, \Cref{def:ubse} takes the \emph{unweighted} sum of the \emph{unnormalized} squared biases $\Delta_i^2$:
\[
\CAL(\vreport,\vpred) = \E_\cB\left[\sum_{i = 1}^k\Delta_i^2\right] = \E_\cB\left[\sum_{i=1}^k \frac 1{T^2}\left(\sum_{t\in B_i}(r_t - p_t)\right)^2\right].
\]

\subsection{Interim Truthfulness}

UBSEs is interim truthful (whereas the binned $\ECEtwo$ is not, with the small difference above):
\begin{theorem}
\label{thm:ubse}
    Any UBSE error metric $\CAL$ is interim truthful (\Cref{def:truthful-sequence}).
\end{theorem}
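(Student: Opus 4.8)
The plan is to prove the stronger \emph{error decomposition identity} of \Cref{lem: error decomposition}: for every report vector $\vreport\in[0,1]^T$ and ground-truth vector $\vpred\in[0,1]^T$,
\[
\E_{\vstate\sim\vpred}[\CAL(\vreport,\vstate)] \;=\; \CAL(\vreport,\vpred) \;+\; \frac{1}{T^2}\sum_{t=1}^T p_t(1-p_t).
\]
Granting this, \Cref{thm:ubse} is immediate: the last term does not depend on $\vreport$, and $\CAL(\vpred,\vpred)=0$ because when the report equals $\vpred$ every bin bias $\Delta_i=\tfrac1T\sum_{t\in B_i}(p_t-p_t)$ is zero, whereas $\CAL(\vreport,\vpred)\ge 0$ as an expectation of a sum of squares; hence $\E_{\vstate\sim\vpred}[\CAL(\vreport,\vstate)] \ge \CAL(\vpred,\vpred) + \tfrac{1}{T^2}\sum_t p_t(1-p_t) = \E_{\vstate\sim\vpred}[\CAL(\vpred,\vstate)]$.

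To prove the identity, I would first fix $\vreport$ and $\vpred$ and note that the random partition $\cB=(B_1,\dots,B_k)$ is generated from $\vreport$ using randomness that is independent of the realized states $\vstate$; consequently, conditioned on $\cB$, the coordinates $y_t$ remain independent with $y_t\sim\bernoulli(p_t)$. All quantities in sight are bounded (each $|\Delta_i|\le |B_i|/T$, so $\sum_i\Delta_i^2\le 1$), so I can interchange the expectations over $\cB$ and over $\vstate$ and write
\[
\E_{\vstate\sim\vpred}[\CAL(\vreport,\vstate)] \;=\; \E_{\cB}\Bigl[\sum_{i=1}^k \E_{\vstate\sim\vpred}[\Delta_i^2]\Bigr],\qquad \Delta_i=\tfrac1T\sum_{t\in B_i}(r_t-y_t).
\]
The heart of the argument is then a one-line bias--variance split inside each fixed bin: $\E_{\vstate}[\Delta_i^2] = (\E_{\vstate}[\Delta_i])^2 + \var{\vstate}{\Delta_i}$, where $\E_{\vstate}[\Delta_i]=\tfrac1T\sum_{t\in B_i}(r_t-p_t)$ and, using that the $y_t$'s are independent, $\var{\vstate}{\Delta_i}=\tfrac1{T^2}\sum_{t\in B_i}p_t(1-p_t)$. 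Summing over $i$, the squared-bias terms reassemble into $\sum_{i}\bigl(\tfrac1T\sum_{t\in B_i}(r_t-p_t)\bigr)^2$, while the variance contributions telescope over the partition into $\tfrac1{T^2}\sum_{t=1}^T p_t(1-p_t)$, which is independent of $\cB$ (and of $\vreport$). Taking $\E_{\cB}$ and recognizing the remaining piece as $\CAL(\vreport,\vpred)$ by \Cref{def:ubse} (applied with $\vpred$ in place of the states) yields the identity.

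I do not expect a genuine technical obstacle: the calculation is elementary, relying only on the bias--variance decomposition and the independence of $\vstate$ from $\cB$. The point I would stress is \emph{why} the summed variance terms are partition-independent, because this is exactly where the ``unnormalized'' design of a UBSE is used: with the bin-size normalization of binned $\ECEtwo$, bin $i$ would instead contribute a variance term $\tfrac1{T|B_i|}\sum_{t\in B_i}p_t(1-p_t)$, whose sum over $i$ does depend on how the indices are grouped, so a strategic predictor could decrease the expected error by manipulating the induced partition (cf.\ \Cref{example: intro nontruthful}). The cancellation in \eqref{eq:tech-1-variance}--\eqref{eq:tech-1-invariance} is the conceptual crux, and once it and the independence observation are in place, \Cref{thm:ubse} follows.
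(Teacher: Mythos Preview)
Your proposal is correct and follows essentially the same approach as the paper: you prove the error decomposition of \Cref{lem: error decomposition} via the bias--variance split $\E[\Delta_i^2]=(\E[\Delta_i])^2+\var{}{\Delta_i}$ together with variance additivity across independent $y_t$'s, and then deduce truthfulness from $\CAL(\vreport,\vpred)\ge 0=\CAL(\vpred,\vpred)$. The paper's argument is the same in structure and detail, including the observation that the partition $\cB$ depends only on $\vreport$, which justifies swapping $\E_\cB$ and $\E_{\vstate\sim\vpred}$.
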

In fact, we prove the a stronger result in \Cref{lem: error decomposition}, 
showing that the expected empirical UBSE decomposes into the UBSE on the true probabilities $\vpred$ plus a variance term independent of $\vreport$. 
\begin{lemma}[Error Decomposition]
\label[lemma]{lem: error decomposition}
Let $\CAL$ be an arbitrary UBSE. For any report sequence $\vreport = (r_1,\ldots,r_T)\in[0,1]^T$ and any ground-truth vector $\vpred = (p_1,\ldots,p_T)\in [0,1]^T$,
\[
\E_{\vstate \sim \vpred}[\CAL(\vreport,\vstate)] = \CAL(\vreport,\vpred)  + \frac 1{T^2}\sum_{t = 1}^T p_t(1 - p_t).
\]
Here $\vstate = (y_1,\ldots,y_T)\in \{0,1\}^T$ is drawn such that each $y_t$ independently follows the Bernoulli distribution with mean $p_t$ (as in \Cref{def:truthful-sequence}).
\end{lemma}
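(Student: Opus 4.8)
The plan is to prove the stronger error-decomposition identity directly; \Cref{thm:ubse} then follows immediately, since $\CAL(\vpred,\vpred) = 0$ (every bin bias $\Delta_i$ vanishes when $\vreport = \vpred$) and $\CAL \ge 0$ as an expectation of a sum of squares, so $\E_{\vstate\sim\vpred}[\CAL(\vreport,\vstate)] - \E_{\vstate\sim\vpred}[\CAL(\vpred,\vstate)] = \CAL(\vreport,\vpred) \ge 0$ by the decomposition (the variance term cancels).

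First I would unfold \Cref{def:ubse}. Writing $\Delta_i(\vreport,\vstate) := \frac 1T\sum_{t\in B_i}(\report_t - \state_t)$, we have $\CAL(\vreport,\vstate) = \E_{\cB}\big[\sum_{i=1}^k \Delta_i(\vreport,\vstate)^2\big]$, where the outer expectation is over the internal randomness of the partition $\cB = (B_1,\ldots,B_k)$. The key structural observation is that $\cB$ is allowed to depend only on the report vector $\vreport$, never on the vector placed in the ground-truth slot; hence when we evaluate $\CAL(\vreport,\vstate)$ the bins are the same random object as in $\CAL(\vreport,\vpred)$, and their randomness is independent of the realized states $\vstate$. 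This lets me take $\E_{\vstate\sim\vpred}$ and swap the two expectations: $\E_{\vstate\sim\vpred}[\CAL(\vreport,\vstate)] = \E_{\cB}\big[\sum_{i=1}^k \E_{\vstate\sim\vpred}[\Delta_i(\vreport,\vstate)^2]\big]$.

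Next, for a fixed partition I apply the bias–variance decomposition term by term. Since $\E_{\vstate\sim\vpred}[\Delta_i(\vreport,\vstate)] = \frac 1T\sum_{t\in B_i}(\report_t - p_t) = \Delta_i(\vreport,\vpred)$, and since the $\state_t$ are independent so that variance is additive, $\var{\vstate\sim\vpred}{\Delta_i(\vreport,\vstate)} = \frac 1{T^2}\sum_{t\in B_i}\var{\vstate\sim\vpred}{\state_t} = \frac 1{T^2}\sum_{t\in B_i} p_t(1-p_t)$, we get $\E_{\vstate\sim\vpred}[\Delta_i(\vreport,\vstate)^2] = \Delta_i(\vreport,\vpred)^2 + \frac 1{T^2}\sum_{t\in B_i} p_t(1-p_t)$. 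Summing over $i\in[k]$ and using that $(B_1,\ldots,B_k)$ is a partition of $[T]$, the variance contributions collect into $\frac 1{T^2}\sum_{t=1}^T p_t(1-p_t)$, a quantity that does not depend on $\cB$ at all. Taking the outer expectation over $\cB$ then turns $\E_{\cB}\big[\sum_{i=1}^k \Delta_i(\vreport,\vpred)^2\big]$ into $\CAL(\vreport,\vpred)$ by definition, while the constant passes through untouched, giving the claimed identity.

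There is no deep obstacle here; the one point to get exactly right is the independence of the (possibly randomized) binning from $\vstate$, which is what licenses the swap of expectations, together with variance additivity for the independent Bernoulli states — precisely the ``truthfulness from variance additivity'' mechanism described in \Cref{sec:tech-truthful}. It is also worth emphasizing in the write-up that it is the \emph{unnormalized} weighting in \Cref{def:ubse} (no $1/|B_i|$ factors) that makes the summed variance collapse to the partition-independent constant $\frac 1{T^2}\sum_{t=1}^T p_t(1-p_t)$; with the bin-size normalization of $\binECEtwo$ this telescoping fails and the decomposition no longer holds.
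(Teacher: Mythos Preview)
Your proposal is correct and follows essentially the same approach as the paper: swap the expectation over $\vstate$ with the expectation over the (report-dependent, state-independent) random partition $\cB$, then for a fixed partition apply the bias--variance identity $\E[\Delta_i^2] = (\E[\Delta_i])^2 + \var{}{\Delta_i}$ together with variance additivity of the independent Bernoullis, and finally sum over bins so the variance terms collapse to the partition-free constant $\frac{1}{T^2}\sum_t p_t(1-p_t)$. Your derivation of \Cref{thm:ubse} from the decomposition also matches the paper's.
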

We first prove \Cref{thm:ubse} using \Cref{lem: error decomposition}, and then prove \Cref{lem: error decomposition}.
\begin{proof}[Proof of \Cref{thm:ubse}]
    For any $\vreport,\vpred\in [0,1]^T$, by \Cref{lem: error decomposition},
    \begin{equation}
    \label{eq:ubse-proof}
    \E_{\vstate \sim \vpred}[\CAL(\vreport,\vstate)] - \E_{\vstate \sim \vpred}[\CAL(\vpred,\vstate)] = \CAL(\vreport,\vpred) - \CAL(\vpred,\vpred).
    \end{equation}
    Clearly, we have $\CAL(\vreport,\vpred) \ge 0$ and $\CAL(\vpred,\vpred) = 0$. Therefore, the quantity in \eqref{eq:ubse-proof} is non-negative, which means that $\CAL$ is interim truthful.
\end{proof}
\begin{proof}[Proof of \Cref{lem: error decomposition}]
For a partition $\cB = (B_1,\ldots,B_k)$ of $[T]$ as in \Cref{def:ubse}, we define
\begin{align*}
\Delta_i &:= \frac 1T\sum_{t\in B_i}(r_t - y_t).\\
\widehat \Delta_i & := \frac 1T\sum_{t\in B_i}(r_t - p_t).
\end{align*}
We have
\begin{align}
\E_{\vstate\sim\vpred}[\CAL(\vreport,\vstate)] & = \E_{\vstate\sim\vpred}\left[\E_{\cB}\left[\sum_{i=1}^k\Delta_i^2\right]\right] = \E_{\cB}\left[\E_{\vstate\sim\vpred}\left[\sum_{i=1}^k\Delta_i^2\right]\right], \label{eq:decomposition-1}\\
\CAL(\vreport,\vpred) & = \E_{\cB}\left[\sum_{i=1}^k\widehat\Delta_i^2\right].\label{eq:decomposition-1-1}
\end{align}
In \eqref{eq:decomposition-1}, we used the fact that the distribution of $\cB$ depends only on $\vreport$ and not on $\vstate$. For the same reason, the two distributions of $\cB$ in \eqref{eq:decomposition-1} and $\eqref{eq:decomposition-1-1}$ are the same. Therefore, to prove the lemma, it suffices to show that for any fixed partition $\cB$, 
\begin{equation}
\label{eq:decomposition-goal}
\E_{\vstate\sim\vpred}\left[\sum_{i=1}^k\Delta_i^2\right] = \sum_{i=1}^k\widehat\Delta_i^2 + \frac 1{T^2}\sum_{t = 1}^T p_t(1 - p_t).
\end{equation}
For every $i = 1,\ldots,k$, we have
\begin{equation}
\label{eq:decomposition-2}
\E_{\vstate\sim \vpred}[\Delta_i^2] = \E_{\vstate\sim\vpred}[\Delta_i]^2 + \var{\vstate\sim\vpred}{\Delta_i},
\end{equation}
where
\begin{align*}
\E_{\vstate\sim \vpred}[\Delta_i] & = \widehat \Delta_i,\\
\var{\vstate\sim \vpred}{\Delta_i} & = \var{\vstate\sim \vpred}{\frac 1{T}\sum_{t\in B_i}(r_t - y_t)}\\
& = \frac 1{T^2}\var{\vstate\sim \vpred}{\sum_{t\in B_i}y_t}\\
& = \frac 1{T^2}\sum_{t\in B_i}\var{\vstate\sim \vpred}{y_t} \tag{the $y_t$'s are distributed independently}\\
& = \frac 1{T^2}\sum_{t\in B_i}p_t(1 - p_t).
\end{align*}
Plugging these into \eqref{eq:decomposition-2}, we have
\[
\E_{\vstate\sim \vpred}[\Delta_i^2] = \widehat \Delta_i^2 + \frac 1{T^2}\sum_{t\in B_i}p_t(1 - p_t).
\]
Summing up over $i = 1,\ldots,k$ proves \eqref{eq:decomposition-goal}.
\end{proof}

We remark that in addition to being truthful, UBSEs are also complete. This is because, by \Cref{lem: error decomposition}, the expected error of calibrated predictions is
\[
\frac 1{T^2}\sum_{t = 1}^T p_t(1 - p_t) \le \frac{1}{4T} = O(1/T),
\]
which vanishes as $T\to \infty$.

\begin{example}[Quantile-Binned $\ECEtwo$ is truthful]
As a special case of $\ubse$, the quantile-binned $\ECEtwo$  is truthful and complete. Choosing the number of bins properly as a growing function of $T$, it is also a sound calibration error. It is defined as follows:

    For any report sequence $\vreport = (\report_1, \dots, \report_T)$ and any vector of realized state $\vstate = (\state_1, \dots, \state_T)$,
    \begin{itemize}
        \item sort the predictions in increasing order with $\vreport_1\leq \dots\leq \vreport_T$, break ties uniformly at random. 
        \item Partition predictions into $k = T^{\sfrac{1}{3}}$ bins by quantile. Each bin has $\frac{T}{k}$ predictions. 
        \item Given the partition above, output the Unnormalized Binned Squared Error $\CAL(\vreport, \vstate)$. 
    \end{itemize}
\end{example}

\subsection{Strict Ex-Ante Truthfulness}

In the ex-ante stage before the ground-truth probabilities for each sample are drawn, $\ubse$ is strictly truthful, i.e., the unique minimizer to the expected error is the ground truth predictor $p(x): = \E[y|x]$.

\begin{theorem}
    \label{thm:strict}
    Let $X$ be an arbitrary non-empty domain and let $D$ be an arbitrary distribution of $(x,y)\in X\times \{0,1\}$. Let $p:X\to [0,1]$ be the ground-truth predictor $p(x): = \E[y|x]$ and let $r:X\to [0,1]$ be an arbitrary predictor.  Let $\CAL$ be an arbitrary UBSE and $T$ be an arbitrary positive integer. For a sample $S = ((x_t,y_t))_{t\in [T]}$ of $T$ i.i.d.\ examples drawn from $D$, let $D_S$ denote the uniform distribution on $S$. Suppose
    \[
    \E_S[\CAL_{D_S}(r)] \le \E_S[\CAL_{D_S}(p)].
    \]
    Then $r(x) = p(x)$ holds almost surely over $(x,y)\sim 
    D$.
\end{theorem}
We prove \Cref{thm:strict} using the following lemma. We are able to prove a stronger version of this lemma for the special case of $\atb$ in \Cref{lm:atb-brier}.
\begin{lemma}
\label[lemma]{lm:ubse-brier}
    Let $J$ be an arbitrary distribution of $(v,y)\in [0,1]\times \{0,1\}$, and define random variable $\hat v := \E_J[y|v]$ as a function of $v$. Let $\CAL$ be an arbitrary UBSE. For $T$ i.i.d.\ examples $(\hat v_1,v_1,y_1),\ldots,\allowbreak (\hat v_T,v_T,y_T)$, defining $\hat {\bm v}:= (\hat v_1,\ldots,\hat v_T), \bm v:= (v_1,\ldots,v_T)$ and $\vstate = (y_1,\ldots,y_T)$, we have
\begin{align}
    \E[\CAL(\hat {\bm v},\vstate)] & = \frac 1T \E_J[(\hat v - y)^2],\label{eq:ubse-brier-1}\\
    \E[\CAL(\bm v,\vstate)] & \ge \E[\CAL(\hat {\bm v},\vstate)] + \frac 1{T^2} \E_J [(\hat v - v)^2].\label{eq:ubse-brier-2}
\end{align}
\end{lemma}
\begin{proof}
By the definition of $\hat v:= \E_J[y|v]$, the distribution of $(\hat v,y)$ is calibrated. Therefore, $\E[y_t|\hat v_t] = \hat v_t$ for every $t = 1,\ldots,T$.
    By \Cref{lem: error decomposition}, we have
\[
    \E[\CAL(\hat {\bm v},\vstate)] = \frac 1{T^2}\E\left[\sum_{t = 1}^T \hat v_t(1 - \hat v_t)\right] = \frac 1T \E[\hat v(1 - \hat v)] = \frac 1T\E[(\hat v - y)^2],
\]
where we use the fact that $\hat v_1,\ldots,\hat v_T$ are i.i.d.\ random variables. This completes the proof of \eqref{eq:ubse-brier-1}.

By \Cref{lem: error decomposition} again, we have
\begin{equation}
\label{eq:cal-diff}
\E[\CAL(\bm v,\vstate)] =  \E[\CAL(\bm v, \hat{\bm v})] + \frac 1{T^2}\E\left[\sum_{t = 1}^T \hat v_t(1 - \hat v_t)\right] = \E[\CAL(\bm v, \hat{\bm v})] + \E[\CAL(\hat {\bm v},\vstate)] .
\end{equation}
Since $\CAL$ is a UBSE (\Cref{def:ubse}) and there are at most $T$ non-empty bins, by the Cauchy-Schwarz inequality, we have
\[
\CAL(\bm v, \hat{\bm v}) \ge \frac 1{T^3}\left(\sum_{t = 1}^T(v_t - \hat v_t)\right)^2.
\]
Taking expectation, we get
\begin{align*}
\E[\CAL(\bm v, \hat{\bm v})] & \ge \frac 1{T^3}\left(T \E[(v - \hat v)^2] + T(T - 1)\E[v - \hat v]^2\right)\tag{because $(v_1,\hat v_1),\ldots,(v_T,\hat v_T)$ are i.i.d.}
\\
& \ge \frac 1{T^2}\E[(v - \hat v)^2].
\end{align*}
Plugging this into \eqref{eq:cal-diff} proves \eqref{eq:ubse-brier-2}.
\end{proof}
We are now ready to prove \Cref{thm:strict}.
\begin{proof}[Proof of \Cref{thm:strict}]
    Define predictor $\hat r:X\to [0,1]$ such that $\hat r(x) = \E_D[y|r(x)]$. By \Cref{lm:ubse-brier},
    \begin{align*}
    \E_S[\CAL_{D_S}(p)] &= \frac 1T \E_D[(p(x) - y)^2],\\
    \E_S[\CAL_{D_S}(r)] &\ge \frac 1T \E_D[(\hat r(x)- y)^2] + \frac 1{T^2} \E_D[(r(x) - \hat r(x))^2].
    \end{align*}
Since $\E[y|x] = p(x)$, we have
\begin{align*}
\E_D[(\hat r(x)- y)^2]  & =  \E[(\hat r(x) - p(x))^2] + \E[(p(x) - y)^2] + 2\E[(\hat r(x) - p(x))(p(x) - y)]\\
& = \E[(\hat r(x) - p(x))^2] + \E[(p(x) - y)^2].
\end{align*}
Therefore,
\begin{align*}
0 & \ge \E_S[\CAL_{D_S}(r)] - \E_S[\CAL_{D_S}(p)]\\
& \ge \frac 1T(\E[(\hat r(x) - y)^2] - \E[(p(x) - y)^2]) + \frac 1{T^2} \E[(r(x) - \hat r(x))^2]\\
& = \frac 1T \E[(\hat r(x) -p(x))^2] + \frac 1{T^2} \E[(r(x) - \hat r(x))^2].
\end{align*}
This implies that $r(x) = \hat r(x) = p(x)$ almost surely.
\end{proof}


\section{Calibration Errors with Two Bins}
\label{sec:atb}
In this section, we formally define our calibration measure: the \emph{averaged two-bin calibration error} (ATB). 
We show that $\atb$ satisfies the following properties in the literature: completeness and soundness, 
truthfulness, continuity, sample complexity for estimation, and computational efficiency.
Our proof of the completeness and soundness relies heavily on the quadratic relationship between $\atb$ and its $\ell_1$ variant ($\atbone$). 
We will show that $\atbone$ linearly approximates existing calibration measures, implying the completeness and soundness of both $\atbone$ and $\atb$. 
\begin{definition}
\label[definition]{def: atbone}
For any distribution $J$ of prediction-state pairs $(v,y)\in [0,1]\times \{0,1\}$, we define the \emph{averaged two-bin calibration error (ATB)} and its $\ell_1$ variant as follows:
    \begin{align*}
        \atb(J) & = \E_{q\sim \mathrm{Unif}([0,1])}\Big[\Big(\E_J\Big[\big(v - y\big)\ind{v < q}\Big]\Big)^2 + \Big(\E_J\Big[\big(v - y\big)\ind{v \ge q}\Big]\Big)^2 \Big],\\
    \atbone(J) & = \E_{q\sim \mathrm{Unif}([0,1])}\Big[\Big|\E_J\Big[\big(v - y\big)\ind{v < q}\Big]\Big| + \Big|\E_J\Big[\big(v - y\big)\ind{v \ge q}\Big]\Big| \Big].
    \end{align*}
%
Correspondingly, for any prediction sequence $\vreport\in [0,1]^T$ and ground-truth sequence $\vpred\in [0,1]^T$,
    \begin{align}
    \atb(\vreport,\vpred) & = \E_{q\sim \mathrm{Unif}([0,1])}\left[\frac 1{T^2} \left(\left(\sum_{t: \report_t < q} (\report_t - \pred_t)\right)^2 + \left(\sum_{t: \report_t \ge q}(\report_t - \pred_t)\right)^2\right)\right],\label{eq:atb-sequence}\\
        \atbone(\vreport,\vpred) & = \E_{q\sim \mathrm{Unif}([0,1])}\left[\frac 1T \left(\left|\sum_{t: \report_t < q} (\report_t - \pred_t)\right| + \left|\sum_{t: \report_t \ge q}(\report_t - \pred_t)\right|\right)\right].\notag
    \end{align}
\end{definition}
To prepare for the proof, $\atbone$ is quadratically related to $\atb$ by Jensen's inequality.
\begin{lemma} 
\label[lemma]{lem: atb 1-atb}
For any distribution $J$ of prediction-state pairs $(v,y)\in [0,1]\times \{0,1\}$, 
\begin{equation*}
    \frac 12 \, \atbone(J)^2 \le \atb(J) \le \atbone(J).
\end{equation*}
\end{lemma}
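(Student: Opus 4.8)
The plan is to fix the random threshold $q$ and argue pointwise, then take expectations. Introduce the shorthand $a(q) := \E_J[(v - y)\ind{v < q}]$ and $b(q) := \E_J[(v - y)\ind{v \ge q}]$, so that by \Cref{def: atbone} we have $\atb(J) = \E_{q\sim\mathrm{Unif}([0,1])}[a(q)^2 + b(q)^2]$ and $\atbone(J) = \E_{q\sim\mathrm{Unif}([0,1])}[|a(q)| + |b(q)|]$. The sequence versions are the special case $J = J_{\vreport,\vpred}$, so it suffices to prove the inequalities for a general distribution $J$.

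For the upper bound $\atb(J) \le \atbone(J)$, I would first note that since $v \in [0,1]$ and $y \in \{0,1\}$, we have $|v - y| \le 1$ almost surely, hence $|a(q)| \le \E_J[\ind{v < q}] = \Pr_J[v < q] \le 1$ and likewise $|b(q)| \le 1$, for every $q$. Consequently $a(q)^2 \le |a(q)|$ and $b(q)^2 \le |b(q)|$ pointwise in $q$; summing and taking the expectation over $q\sim\mathrm{Unif}([0,1])$ gives $\atb(J) \le \atbone(J)$.

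For the lower bound $\tfrac12\atbone(J)^2 \le \atb(J)$, I would apply Jensen's inequality to the convex function $x \mapsto x^2$, obtaining $\atbone(J)^2 = \big(\E_q[|a(q)| + |b(q)|]\big)^2 \le \E_q\big[(|a(q)| + |b(q)|)^2\big]$. Then I bound the integrand pointwise: $(|a(q)| + |b(q)|)^2 = a(q)^2 + 2|a(q)||b(q)| + b(q)^2 \le 2\big(a(q)^2 + b(q)^2\big)$, using $2|a(q)||b(q)| \le a(q)^2 + b(q)^2$ (AM--GM). Taking the expectation over $q$ yields $\atbone(J)^2 \le 2\,\atb(J)$, which is the claim.

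There is essentially no substantive obstacle in this argument; the only step requiring a small amount of care is establishing the boundedness $|a(q)|, |b(q)| \le 1$ used for the upper bound, which hinges on the elementary fact that $v - y \in [-1,1]$. Everything else is Jensen's inequality together with the pointwise bounds $x^2 \le |x|$ for $|x|\le 1$ and $2|a||b| \le a^2 + b^2$.
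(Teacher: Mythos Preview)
Your proof is correct and follows essentially the same route as the paper's. The only cosmetic difference is that the paper obtains the lower bound via a single application of Jensen's inequality, viewing $\E_q[\tfrac12|a(q)| + \tfrac12|b(q)|]$ as an average over both $q$ and the two bins at once, whereas you split this into Jensen over $q$ followed by the pointwise bound $(|a|+|b|)^2 \le 2(a^2+b^2)$; these are equivalent, and your justification of the boundedness $|a(q)|,|b(q)|\le 1$ for the upper bound is slightly more explicit than the paper's.
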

\begin{proof}
    Fix a threshold $q$, we write $\Delta_1(q)= \E_J[(v - y)\ind{v < q}]$ and $\Delta_2(q) = \E_J[(v - y)\ind{v \ge q}]$. The right inequality follows from the fact that $\Delta_1, \Delta_2\in [-1, 1]$. 

    Using Jensen's inequality, we get the left inequality:
    \begin{align*}
        \frac{1}{2}\atbone(J)^2 = 2\left(\expect{q}{\frac{1}{2}|\Delta_1(q)| + \frac{1}{2}|\Delta_2(q)|}\right)^2\leq 2\expect{q}{\frac{1}{2}\Delta_1(q)^2 + \frac{1}{2}\Delta_2(q)^2} = \atb(J).
    \end{align*}
\end{proof}

\subsection{Completeness and Soundness}

The completeness and soundness of $\atb$ follows from the quadratic approximation of existing calibration measure Distance to Calibration. 
\begin{theorem}
\label{thm: complete sound}
     Both $\atb$ and $\atbone$ are complete and sound. 
\end{theorem}

Appendix \ref{sec:atbone} will prove that $\atbone$ is a constant approximation of Distance to Calibration. Combined with \Cref{lem: atb 1-atb}, $\atb$ is quadratically related to Distance to Calibration, which implies \Cref{thm: complete sound}.

\subsubsection{Approximating the Distance to Calibration Using Two Bins}
\label[appendix]{sec:atbone}


In this section, we show that $\atbone$ is a constant-factor approximation of both $\smooth$ and $\distcal$ (recall \Cref{prop:dist-smooth} that $\smooth$ and $\distcal$ are constant-factor approximations to each other):
\begin{theorem}
\label{thm:atbone}
    For any distribution $J$ of prediction-state pairs $(v,y)\in [0,1]\times \{0,1\}$, we have
    \[
    \frac 23\, \smooth (J) \le \atbone (J) \le 3\, \distcal (J).
    \]
\end{theorem}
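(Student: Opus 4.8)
The plan is to prove the two inequalities in Theorem~\ref{thm:atbone} separately: the upper bound $\atbone(J) \le 3\,\distcal(J)$ and the lower bound $\smooth(J) \le \frac 32\,\atbone(J)$, which is equivalent to $\frac 23\,\smooth(J) \le \atbone(J)$.

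For the upper bound $\atbone(J) \le 3\,\distcal(J)$, I would first observe that $\atbone$ is a continuous functional of the prediction-state distribution (Theorem~\ref{thm:continuity}), so it suffices to control $\atbone$ on a calibrated distribution and then transport along an optimal coupling. Concretely, let $\Pi$ be a coupling of $(u,v,y)$ witnessing $\distcal(J)$, where $(v,y)\sim J$ and $(u,y)$ is calibrated. On the calibrated distribution $(u,y)$, for any fixed threshold $q$ we have $\E[(u-y)\ind{u < q}] = \E[(u - \E[y\mid u])\ind{u<q}] = 0$ and likewise for $u \ge q$, so $\atbone$ of the calibrated distribution is $0$. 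Applying the continuity bound \eqref{eq:continuity-goal-1} between the distribution of $(v,y)$ and the distribution of $(u,y)$ gives $\atbone(J) \le 3\,\E_\Pi|u - v| = 3\,\distcal(J)$. This direction is essentially immediate once continuity is in hand.

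The harder direction is $\smooth(J) \le \frac 32\,\atbone(J)$, which is the content of the technical overview in Section~\ref{sec:tech-atbone}. Fix a $1$-Lipschitz $w:[0,1]\to[-1,1]$ achieving (nearly) the supremum in $\smooth$. The plan is to write $w$ as a signed combination of threshold functions: for $\lambda\sim\mathrm{Unif}([-1,1])$ set $w_\lambda(r) = \sign(w(r)-\lambda)$, so that $w(r) = \E_\lambda[w_\lambda(r)]$ pointwise. When $w$ is monotone increasing and differentiable, $w_\lambda(r) = \sign(r - q)$ for $q = w^{(-1)}(\lambda)$, and the pushforward law $Q$ of $q$ has density at most $1/2$ on $(0,1)$ by $1$-Lipschitzness of $w$; hence
\[
\E_{(v,y)\sim J}[(v-y)w(v)] = \E_{q\sim Q}\Big[\E_J[(v-y)\sign(v-q)]\Big] \le \E_{q\sim Q}\Big[\big|\E_J[(v-y)\ind{v\ge q}]\big| + \big|\E_J[(v-y)\ind{v<q}]\big|\Big],
\]
and comparing the density of $Q$ against $\mathrm{Unif}([0,1])$, together with handling the boundary atoms at $q=0,1$, bounds this by $\frac 32\,\atbone(J)$. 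To remove monotonicity I would allow the threshold representation to have signed coefficients whose total variation is still controlled by the Lipschitz constant of $w$, and to remove differentiability I would restrict attention to the finitely many values $v$ takes under $J$ (or the support points $r_1,\dots,r_T$ in the sequence formulation), replacing the density argument by a counting/telescoping argument over consecutive support points.

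The main obstacle is this last step: making the "write $w$ as a combination of thresholds'' argument rigorous without the monotonicity and smoothness assumptions, and in particular getting the constant $\frac 32$ (rather than something worse) while correctly accounting for the two boundary thresholds $q=0$ and $q=1$ — the $\atbone$ integral weights $\mathrm{Unif}([0,1])$ puts zero mass on these points, whereas the threshold construction can place atoms there when $\lambda$ exceeds $w(1)$ or falls below $w(0)$. Once \eqref{eq:tech-2-2} (i.e.\ $\smooth \le \frac 32\,\atbone$) and $\atbone \le 3\,\distcal$ are established, the chain $\frac 23\,\smooth(J)\le\atbone(J)\le 3\,\distcal(J)$ follows, and combined with Proposition~\ref{prop:dist-smooth} ($\frac 12\distcal \le \smooth \le 2\distcal$) one also recovers the full constant-factor sandwich between $\atbone$ and each of $\smooth$, $\distcal$.
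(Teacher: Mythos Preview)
Your proposal is correct and follows essentially the same approach as the paper: the upper bound via continuity (\Cref{thm:continuity}) applied to a calibrated coupling is exactly \Cref{lm:atb-distcal}, and your threshold-decomposition plan for the lower bound is the discrete telescoping argument of \Cref{thm:smooth-atb}, including the handling of the two boundary terms that produces the constant $3/2$. One point you leave implicit that the paper makes explicit: since $J$ need not have finite support, the paper proves $\smooth \le \tfrac32\,\atbone$ first for finite sequences (\Cref{thm:smooth-atb}) and then passes to general $J$ by sampling $T$ i.i.d.\ points and letting $T\to\infty$, using the sample-complexity bounds \Cref{thm:sample} and \Cref{prop:smooth-sample} to justify the limit.
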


Combining \Cref{thm:atbone} with  \Cref{prop:dist-smooth} and \Cref{lem: atb 1-atb}, we have the following corollary about the relationship between $\atb,\atbone$ and $\smooth,\distcal$:
\begin{corollary}
\label[corollary]{cor:relationship}
    For any distribution $J$ of prediction-state pairs $(v,y)\in [0,1]\times \{0,1\}$, we have
    \begin{align*}
    \frac 13\, \distcal(J) \le \frac 23\, \smooth (J) & \le \atbone (J) \le 3\, \distcal (J) \le 6\, \smooth(J),\\
    \frac 1{18}\, \distcal(J)^2 \le \frac 29\, \smooth (J)^2 & \le \atb (J) \le 3\, \distcal (J) \le 6\, \smooth(J).
    \end{align*}
\end{corollary}

We prove the two inequalities in \Cref{thm:atbone} in two separate lemmas below. We start with the easier one showing the upper bound on $\atbone$:

\begin{lemma}
\label[lemma]{lm:atb-distcal}
   For any distribution $J$ of $(v,y)\in [0,1]\times \{0,1\}$,
    \[
    \atbone(J) \le 3\, \distcal(J).
    \]
\end{lemma}
\begin{proof}
    Let $\Pi$ be an arbitrary distribution of $(u,v,y)\in [0,1]\times [0,1]\times \{0,1\}$, where the distribution of $(v,y)$ is $J$, and the distribution of $(u,y)$ (denoted by $\widehat J$) is calibrated. Since $\widehat J$ is calibrated, we have
    \[
    \atbone(\widehat J) = 0.
    \]
    By \Cref{thm:continuity}, 
    \[
    \atbone(J) = \atbone(J) - \atbone(\widehat J) \le 3 \E_\Pi |u - v|.
    \]
    The lemma is proved by taking the infimum over $\Pi$.
\end{proof}


Now we prove the other inequality in \Cref{thm:atbone} showing the lower bound on $\atbone$. It turns out to be convenient to first focus on the setting with $T$ fixed individuals:
\begin{lemma}
\label[lemma]{thm:smooth-atb}
For any prediction sequence $\vreport\in [0,1]^T$ and any state sequence $\vstate\in \{0,1\}^T$, we have
\begin{equation*}
\smooth(\vreport,\vstate) \le \frac 32 \cdot \atbone(\vreport,\vstate).
\end{equation*}
\end{lemma}


\begin{proof}[Proof of \Cref{thm:smooth-atb}]
It suffices to prove that for any $1$-Lipschitz function $w:[0,1]\to [-1,1]$,
\begin{equation}
\label{eq:atb-smooth}
\frac 1T \sum_{t = 1}^T (\report_t - \state_t) w(\report_t) \le \frac 32\cdot \atbone(\vreport,\vstate).
\end{equation}

Assume without loss of generality that the predictions are sorted: $\report_1 \le \cdots \le \report_T$. Define $w(\report_0) = 0, w(\report_{T + 1}) = 0$. For $t = 0,\ldots,T$, define $\Delta_t:= w(\report_{t+1}) - w(\report_{t})$. We have
\begin{align*}
w(\report_t) = \frac 12 ((w(\report_t) - w(\report_0)) - (w(\report_{T+1}) - w(\report_t))) & = \frac 12 \left(\sum_{s < t} \Delta_{s} - \sum_{s \ge t}\Delta_{s}\right)\\
& = \frac 12 \sum_{s = 0}^{T}\Delta_{s}\sign(t - s),
\end{align*}
where $\sign(u) = 1$ if $u > 0$, and $\sign(u) = -1$ if $u \le 0$.
Therefore,
\begin{equation}
\label{eq:atb-1}
\frac 1T \sum_{t = 1}^T (\report_t - \state_t) w(\report_t) = \frac 1{2T}\sum_{s = 0}^{T}\sum_{t = 1}^T(\report_t - \state_t)\Delta_{s}\sign(t - s).
\end{equation}
For $s = 1,\ldots,T - 1$, by the Lipschitzness of $w$, we have $|\Delta_{s}| \le \report_{s + 1} - \report_{s}$. Therefore,
\begin{align*}
& \left|\frac 1T \sum_{t = 1}^T(\report_t - \state_t)\Delta_{s}\sign(t-s)\right|\\
\le {} & (\report_{s+1} - \report_{s})\left| \frac 1T\sum_{t = 1}^T(\report_t - \state_t)\sign(t - s)\right|\\
\le {} & (\report_{s+1} - \report_{s}) \cdot  \frac 1T \left (\left |\sum_{t \le s}(\report_t - \state_t)\right| + \left |\sum_{t > s}(\report_t - \state_t)\right| \right )\\
= {} & \E_{q\sim \mathrm{Unif}([0,1])}\left[
\mathbb I_{q\in[\report_s, \report_{s + 1}]}\cdot \frac 1T\left(\left|\sum_{t: \report_t < q} (\report_t - \state_t)\right| + \left|\sum_{t: \report_t \ge q}(\report_t - \state_t)\right|\right)
\right].
\end{align*}
Summing up over $s = 1,\ldots T -1$, we have
\begin{equation}
\label{eq:atbone-sm-1}
\sum_{s = 1}^{T - 1}\left|\frac 1T \sum_{t = 1}^T(\report_t - \state_t)\Delta_{s}\sign(t-s)\right| \le \atbone(\vreport,\vstate).
\end{equation}
Moreover, since $w(r_1),w(r_T)\in [-1,1]$, we have $|\Delta_0|,|\Delta_T|\le 1$. Therefore,
\begin{align}
\left|\frac 1T \sum_{t = 1}^T(\report_t - \state_t)\Delta_{0}\sign(t-0)\right| &= |\Delta_0| \cdot \left|\frac 1T \sum_{t = 1}^T(\report_t - \state_t)\right|  \le \atbone(\vreport,\vstate), \label{eq:atbone-sm-2}\\
\left|\frac 1T \sum_{t = 1}^T(\report_t - \state_t)\Delta_{T}\sign(t-T)\right| &= |\Delta_T| \cdot \left|\frac 1T \sum_{t = 1}^T(\report_t - \state_t)\right| \le \atbone(\vreport,\vstate).\label{eq:atbone-sm-3}
\end{align}
Adding up the three inequalities \eqref{eq:atbone-sm-1} \eqref{eq:atbone-sm-2} \eqref{eq:atbone-sm-3} above, we get
\[
\sum_{s = 0}^{T }\left|\frac 1T \sum_{t = 1}^T(\report_t - \state_t)\Delta_{s}\sign(t-s)\right| \le 3\, \atbone(\vreport,\vstate).
\]
Combining this with \eqref{eq:atb-1} using the triangle inequality, we get \eqref{eq:atb-smooth}, as desired.
\end{proof}
\begin{proof}[Proof of \Cref{thm:atbone}]
    The upper bound on $\atbone$ has been proved in \Cref{lm:atb-distcal}. It remains to establish the lower bound on $\atbone$:
    \begin{equation}
    \label{eq:atbone-proof-1}
    \smooth(J) \le \frac 32\cdot \atbone(J).
    \end{equation}
    Consider a sample $S$ of $T$ i.i.d.\ points $(v_1,y_1),\ldots,(v_T,y_T)$ from $J$. Defining $\vreport:= (v_1,\ldots,v_T)$ and $\vstate:= (y_1,\ldots,y_T)$, we have
    \begin{align*}
    \smooth(J_S) & = \smooth(J_{\vreport,\vstate}) = \smooth(\vreport,\vstate),\\
    \atbone(J_S) & = \atbone(J_{\vreport,\vstate}) = \atbone(\vreport,\vstate),
    \end{align*}
where we use the $J_{\vreport,\vstate}$ notation from \Cref{def: induced error}. By \Cref{thm:smooth-atb},
\begin{equation}
\label{eq:atbone-proof-2}
    \smooth(J_S) = \smooth (\vreport,\vstate) \le \frac 32 \cdot \atbone (\vreport,\vstate) = \frac 32 \cdot \atbone(J_S).
\end{equation}
Taking $T \to \infty$, by \Cref{thm:sample} and \Cref{prop:smooth-sample}, we know that $\smooth(J_S)$ converges in probability to $\smooth(J)$, and that $\atbone(J_S)$ converges in probability to $\atbone(J)$. Therefore, our goal \eqref{eq:atbone-proof-1} follows from \eqref{eq:atbone-proof-2}.
\end{proof}




\subsection{(Strict) Truthfulness} 
From its definition \eqref{eq:atb-sequence}, $\atb$ is clearly a special case of UBSE (\Cref{def:ubse}), so its truthfulness follows immediately from \Cref{thm:ubse} and \Cref{thm:strict}.
\begin{theorem}[Truthfulness]
\label{thm: atb truthful}
    The calibration measure $\atb$ is interim truthful (\Cref{def:truthful-sequence}). Moreover, $\atb$ inherits the error decomposition:
\begin{equation*}
    \E_{\vstate \sim \vpred}[\atb(\vreport,\vstate)] = \atb(\vreport,\vpred)  + \frac 1{T^2}\sum_{t = 1}^T p_t(1 - p_t).
\end{equation*}
Moreover, $\atb$ is strictly ex-ante truthful. 
\end{theorem}

\subsection{Continuity} The following theorem establishes the continuity of $\atb$ and $\atbone$ with a general formalization. Both errors change continuously as the predictions change.
\begin{theorem}[Continuity]
\label{thm:continuity}
Let $\Pi$ be a joint distribution of $(v_1,v_2,y)\in [0,1]\times [0,1]\times\{0,1\}$. Let $J_1$ denote the distribution of $(v_1,y)$, and let $J_2$ denote the distribution of $(v_2,y)$. We have
\begin{align}
|\atbone(J_1) - \atbone(J_2)| & \le 3\, \E_\Pi|v_1 - v_2|.\label{eq:continuity-goal-1}\\
|\atb(J_1) - \atb(J_2)| & \le 6\, \E_\Pi|v_1 - v_2|.\label{eq:continuity-goal-2}
\end{align}
\end{theorem}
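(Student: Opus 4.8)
The plan is to reduce both inequalities to a single pointwise estimate over the random threshold. Fix $q\in[0,1]$ and for $j\in\{1,2\}$ write
\[
\Delta^{(j)}_1(q):=\E_{J_j}\big[(v_j-y)\ind{v_j<q}\big],\qquad \Delta^{(j)}_2(q):=\E_{J_j}\big[(v_j-y)\ind{v_j\ge q}\big],
\]
so that $\atbone(J_j)=\E_q\big[|\Delta^{(j)}_1(q)|+|\Delta^{(j)}_2(q)|\big]$ and $\atb(J_j)=\E_q\big[\Delta^{(j)}_1(q)^2+\Delta^{(j)}_2(q)^2\big]$. Since $(v_j-y)\ind{\cdot}\in[-1,1]$, each $\Delta^{(j)}_i(q)\in[-1,1]$. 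Using $\big||x|-|y|\big|\le|x-y|$ for $\atbone$, and $|x^2-y^2|=|x-y|\,|x+y|\le 2|x-y|$ (valid because $|\Delta^{(j)}_i|\le 1$) for $\atb$, together with the triangle inequality inside $\E_q$, it suffices to prove
\[
\E_{q\sim\mathrm{Unif}([0,1])}\Big[\big|\Delta^{(1)}_1(q)-\Delta^{(2)}_1(q)\big|+\big|\Delta^{(1)}_2(q)-\Delta^{(2)}_2(q)\big|\Big]\ \le\ 3\,\E_\Pi|v_1-v_2|.
\]
This immediately gives \eqref{eq:continuity-goal-1}, and multiplying by $2$ gives \eqref{eq:continuity-goal-2}.

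Next I would push the absolute values inside the expectation over $\Pi$. Because $J_1$ and $J_2$ are the marginals of $\Pi$ on $(v_1,y)$ and $(v_2,y)$, we have $\Delta^{(1)}_1(q)-\Delta^{(2)}_1(q)=\E_\Pi\big[(v_1-y)\ind{v_1<q}-(v_2-y)\ind{v_2<q}\big]$, and likewise for the $\Delta_2$ difference with $\ind{\cdot\ge q}$. By Jensen's inequality (convexity of $|\cdot|$) and Fubini, it then suffices to prove the pointwise bound: for every fixed $(v_1,v_2,y)\in[0,1]\times[0,1]\times\{0,1\}$,
\[
\E_q\Big[\big|(v_1-y)\ind{v_1<q}-(v_2-y)\ind{v_2<q}\big|+\big|(v_1-y)\ind{v_1\ge q}-(v_2-y)\ind{v_2\ge q}\big|\Big]\ \le\ 3|v_1-v_2|.
\]

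To prove this, assume without loss of generality $v_1\le v_2$ (both sides are symmetric in $(v_1,v_2)$) and set $a:=v_1-y$, $b:=v_2-y$, so $b-a=v_2-v_1\ge 0$ and $a,b\in[-1,1]$. Split $[0,1]$ into $[0,v_1]$, $(v_1,v_2]$, $(v_2,1]$ and evaluate the integrand on each piece: on $[0,v_1]$ both thresholds place $v_1,v_2$ in the upper bin, so the two absolute values are $0$ and $|a-b|=b-a$; on $(v_1,v_2]$ exactly one of $v_1,v_2$ changes bin, giving $|a|$ and $|b|$; on $(v_2,1]$ both are in the lower bin, giving $b-a$ and $0$. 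Integrating over $q$ (the boundary points $q=v_1,v_2$ have measure zero) yields
\[
v_1(b-a)+(v_2-v_1)\big(|a|+|b|\big)+(1-v_2)(b-a)=(v_2-v_1)\big(1-(v_2-v_1)+|a|+|b|\big).
\]
Since $0\le v_2-v_1\le 1$ and $|a|,|b|\le 1$, the bracket is at most $3$, so the left-hand side is at most $3(v_2-v_1)=3|v_1-v_2|$, which completes the argument.

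The main obstacle is getting the sharp constant $3$ (hence $6$ for $\atb$): bounding the two bins' contributions separately — first the $\Delta_1$ difference, then the $\Delta_2$ difference — only gives $2+2=4$, since each is bounded by $2|v_1-v_2|$ in isolation. The saving comes from adding the two integrals \emph{before} taking worst cases, so that the outer contributions combine into $(b-a)\big(1-(v_2-v_1)\big)\le v_2-v_1$ rather than $2(v_2-v_1)$, exploiting that the lengths $v_1$ and $1-v_2$ of the two outer intervals sum to at most $1$. The remaining points requiring care are purely bookkeeping: the measure-zero behaviour of the indicators at the thresholds, and the symmetry used in the $v_1\le v_2$ reduction.
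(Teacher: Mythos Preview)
Your proof is correct and complete. The paper's argument is organized a little differently: it introduces an intermediate quantity
\[
\kappa=\E_{q}\Big[\big|\E_\Pi[(v_2-y)\ind{v_1<q}]\big|+\big|\E_\Pi[(v_2-y)\ind{v_1\ge q}]\big|\Big],
\]
and bounds $|\atbone(J_1)-\kappa|\le \E_\Pi|v_1-v_2|$ (change the factor $v_1-y$ to $v_2-y$, keeping the indicators on $v_1$) and $|\kappa-\atbone(J_2)|\le 2\,\E_\Pi|v_1-v_2|$ (change the indicators, using $|v_2-y|\le 1$ together with $\E_q|\ind{v_1<q}-\ind{v_2<q}|=|v_1-v_2|$). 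You instead push the absolute value all the way inside to a pointwise estimate over a fixed triple $(v_1,v_2,y)$ and compute the $q$-integral explicitly by casework on the three intervals $[0,v_1]$, $(v_1,v_2]$, $(v_2,1]$. The two routes are equivalent in substance---your three contributions regroup exactly into the paper's ``$1+2$'' split---but your presentation is more self-contained and makes the origin of the constant $3$ visible in a single formula; your observation that treating the two bins separately would only yield $4$ is a nice diagnostic that the paper does not spell out. For $\atb$, both arguments invoke the same $2$-Lipschitzness of $u\mapsto u^2$ on $[-1,1]$.
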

\begin{proof}
    By \Cref{def: atbone}, we have
    \begin{align*}
    \atbone(J_1) & = \E_{q\sim \mathrm{Unif}([0,1])}\Big[\Big|\E_\Pi\Big[\big(v_1 - y\big)\ind{v_1 < q}\Big]\Big| + \Big|\E_\Pi\Big[\big(v_1 - y\big)\ind{v_1 \ge q}\Big]\Big| \Big],\\
    \atbone(J_2) & = \E_{q\sim \mathrm{Unif}([0,1])}\Big[\Big|\E_\Pi\Big[\big(v_2 - y\big)\ind{v_2 < q}\Big]\Big| + \Big|\E_\Pi\Big[\big(v_2 - y\big)\ind{v_2 \ge q}\Big]\Big| \Big].
    \end{align*}
    We define an intermediate quantity
    \[
    \kappa:= \E_{q\sim \mathrm{Unif}([0,1])}\Big[\Big|\E_\Pi\Big[\big(v_2 - y\big)\ind{v_1 < q}\Big]\Big| + \Big|\E_\Pi\Big[\big(v_2 - y\big)\ind{v_1 \ge q}\Big]\Big| \Big].
    \]
    By the triangle inequality,
    \begin{align}
    |\atbone(J_1) - \kappa| & \le \E_{q\sim \mathrm{Unif}([0,1])}\Big[\E_\Pi\Big[\big|v_1 - v_2\big|\ind{v_1 < q}\Big] + \E_\Pi\Big[\big|v_1 - v_2\big|\ind{v_1 \ge q}\Big] \Big]\notag\\
    & = \E_\Pi|v_1 - v_2|.\label{eq:continuity-1}
    \end{align}
    Similarly, noting that $|v_2 - y| \le 1$, we have
    \begin{align}
    |\atbone(J_2) - \kappa| & \le \E_{q\sim \mathrm{Unif}([0,1])}\Big[2\,\E_\Pi\Big|\ind{v_1 < q} - \ind{v_2 < q}\Big| \Big]\notag \\
    & = 2\E_\Pi\Big[\E_{q\sim \mathrm{Unif}([0,1])}\Big|\ind{v_1 < q} - \ind{v_2 < q}\Big|\Big] \notag \\
    & = 2\E_\Pi|v_1 - v_2|.\label{eq:continuity-2}
    \end{align}
Summing up \eqref{eq:continuity-1} and \eqref{eq:continuity-2} proves \eqref{eq:continuity-goal-1}. A similar strategy proves \eqref{eq:continuity-goal-2}, using one extra observation: the function $u^2$ is $2$-Lipshitz for $u\in [-1,1]$. We omit the details.
    \end{proof}

\subsection{Sample Complexity} Both $\atb$ and $\atbone$ can be estimated within error $\varepsilon$ using $O(1/\varepsilon^2)$ i.i.d.\ examples:
\begin{theorem}[Sample complexity]
\label{thm:sample}
Let $J$ be any distribution of prediction-state pairs $(v,y)\in [0,1]\times \{0,1\}$, and let $S$ be a sample of $T$ i.i.d.\ points $(v_1,y_1),\ldots,(v_T,y_T)$ from $J$.
For $\varepsilon,\delta\in (0,1/3)$, assume $T > C\varepsilon^{-2}\log(1/\delta)$ for a sufficiently large absolute constant $C > 0$. With probability at least $1-\delta$ (over the randomness in the sample $S$),
\begin{align*}
|\atbone(J_S) - \atbone(J))| \le \varepsilon,\\
|\atb(J_S) - \atb(J)| \le \varepsilon.
\end{align*}
\end{theorem}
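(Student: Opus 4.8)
The plan is to reduce both estimates to controlling a single $q$-averaged deviation, and then to apply a second-moment bound together with McDiarmid's inequality. For a threshold $q\in[0,1]$ write $\Delta_1(q):=\E_J[(v-y)\ind{v<q}]$ and $\Delta_2(q):=\E_J[(v-y)\ind{v\ge q}]$, and let
\[
\hat\Delta_1(q):=\frac1T\sum_{t=1}^{T}(v_t-y_t)\ind{v_t<q},\qquad \hat\Delta_2(q):=\frac1T\sum_{t=1}^{T}(v_t-y_t)\ind{v_t\ge q}
\]
be the same quantities under the empirical distribution $J_S$, i.e.\ averages of $T$ i.i.d.\ terms lying in $[-1,1]$ with means $\Delta_1(q)$ and $\Delta_2(q)$ respectively. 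By \Cref{def: atbone}, $\atbone(J)=\E_{q\sim\mathrm{Unif}([0,1])}[\,|\Delta_1(q)|+|\Delta_2(q)|\,]$, $\atbone(J_S)=\E_{q}[\,|\hat\Delta_1(q)|+|\hat\Delta_2(q)|\,]$, and similarly for $\atb$ with squares replacing absolute values. Since all of $\Delta_j(q),\hat\Delta_j(q)$ lie in $[-1,1]$, the triangle inequality (for $\atbone$) and the elementary bound $|a^2-b^2|\le 2|a-b|$ valid on $[-1,1]$ (for $\atb$) yield $|\atbone(J_S)-\atbone(J)|\le Z$ and $|\atb(J_S)-\atb(J)|\le 2Z$, where $Z:=\E_{q\sim\mathrm{Unif}([0,1])}[\,|\hat\Delta_1(q)-\Delta_1(q)|+|\hat\Delta_2(q)-\Delta_2(q)|\,]$. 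So it suffices to show that $Z=O(\sqrt{\log(1/\delta)/T})$ with probability at least $1-\delta$, which is at most $\varepsilon/2$ once $T>C\varepsilon^{-2}\log(1/\delta)$ for a large enough absolute constant $C$.

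I would first bound $\E[Z]$ (over the random sample $S\sim J^T$) by a variance computation. For each fixed $q$, $\mathrm{Var}[\hat\Delta_1(q)]=\tfrac1T\mathrm{Var}_J[(v-y)\ind{v<q}]\le\tfrac1T\Pr_J[v<q]$ since $(v-y)^2\le1$, and likewise $\mathrm{Var}[\hat\Delta_2(q)]\le\tfrac1T\Pr_J[v\ge q]$; by Jensen's inequality, $\E|\hat\Delta_j(q)-\Delta_j(q)|\le\sqrt{\mathrm{Var}[\hat\Delta_j(q)]}$. Adding the two bounds and using $\sqrt a+\sqrt b\le\sqrt2$ whenever $a+b=1$ gives $\E[\,|\hat\Delta_1(q)-\Delta_1(q)|+|\hat\Delta_2(q)-\Delta_2(q)|\,]\le\sqrt{2/T}$ for every $q$; integrating over $q$ (Fubini) yields $\E[Z]\le\sqrt{2/T}$.

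I would then upgrade this to a high-probability bound via McDiarmid's bounded-differences inequality, viewing $Z$ as a function of the $T$ i.i.d.\ sample points. Replacing a single sample point changes $(v_t-y_t)\ind{v_t<q}$ by at most $2$ for every $q$, hence changes each of $\hat\Delta_1(q),\hat\Delta_2(q)$ by at most $2/T$ uniformly in $q$, hence changes $Z$ by at most $4/T$. McDiarmid then gives $\Pr[Z\ge\E[Z]+s]\le\exp(-s^2T/8)$, so with probability at least $1-\delta$ we obtain $Z\le\sqrt{2/T}+\sqrt{8\log(1/\delta)/T}$. Using $\delta<1/3$ (so $\log(1/\delta)>1$), this is $O(\sqrt{\log(1/\delta)/T})\le\varepsilon/2$ once $T>C\varepsilon^{-2}\log(1/\delta)$ with $C$ a sufficiently large absolute constant, and then $|\atbone(J_S)-\atbone(J)|\le Z\le\varepsilon$ and $|\atb(J_S)-\atb(J)|\le 2Z\le\varepsilon$ (inflating $C$ by a constant to absorb the factor $2$). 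The hypotheses $\varepsilon,\delta<1/3$ serve only to fold lower-order terms into $C$.

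The step I expect to be the crux is the very first reduction: recognizing that we do \emph{not} need uniform convergence of $\hat\Delta_j(q)$ to $\Delta_j(q)$ over all thresholds $q$ simultaneously (which would invite a VC-dimension or covering-number argument for the class of threshold functions $\{\ind{v<q}\}$), but only the $q$-\emph{averaged} $L^1$ deviation $Z$, because $\atb$ and $\atbone$ are themselves expectations over $q$. Once this is in place, the remaining steps are a one-line second-moment estimate and a textbook McDiarmid application, with no empirical-process machinery. A minor simplification worth noting is that $\hat\Delta_2(q)=\hat\mu-\hat\Delta_1(q)$ with $\hat\mu:=\tfrac1T\sum_{t=1}^{T}(v_t-y_t)$ (and $\Delta_2(q)=\mu-\Delta_1(q)$ with $\mu:=\E_J[v-y]$), so one may instead control only the ``$v<q$'' side together with a single Hoeffding bound on $|\hat\mu-\mu|$, achieving the same $O(\sqrt{\log(1/\delta)/T})$ rate.
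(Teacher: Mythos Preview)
Your proof is correct and takes a genuinely different route from the paper. The paper proves the stronger \emph{uniform} statement that with probability at least $1-\delta$, $\sup_{q\in[0,1]}|\hat\Delta_j(q)-\Delta_j(q)|\le \varepsilon/4$ for $j=1,2$, via the standard empirical-process pipeline: the threshold class $\{\ind{v<q}\}_{q\in[0,1]}$ has VC dimension $1$, hence Rademacher complexity $O(1/\sqrt T)$, and the contraction lemma (multiplying by the $1$-Lipschitz factor $v_t-y_t$) transfers this bound to the class $\{(v-y)\ind{v<q}\}$; uniform convergence then follows from \Cref{prop:uni-conv}. Your argument instead exploits the specific structure of $\atb$ and $\atbone$ as \emph{averages} over $q$: you reduce everything to the $q$-averaged $L^1$ deviation $Z$, bound $\E[Z]$ by a one-line variance computation (using $\mathrm{Var}[\hat\Delta_1(q)]+\mathrm{Var}[\hat\Delta_2(q)]\le 1/T$ and Cauchy--Schwarz), and then concentrate $Z$ via McDiarmid with bounded differences $4/T$. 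This avoids all empirical-process machinery (no VC dimension, no Rademacher complexity, no contraction), at the cost of not yielding the uniform-in-$q$ bound the paper obtains as a byproduct. Both approaches give the same $O(\sqrt{\log(1/\delta)/T})$ rate; yours is more elementary and more tightly tailored to the actual quantities $\atb,\atbone$, while the paper's is more modular and reuses tools already stated in the appendix.
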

\begin{proof}
    It suffices to show that with probability at least $1-\delta$, for every $q\in [0,1]$,
    \begin{align*}
    \Big| \E_{J_S} \Big[\big(v - y\big)\ind{v < q}\Big] - \E_{J} \Big[\big(v - y\big)\ind{v < q}\Big]  \Big| & \le \varepsilon/4,\quad \text{and}\\
    \Big| \E_{J_S} \Big[\big(v - y\big)\ind{v \ge q}\Big] - \E_{J} \Big[\big(v - y\big)\ind{v \ge q}\Big]  \Big| & \le \varepsilon/4.
    \end{align*}
By \Cref{prop:uni-conv}, it suffices to prove the following Rademacher complexity bounds for the function families $F = \{f_q\}_{q\in [0,1]}$ and $G = \{g_q\}_{q\in [0,1]}$ where $f_q(v,y) = (v - y)\ind{v < q}$ and $g_q(v,y) = (v - y)\ind{v \ge q}$: for every $(v_1,y_1),\ldots,(v_T,y_T)\in [0,1]\times\{0,1\}$,
\begin{align}
    \calR(F;(v_1,y_1),\ldots,(v_T,y_T)) & \le O\left(\sqrt{\frac{1}{T}}\right), \quad \text{and} \label{eq:rade-1}\\
    \quad  \calR(G;(v_1,y_1),\ldots,(v_T,y_T)) & \le O\left(\sqrt{\frac{1}{T}}\right).\label{eq:rade-2}
\end{align}
Now consider the family $H = \{h_q\}_{q\in [0,1]}$ where $h_q(v,y) = \ind{v < q}$. Clearly, $H$ has VC dimension at most $1$. By \Cref{prop:rad-vc}, we have
\begin{equation}
\label{eq:rade-3}
\calR(H;(v_1,y_1),\ldots,(v_T,y_T)) \le O\left(\sqrt{\frac{1}{T}}\right).
\end{equation}
Observe that $f_q(v_i,y_i) = \eta_i(h_q(v_i,y_i))$ for every $i = 1,\ldots,T$ and $q\in [0,1]$, where $\eta_i$ is the uni-variate function $\eta_i(u) = (v_i - y_i)u$ for $u\in \R$. Since $|v_i - y_i| \le 1$, the function $\eta_i$ is $1$-Lipschitz. Therefore, by \Cref{prop:rad-lip}, inequality \eqref{eq:rade-1} follows from \eqref{eq:rade-3}. Inequality \eqref{eq:rade-2} can be proved similarly.
\end{proof}

We remark that an analogous sample complexity bound for $\smooth$ has been shown by \citet{utc} using a similar analysis: 
\begin{proposition}[\citet{utc}]
\label[proposition]{prop:smooth-sample}
Let $J$ be any distribution of prediction-state pairs $(v,y)\in [0,1]\times \{0,1\}$, and let $S$ be a sample of $T$ i.i.d.\ points $(v_1,y_1),\ldots,(v_T,y_T)$ from $J$.
For $\varepsilon,\delta\in (0,1/3)$, assume $T > C\varepsilon^{-2}\log(1/\delta)$ for a sufficiently large absolute constant $C > 0$. 
With probability at least $1-\delta$ (over the randomness in the sample $S$),
\begin{align*}
|\smooth(J_S) - \smooth(J)| \le \varepsilon. 
\end{align*}
\end{proposition}

\subsection{Computational Efficiency}
\label{sec:compute}
As we show in the following theorem, $\atb$ can be computed and approximated efficiently.
\begin{theorem}
\label{thm:compute}
    Given $\vreport,\vpred\in [0,1]^T$, we can compute $\atb(\vreport,\vpred)$  in time $O(T\log T)$. We can also approximate $\atb(\vreport,\vpred)$ up to arbitrary additive error $\varepsilon > 0$ in time $O(T + 1/\varepsilon)$.
\end{theorem}
The algorithm we use to prove \Cref{thm:compute} is extremely easy to describe and implement. Define 
\[
\Delta_1(q)= \frac{1}{T}\left|\sum_{t: \report_t < q} (\report_t - \pred_t)\right| \quad \text{and} \quad \Delta_2(q) = \frac{1}{T}\left|\sum_{t: \report_t \geq q} (\report_t - \pred_t)\right|.\]
The following algorithm computes $\atb$:
\begin{itemize}
    \item $O(T\log T)$ time: sort predictions in increasing order such that $\report_1\leq \report_2\leq \dots\leq \report_T$. Define $r_0 = 0$ and $r_{T + 1} = 1$.
    \item $O(T)$ time: for $q = r_1,\ldots,r_{T+1}$, calculate $\Delta_1(q)$ by scanning predictions in increasing order. 
    Similarly, calculate $\Delta_2(q)$ by scanning predictions in decreasing order. 
    \item $O(T)$ time: Calculate the expectation over threshold $q$: for $t= 1,\dots,T+1$, sum up $\Delta_1(\report_t)^2 + \Delta_2(\report_t)^2$ with weight $|\report_{t} - \report_{t - 1}|$. 
\end{itemize}
If we allow additive error $\varepsilon \in (0,1)$, by \Cref{thm:continuity}, we can round the predictions $r_1,\ldots,r_T$ to multiples of $\varepsilon/6$ and then compute $\atb$ exactly.
This makes all predictions $r_1,\ldots,r_T$ lie in a finite set $\{0,\varepsilon/6, 2\varepsilon/6,\ldots\}\cap [0,1]$ of size $O(1/\varepsilon)$, so the sorting step can be implemented in time $O(T + 1/\varepsilon)$ by bucket sort.

A similar algorithm computes $\atbone(\vreport,\vpred)$ in $O(T \log T)$ time, or approximates $\atbone(\vreport,\vpred)$ up to error $\varepsilon$ in $O(T + 1/\varepsilon)$ time. We note that currently known algorithms for computing $\smooth$ and $\distcal$ are much more complicated, with the best known running time being $O(T\log^2 T)$ and $O(T^2\log T)$, respectively, even when $O(1/\sqrt T)$ additive error is allowed \citep{hutesting}.


\section{Linear-Time Calibration Tester}
\label{sec:test}
In this section, we show that our $\atb$ and $\atbone$ are both optimally valid (\Cref{def: validity}) for $\smooth$ and $\distcal$. It is fairly straightforward to show that $\atbone$ is $O(1/\sqrt T)$-valid using its constant approximation to $\smooth$ (\Cref{thm:atbone}) and its sample complexity bound (\Cref{thm:sample}). In \Cref{thm: optimal validity} below,  we show that $\atb$ is $O(1/\sqrt T)$-valid as well, and that this is optimal up to constant.

These results imply faster algorithms for solving the calibration testing problem studied by \cite{hutesting}, which requires distinguishing, with large constant success probability, whether a distribution $J$ is perfectly calibrated or has $\distcal(J) > \varepsilon$ given i.i.d.\ data points drawn from $J$.
This can be solved by computing $\atb$ or $\atbone$ on $T = O(1/\varepsilon^2)$ data points and compare the result with the threshold $\beta_T$ in the definition of validity (\Cref{def: validity}). By \Cref{thm:compute}, the running time we need is $O(T\log T)$, which already improves the $O(T\log ^2 T)$ time bound of \cite{hutesting}. Moreover, our Lemmas~\ref{lm:test-cons} and \ref{lm:test-sound} show that it suffices to approximate $\atb$ up to additive error $1/(2T)$, which can be achieved in time $O(T)$ by \Cref{thm:compute}, giving the first linear-time algorithm for calibration testing.

\begin{theorem}
\label{thm: optimal validity}
The calibration measure $\atb$ is $O(\frac{1}{\sqrt T})$-valid w.r.t.\ the reference calibration error $\distcal$. That is, $\atb$ is $\{\gamma_T\}$-valid for some sequence $\gamma_1,\gamma_2,\ldots$  with $\gamma_T = O(1/\sqrt T)$. Moreover, this is optimal up to constant factors: if there exists a $\{\gamma_T\}$-valid calibration error w.r.t.\ $\distcal$, then $\gamma_T = \Omega(1/\sqrt T)$.
\end{theorem}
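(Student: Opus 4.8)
The theorem has two halves: an upper bound (ATB is $O(1/\sqrt T)$-valid) and a matching information-theoretic lower bound ($\gamma_T = \Omega(1/\sqrt T)$ for any valid measure).

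For the upper bound, the plan is to exhibit explicit thresholds $\beta_T$ witnessing validity. I would take $\beta_T = c/T$ for a suitably small constant $c$, motivated by \Cref{thm: atb truthful} (the error decomposition): when $J$ is calibrated, \Cref{thm: atb truthful} gives $\E_{\vstate\sim\vpred}[\atb(\vreport,\vstate)] = \atb(\vreport,\vpred) + \frac{1}{T^2}\sum_t p_t(1-p_t) \le \frac{1}{4T}$, since $\atb(\vreport,\vpred)=0$ when $\vreport=\vpred$ is calibrated w.r.t.\ itself; combining with a concentration/sample-complexity argument (\Cref{thm:sample}, or Markov on the empirical value), the empirical $\atb(J_S)$ stays below $\beta_T = c/T$ with probability bounded away from $0$. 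Conversely, if $\distcal(J)\ge \gamma_T$ with $\gamma_T = C/\sqrt T$ for a large enough $C$, then by \Cref{cor:relationship} we have $\atb(J) \ge \frac{1}{18}\distcal(J)^2 \ge \frac{C^2}{18 T}$, which exceeds $2\beta_T$ once $C$ is large; the sample-complexity bound \Cref{thm:sample} then forces $\atb(J_S)$ to exceed $\beta_T$ with probability bounded away from $0$ (in fact approaching $1$). Subtracting the two acceptance probabilities gives the required non-vanishing gap. The one technical subtlety is getting a \emph{two-sided} deviation bound good enough at the $1/T$ scale: $\atb$ is a $[0,1]$-bounded average of squared biases, and \Cref{thm:sample} controls the bias estimates uniformly to $\pm\varepsilon$, so choosing $\varepsilon \asymp 1/\sqrt T$ there translates to an $O(1/T)$ error on $\atb$ after squaring — this is exactly the regime we need, so I would route through \Cref{thm:sample} rather than reprove concentration.

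For the lower bound, the plan is a standard two-point (Le Cam) indistinguishability argument: I would construct, for each $T$, a calibrated distribution $J_0$ and a family of mis-calibrated distributions $J_1$ with $\distcal(J_1) = \Omega(1/\sqrt T)$ such that $J_0^{\otimes T}$ and $J_1^{\otimes T}$ have total variation distance bounded away from $1$. The natural choice is to put all prediction mass at a single point, say $v = 1/2$: let $J_0$ be $v=1/2$, $y\sim\mathrm{Ber}(1/2)$ (calibrated), and $J_1$ be $v=1/2$, $y\sim\mathrm{Ber}(1/2+\delta)$ with $\delta \asymp 1/\sqrt T$. Then $\distcal(J_1) = \delta = \Omega(1/\sqrt T)$ (moving the single atom to $1/2+\delta$ costs $\delta$ in earthmover distance, and this is optimal), while the $T$-fold product distributions differ only in $T$ i.i.d.\ coin flips with bias gap $\delta$, giving $\mathrm{TV}(J_0^{\otimes T}, J_1^{\otimes T}) = O(\delta\sqrt T) = O(1)$, bounded below $1$ for small enough constant. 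Any calibration test based on a sample of size $T$ — in particular the one in \Cref{def: calibration test} for \emph{any} calibration measure $\CAL$ — cannot separate $J_0$ from $J_1$ with a non-vanishing gap when their sample distributions are statistically indistinguishable, so no $\{\gamma_T\}$-valid measure with $\gamma_T = o(1/\sqrt T)$ can exist.

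I expect the main obstacle to be the lower-bound direction, specifically making the reduction from "indistinguishable product distributions" to "no valid calibration measure" fully rigorous given the quantifier structure in \Cref{def: validity}: validity allows the thresholds $\beta_T$ to depend on $T$ and only asks for a $\liminf$ gap, so I need to argue that for \emph{every} choice of thresholds, $\inf_{J\text{ calibrated}}\accp^\CAL(J;T,\beta_T) - \sup_{J:\distcal(J)\ge\gamma_T}\accp^\CAL(J;T,\beta_T) \le \accp^\CAL(J_0;T,\beta_T) - \accp^\CAL(J_1;T,\beta_T) \le \mathrm{TV}(J_0^{\otimes T},J_1^{\otimes T})$, and then take the constant in $\delta$ small enough that this TV distance is, say, below $1/2$ uniformly in $T$. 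The second subtlety is confirming $\distcal(J_1)=\delta$ exactly (or up to a constant) for the single-atom construction — this follows because any coupling to a calibrated distribution must move the conditional mean of $y$ given $v$ from $1/2+\delta$ back to the prediction value, and with all mass at one point the cheapest calibrated target is the atom at $1/2+\delta$ itself, which is at distance $0$; wait — more carefully, $J_1$ with a single atom at $v=1/2$ but $\E[y\mid v]=1/2+\delta$ is mis-calibrated, and the closest calibrated distribution shifts the atom to $1/2+\delta$, costing $\delta$, so $\distcal(J_1)=\delta$. I would state this as a short lemma. Everything else is routine bookkeeping with the results already established in the excerpt.
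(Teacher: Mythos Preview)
Your proposal is essentially correct and follows the same three-lemma structure as the paper: completeness via the error decomposition plus Markov's inequality (the paper uses threshold $\beta_T = 1/T$ and gets acceptance probability $\ge 3/4$), soundness via the relationship to $\distcal$ plus sample complexity, and the lower bound via a two-point Le Cam argument with a single atom at $v=1/2$ and biased Bernoulli labels.

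One technical point worth flagging in the soundness direction: your claim that controlling the bias estimates to $\pm\varepsilon$ with $\varepsilon \asymp 1/\sqrt T$ ``translates to an $O(1/T)$ error on $\atb$ after squaring'' is not correct as stated. If $\Delta$ and $\hat\Delta$ denote the population and empirical biases, then $|\hat\Delta^2 - \Delta^2| \le 2|\Delta|\varepsilon + \varepsilon^2$, so the error on $\atb$ is $O(\varepsilon\,\atbone(J) + \varepsilon^2)$, which is $O(1/T)$ only when $\atbone(J) = O(1/\sqrt T)$; when $\distcal(J)$ (hence $\atbone(J)$) is $\Omega(1)$ the error on $\atb$ is $O(1/\sqrt T)$. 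The conclusion $\atb(J_S) > 1/T$ still follows in that regime since $\atb(J) = \Omega(1)$, so your argument can be patched with a case split. The paper sidesteps this by routing through $\atbone$ instead: it uses \Cref{thm:sample} to get $\atbone(J_S) \ge \atbone(J) - C'/\sqrt T \ge (C/3 - C')/\sqrt T$, and only then squares via \Cref{lem: atb 1-atb} on the \emph{empirical} distribution to get $\atb(J_S) \ge \tfrac12(C/3 - C')^2/T$. This is cleaner because the $1/\sqrt T$ scale is the natural one for $\atbone$ throughout, and the squaring happens once at the end rather than requiring a scale-dependent error bound.

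For the lower bound, your argument is the paper's; the paper verifies $\distcal(J_1) \ge \gamma_T$ via the one-line observation that any calibrated coupling has $\E[u] = \E[y] = 1/2 + \gamma_T$, so $\E|u-v| \ge \E[u] - \E[v] = \gamma_T$, which is exactly the lemma you propose.
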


\Cref{thm: optimal validity} is an immediate corollary of the following Lemmas~\ref{lm:test-cons}, \ref{lm:test-sound}, and \ref{lm:test-lower}.

\begin{lemma}
\label[lemma]{lm:test-cons}
Let $J$ be an arbitrary distribution of prediction-state pairs $(v,y)\in [0,1]\times \{0,1\}$ and assume that $J$ is calibrated. For any $T\in \Z_{>0}$, consider a sample $S$ of $T$ i.i.d.\ points $(v_1,y_1),\ldots,(v_T,y_T)\in [0,1]\times \{0,1\}$ from $J$, and let $J_S$ be the uniform distribution over $S$. We have
\[
\Pr_{S\sim J^T}[\atb(J_S) \le 1/T] \ge 3/4.
\]
\end{lemma}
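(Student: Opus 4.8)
The plan is to reduce the statement to an expectation bound via Markov's inequality, and to obtain the expectation bound from the error decomposition of $\atb$ (\Cref{thm: atb truthful}). Write $S = \{(v_1,y_1),\ldots,(v_T,y_T)\}$, $\vreport := (v_1,\ldots,v_T)$, and $\vstate := (y_1,\ldots,y_T)$. The first step is the identification $\atb(J_S) = \atb(\vreport,\vstate)$, where the right-hand side is the sequence form in \eqref{eq:atb-sequence}; this is immediate by plugging the empirical distribution $J_S$ into \Cref{def: atbone}, since $\E_{J_S}[(v-y)\ind{v<q}] = \frac1T\sum_{t:v_t<q}(v_t-y_t)$ and similarly for the other bin.

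The second step is to condition on $\vreport$. Since $J$ is calibrated, conditioned on $\vreport$ each $y_t$ is drawn independently from the Bernoulli distribution with mean $\E_J[y\mid v=v_t] = v_t$; that is, $\vstate\sim\vpred$ with $\vpred = \vreport$ in the sense of \Cref{def:truthful-sequence}. Applying the error decomposition of \Cref{thm: atb truthful} with report vector $\vreport$ and ground-truth vector $\vpred=\vreport$, and using $\atb(\vreport,\vreport)=0$ (every bin bias $\sum_{t:v_t<q}(v_t-v_t)$ vanishes), we get
\[
\E_{\vstate\sim\vreport}\big[\atb(\vreport,\vstate)\,\big|\,\vreport\big] \;=\; \atb(\vreport,\vreport) + \frac1{T^2}\sum_{t=1}^T v_t(1-v_t) \;=\; \frac1{T^2}\sum_{t=1}^T v_t(1-v_t) \;\le\; \frac1{4T},
\]
where the last inequality uses $v_t(1-v_t)\le 1/4$. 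Taking expectation over $\vreport$ gives $\E_{S\sim J^T}[\atb(J_S)] \le 1/(4T)$.

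The third step is Markov's inequality: since $\atb(J_S)\ge 0$,
\[
\Pr_{S\sim J^T}\big[\atb(J_S) > 1/T\big] \;\le\; \frac{\E_{S\sim J^T}[\atb(J_S)]}{1/T} \;\le\; \frac{1/(4T)}{1/T} \;=\; \frac14,
\]
so $\Pr_{S\sim J^T}[\atb(J_S)\le 1/T] \ge 3/4$, as claimed. I do not expect any real obstacle here: the only point that warrants a line of care is the conditional-distribution claim (that calibration of $J$ makes the conditional mean of $y_t$ given $\vreport$ equal to $v_t$), after which everything follows from the already-established error decomposition and an elementary Markov bound.
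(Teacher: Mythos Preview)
Your proposal is correct and follows essentially the same approach as the paper: identify $\atb(J_S)=\atb(\vreport,\vstate)$, use calibration of $J$ to get $\vstate\sim\vreport$ conditionally, apply the error decomposition (\Cref{thm: atb truthful}) to bound the expectation by $1/(4T)$, and finish with Markov. The only cosmetic difference is that the paper applies Markov conditionally on $\vreport$ and then averages, whereas you average the expectation first and then apply Markov unconditionally; both orderings give the same $3/4$ bound.
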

\begin{proof}
Define $\vreport = (v_1,\ldots,v_T)$ and $\vstate = (y_1,\ldots,y_T)$. It is clear that the distribution $J_{\vreport,\vstate}$ (see \Cref{def: induced error}) is equal to the distribution $J_S$. Therefore,
\[
\atb(J_S) = \atb(\vreport,\vstate).
\]
Since $J$ is calibrated, we have $\E_J[y|v = v_t] = v_t$ for every $t = 1,\ldots,T$. Conditioned on $\vreport = (v_1,\ldots,v_T)$, each $y_t$ is independently distributed as the Bernoulli distribution with mean $v_t$. Thus, we have $\vstate\sim \vreport$ as in \Cref{def:truthful-sequence}. 
Therefore, 
\begin{equation}
\label{eq:test-cons-1}
\Pr_{S}[\atb(J_S) \le 1/T \ |\  v_1,\ldots,v_T] = \Pr_{\vstate\sim \vreport}[\atb(\vreport,\vstate) \le 1/T].
\end{equation}
By \Cref{lem: error decomposition},
\[
\E_{\vstate\sim \vreport}[\atb(\vreport,\vstate)] = \atb(\vreport,\vreport) + \frac 1{T^2}\sum_{t = 1}^Tp_t(1 - p_t) = \frac 1{T^2}\sum_{t = 1}^Tp_t(1 - p_t) \le \frac{1}{4T}.
\]
By Markov's inequality,
\[
\Pr_{\vstate\sim \vreport}[\atb(\vreport,\vstate)\le 1 / T] \ge 3/4.
\]
Plugging this into \eqref{eq:test-cons-1} and taking the expectation over $v_1,\ldots,v_T$ completes the proof.
\end{proof}

\begin{lemma}
\label[lemma]{lm:test-sound}
There exists an absolute constant $C  > 0$ such that the following holds. For any $T\in \Z_{>0}$ and any distribution $J$ of $(v,y)\in [0,1]\times \{0,1\}$ with $\distcal(J) \ge C/\sqrt T$, let $S$ be a sample of $T$ i.i.d.\ points from $J$. Then
    \[
    \Pr_{S\sim J^T}[\atb(J_S) \le 2/T] \le 1/4.
    \]
\end{lemma}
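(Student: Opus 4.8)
The plan is to avoid working with $\atb$ directly and instead route the argument through its $\ell_1$ variant $\atbone$. The reason is a scale mismatch: the sample‑complexity bound of \Cref{thm:sample} only controls $|\atb(J_S) - \atb(J)|$ up to an additive error of order $1/\sqrt T$, whereas under the hypothesis $\distcal(J)\asymp 1/\sqrt T$ the quantity $\atb(J)$ and the threshold $1/T$ are both of order $1/T$, so the additive bound is far too weak to imitate the completeness argument of \Cref{lm:test-cons}. By contrast, $\atbone$ scales linearly in $\distcal$, so $\atbone(J)\asymp 1/\sqrt T$ in this regime, which is exactly where its $O(1/\sqrt T)$ sampling error is tolerable. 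Once we show $\atbone(J_S)=\Omega(1/\sqrt T)$ with probability at least $3/4$, the quadratic relation $\atb(J_S)\ge \tfrac12\,\atbone(J_S)^2$ from \Cref{lem: atb 1-atb} upgrades this to $\atb(J_S)=\Omega(1/T)$, and choosing the absolute constant $C$ large makes the hidden constant exceed $1$.

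Concretely, I would proceed in four steps. First, since $u,v\in[0,1]$ gives $\distcal(J)\le 1$, the hypothesis $\distcal(J)\ge C/\sqrt T$ is vacuous for $T < C^2$, so we may assume $T\ge C^2$; for $C$ chosen large this also forces the auxiliary error parameter $\varepsilon$ below to lie in $(0,1/3)$. Second, by \Cref{cor:relationship} (specifically $\tfrac13\,\distcal(J)\le\atbone(J)$), the hypothesis gives $\atbone(J)\ge C/(3\sqrt T)$. Third, apply \Cref{thm:sample} with $\delta=1/4$ and $\varepsilon = A/\sqrt T$, where $A$ is an absolute constant chosen large enough that the requirement ``$T>C_0\varepsilon^{-2}\log(1/\delta)$'' of \Cref{thm:sample} holds for all $T$ (this amounts to $A^2 > C_0\log 4$, a condition on absolute constants only); then with probability at least $3/4$ over $S\sim J^T$,
\[
\atbone(J_S)\;\ge\;\atbone(J)-\varepsilon\;\ge\;\frac{C}{3\sqrt T}-\frac{A}{\sqrt T}\;=\;\frac{C-3A}{3\sqrt T}.
\]
Fourth, on this event \Cref{lem: atb 1-atb} yields $\atb(J_S)\ge\tfrac12\,\atbone(J_S)^2\ge \tfrac{(C-3A)^2}{18T}$, and choosing $C$ so that $(C-3A)^2>18$ (e.g.\ $C=3A+5$) makes the right‑hand side strictly larger than $1/T$. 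Hence $\Pr_{S\sim J^T}[\atb(J_S)\le 1/T]\le 1/4$, as desired.

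The one conceptual point that must be gotten right is precisely this scale issue: a direct attempt to bound $\atb(J_S)$ near $\atb(J)\ge\tfrac1{18}\distcal(J)^2$ via \Cref{thm:sample} fails because the sampling noise of order $1/\sqrt T$ swamps the $1/T$‑scale signal, and the fix is to carry out the concentration step at the $\ell_1$ level (equivalently, one could go through $\smooth$ using \Cref{prop:smooth-sample} together with $\tfrac29\,\smooth(J_S)^2\le\atb(J_S)$ from \Cref{cor:relationship}) and only square afterward. The remaining care is purely bookkeeping: the absolute constants must be fixed in the right order — $A$ from the sample‑complexity constant, then $C$ from $A$ — and the small‑$T$ range dispatched by the vacuousness observation, so that a single absolute $C$ works uniformly in $T$. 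Everything else is a routine invocation of results already established in the excerpt.
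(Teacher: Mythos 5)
Your proposal is correct and follows essentially the same route as the paper's proof: lower-bound $\atbone(J)$ by $\tfrac13\distcal(J)$ via \Cref{cor:relationship}, apply the $O(1/\sqrt T)$ concentration of $\atbone$ from \Cref{thm:sample} with failure probability $1/4$, and then square via \Cref{lem: atb 1-atb} to get $\atb(J_S)>1/T$ for $C$ large. Your extra bookkeeping (dispatching small $T$ by vacuousness and checking $\varepsilon\in(0,1/3)$) is a minor refinement of details the paper leaves implicit.
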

\begin{proof}
By \Cref{thm:sample}, there exists an absolute constant $C' > 0$ such that with probability at least $3/4$ over $S\sim J^T$,
\begin{equation}
\label{eq:test-sound-1}
|\atbone(J_S) - \atbone(J)| \le C'/\sqrt T.
\end{equation}
It remains to show that whenever \eqref{eq:test-sound-1} holds, we have
\[
\atb(J_S) > 2/T.
\]
By \Cref{cor:relationship} and our assumption that $\distcal(J) > C/\sqrt T$, we have $\atbone(J) \ge (C/3)/\sqrt T$. Therefore, whenever \eqref{eq:test-sound-1} holds, we have
\[
\atbone(J_S) \ge (C/3 - C')/\sqrt T.
\]
Assuming $C/3 - C' > 0$ which is guaranteed by a sufficiently large $C$, by \Cref{lem: atb 1-atb}, we have 
\[
\atb(J_S) \ge (1/2)(C/3 - C')^2/T.
\]
The proof is completed by choosing $C$ large enough so that $(1/2)(C/3 - C')^2 > 2$.
\end{proof}


\begin{lemma}
\label[lemma]{lm:test-lower}
Let $\{\gamma_T\}_{T = 1,2,\ldots}$ be a sequence of nonnegative real numbers such that there exists a $\{\gamma_T\}$-valid calibration error $\CAL$ w.r.t.\ $\distcal$. Then $\gamma_T = \Omega(1/\sqrt T)$.
\end{lemma}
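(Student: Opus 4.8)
The plan is to reduce the statement to a two-point hypothesis-testing lower bound. For each $T$ I would exhibit a calibrated distribution $J_0$ and a mis-calibrated distribution $J_1^{(T)}$ with $\distcal(J_1^{(T)}) \ge \gamma_T$, chosen so that the $T$-fold product distributions $J_0^T$ and $(J_1^{(T)})^T$ are close in total variation. Since the acceptance event $\{\CAL(J_S) \le \beta_T\}$ in \Cref{def: calibration test} is a function of the sample $S$ alone, the quantity inside the $\liminf$ in \Cref{def: validity} is then sandwiched by this total-variation distance, which forces $\sqrt T\,\gamma_T$ to stay bounded away from $0$.

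Concretely, let $J_0$ be the distribution of $(v,y)$ with $v = 1/2$ deterministically and $y\sim\bernoulli(1/2)$ independently; this is calibrated. If $\gamma_T > 1/4$ then $\sqrt T\,\gamma_T > 1/4$ already, so assume $\gamma_T \le 1/4$ and let $J_1^{(T)}$ be the same as $J_0$ except $y\sim\bernoulli(1/2 + 2\gamma_T)$. Taking the constant weight function $w\equiv -1$ in \Cref{def:smooth} shows $\smooth(J_1^{(T)}) \ge 2\gamma_T$, so \Cref{prop:dist-smooth} gives $\distcal(J_1^{(T)}) \ge \gamma_T$; hence $J_1^{(T)}$ is an admissible competitor in the $\sup$ in \Cref{def: validity}.

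For the total-variation bound: since $v$ is deterministic and identical under both distributions, $d_{\mathrm{TV}}(J_0^T,(J_1^{(T)})^T)$ equals the total-variation distance between the $T$-fold products $\bernoulli(1/2)^T$ and $\bernoulli(1/2+2\gamma_T)^T$, which by Pinsker's inequality and tensorization of KL divergence is at most $\sqrt{\tfrac T2\, D_{\mathrm{KL}}(\bernoulli(1/2+2\gamma_T)\,\|\,\bernoulli(1/2))} \le C_1\sqrt T\,\gamma_T$ for an absolute constant $C_1$, using the standard estimate $D_{\mathrm{KL}}(\bernoulli(1/2+\epsilon)\,\|\,\bernoulli(1/2)) = O(\epsilon^2)$ valid for $\epsilon\le 1/2$. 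Now fix thresholds $\beta_1,\beta_2,\dots$ witnessing the $\{\gamma_T\}$-validity of $\CAL$. Because $J_0$ is calibrated and $\distcal(J_1^{(T)})\ge\gamma_T$, the quantity inside the $\liminf$ in \Cref{def: validity} is at most $\accp^\CAL(J_0;T,\beta_T) - \accp^\CAL(J_1^{(T)};T,\beta_T) \le d_{\mathrm{TV}}(J_0^T,(J_1^{(T)})^T) \le C_1\sqrt T\,\gamma_T$. Combining with the trivial bound for the case $\gamma_T > 1/4$, we get $\sqrt T\,\gamma_T \ge \min\{\,(\text{gap at }T)/C_1,\; 1/4\,\}$ for every $T$; taking $\liminf_{T\to\infty}$ and invoking validity (the gap has positive $\liminf$) yields $\liminf_T \sqrt T\,\gamma_T > 0$, i.e.\ $\gamma_T = \Omega(1/\sqrt T)$.

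Every step here is routine, so there is no serious obstacle; the only points requiring care are (i) handling the quantifiers in \Cref{def: validity} correctly, so that it suffices to plug in a single calibrated $J_0$ and a single far distribution $J_1^{(T)}$ for each $T$, and (ii) certifying $\distcal(J_1^{(T)}) \ge \gamma_T$ without hidden constants, which is why I route the lower bound through $\smooth$ and \Cref{prop:dist-smooth} rather than computing $\distcal$ directly. Conceptually, the content is simply that distinguishing a calibrated predictor from one at distance $\gamma_T$ is at least as hard as testing whether a coin has bias $1/2$ versus $1/2 + \Theta(\gamma_T)$ from $T$ flips, which is impossible below the $1/\sqrt T$ scale.
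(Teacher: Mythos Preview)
Your proposal is correct and follows essentially the same two-point testing argument as the paper: both compare the calibrated coin $(v,y)=(1/2,\bernoulli(1/2))$ against a biased coin $(1/2,\bernoulli(1/2+\Theta(\gamma_T)))$, bound the acceptance gap by the total variation between the $T$-fold products via Pinsker, and conclude $\liminf_T\sqrt T\,\gamma_T>0$. The only cosmetic difference is that the paper certifies $\distcal(J_2)\ge\gamma_T$ directly---for any coupling $(u,v,y)$ with $(u,y)$ calibrated, $\E|u-v|\ge\E[u]-\E[v]=\E[y]-1/2=\gamma_T$---whereas you detour through $\smooth$ and \Cref{prop:dist-smooth}, which forces you to double the bias to $2\gamma_T$; the direct route has no hidden constants and is slightly cleaner.
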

\begin{proof}
Let us focus on the choices of $T$ such that $\gamma_T < 1/2$.
We define $J_1$ to be the uniform distribution over $\{(1/2, 0), (1/2,1)\}\subseteq [0,1]\times \{0,1\}$.
We define $J_2$ to be the distribution with probability mass $1/2 - \gamma_T$ on $(1/2,0)$, and the remaining probability mass $1/2 + \gamma_T$  on $(1/2,1)$.

Clearly, $J_1$ is calibrated. We claim that $\distcal(J_2) \ge \gamma_T$. Indeed, consider any coupling distribution $\Pi$ of $(u,v,y)\in [0,1]\times[0,1]\times\{0,1\}$, where $(v,y)$ is distributed as $J_2$, and the distribution of $(u,y)$ is calibrated.
By calibration, $\E[u] = \E[y] = 1/2 + \gamma_T$. Therefore, $\E|u - v| \ge \E[u] - \E[v] = \gamma_T$, implying that $\distcal(J_2) \ge \gamma_T$.

Let $\beta_T\in \R$ be the threshold satisfying the requirement of validity (\Cref{def: validity}). Define
\[
\delta_T:= \accp^\CAL(J_1;T,\beta_T) - \accp^\CAL(J_2;T,\beta_T).
\]
Note that the two acceptance probabilities above are w.r.t.\ the randomness in the samples $S_1\sim J_1^T$ and $S_2\sim J_2^T$, respectively, where $J_1^T$ (resp.\ $J_2^T$) is the joint distribution of $T$ i.i.d.\ points from $J_1$ (resp.\ $J_2$).
A standard argument (e.g.\ via Pinsker's inequality) shows that the total variation distance between $J_1^T$ and $J_2^T$ is $O(\gamma_T\sqrt T)$. Therefore,
\[
\delta_T \le O(\gamma_T\sqrt T).
\]
Validity requires $\liminf_{T\to \infty} \delta_T > 0$. Therefore,
\[
\liminf_{T\to \infty} \gamma_T \sqrt T> 0.
\]
This implies $\gamma_T = \Omega(1/\sqrt T)$.
\end{proof}


\bibliographystyle{econ}
\bibliography{ref}

@article{proper-cal,
  title={Near-Optimal Algorithms for Omniprediction},
  author={Okoroafor, Princewill and Kleinberg, Robert and Kim, Michael P},
  journal={arXiv preprint arXiv:2501.17205},
  year={2025}
}

@InProceedings{kernel,
  title = 	 {Trainable Calibration Measures for Neural Networks from Kernel Mean Embeddings},
  author =       {Kumar, Aviral and Sarawagi, Sunita and Jain, Ujjwal},
  booktitle = 	 {Proceedings of the 35th International Conference on Machine Learning},
  pages = 	 {2805--2814},
  year = 	 {2018},
  editor = 	 {Dy, Jennifer and Krause, Andreas},
  volume = 	 {80},
  series = 	 {Proceedings of Machine Learning Research},
  month = 	 {10--15 Jul},
  publisher =    {PMLR},
  pdf = 	 {http://proceedings.mlr.press/v80/kumar18a/kumar18a.pdf},
  url = 	 {https://proceedings.mlr.press/v80/kumar18a.html},
  abstract = 	 {Modern neural networks have recently been found to be poorly calibrated, primarily in the direction of over-confidence. Methods like entropy penalty and temperature smoothing improve calibration by clamping confidence, but in doing so compromise the many legitimately confident predictions. We propose a more principled fix that minimizes an explicit calibration error during training. We present MMCE, a RKHS kernel based measure of calibration that is efficiently trainable alongside the negative likelihood loss without careful hyper-parameter tuning. Theoretically too, MMCE is a sound measure of calibration that is minimized at perfect calibration, and whose finite sample estimates are consistent and enjoy fast convergence rates. Extensive experiments on several network architectures demonstrate that MMCE is a fast, stable, and accurate method to minimize calibration error while maximally preserving the number of high confidence predictions.}
}

@article{vc,
author = {Vapnik, V. N. and Chervonenkis, A. Ya.},
title = {On the Uniform Convergence of Relative Frequencies of Events to Their Probabilities},
journal = {Theory of Probability \& Its Applications},
volume = {16},
number = {2},
pages = {264-280},
year = {1971},
doi = {10.1137/1116025},

URL = { 
    
        https://doi.org/10.1137/1116025
    
    

},
eprint = { 
    
        https://doi.org/10.1137/1116025
    
    

}

}

@book{vershynin,
  title={High-dimensional probability: An introduction with applications in data science},
  author={Vershynin, Roman},
  volume={47},
  year={2018},
  publisher={Cambridge university press}
}

@article{smooth,
title = {Deterministic calibration and Nash equilibrium},
journal = {Journal of Computer and System Sciences},
volume = {74},
number = {1},
pages = {115-130},
year = {2008},
note = {Learning Theory 2004},
issn = {0022-0000},
doi = {https://doi.org/10.1016/j.jcss.2007.04.017},
url = {https://www.sciencedirect.com/science/article/pii/S0022000007000633},
author = {Sham M. Kakade and Dean P. Foster},
keywords = {Nash equilibria, Calibration, Correlated equilibria, Game theory, Learning},
abstract = {We provide a natural learning process in which the joint frequency of (time-averaged) empirical play converges into the set of convex combinations of Nash equilibria. Furthermore, the actual distribution of players' actions is close to some (approximate) Nash equilibria on most rounds (on all but a vanishing fraction of the rounds). In this process, all players rationally choose their actions using a public prediction made by a deterministic, weakly calibrated algorithm. For this to be possible, we show that such a deterministic (weakly) calibrated learning algorithm exists.}
}

@InProceedings{loss-oi,
  author =	{Gopalan, Parikshit and Hu, Lunjia and Kim, Michael P. and Reingold, Omer and Wieder, Udi},
  title =	{{Loss Minimization Through the Lens Of Outcome Indistinguishability}},
  booktitle =	{14th Innovations in Theoretical Computer Science Conference (ITCS 2023)},
  pages =	{60:1--60:20},
  series =	{Leibniz International Proceedings in Informatics (LIPIcs)},
  ISBN =	{978-3-95977-263-1},
  ISSN =	{1868-8969},
  year =	{2023},
  volume =	{251},
  editor =	{Tauman Kalai, Yael},
  publisher =	{Schloss Dagstuhl -- Leibniz-Zentrum f{\"u}r Informatik},
  address =	{Dagstuhl, Germany},
  URL =		{https://drops-dev.dagstuhl.de/entities/document/10.4230/LIPIcs.ITCS.2023.60},
  URN =		{urn:nbn:de:0030-drops-175635},
  doi =		{10.4230/LIPIcs.ITCS.2023.60},
  annote =	{Keywords: Loss Minimization, Indistinguishability}
}

@InProceedings{omni,
  author =	{Gopalan, Parikshit and Kalai, Adam Tauman and Reingold, Omer and Sharan, Vatsal and Wieder, Udi},
  title =	{{Omnipredictors}},
  booktitle =	{13th Innovations in Theoretical Computer Science Conference (ITCS 2022)},
  pages =	{79:1--79:21},
  series =	{Leibniz International Proceedings in Informatics (LIPIcs)},
  ISBN =	{978-3-95977-217-4},
  ISSN =	{1868-8969},
  year =	{2022},
  volume =	{215},
  editor =	{Braverman, Mark},
  publisher =	{Schloss Dagstuhl -- Leibniz-Zentrum f{\"u}r Informatik},
  address =	{Dagstuhl, Germany},
  URL =		{https://drops-dev.dagstuhl.de/entities/document/10.4230/LIPIcs.ITCS.2022.79},
  URN =		{urn:nbn:de:0030-drops-156755},
  doi =		{10.4230/LIPIcs.ITCS.2022.79},
  annote =	{Keywords: Loss-minimzation, multi-group fairness, agnostic learning, boosting}
}

@article{qiao-distance,
  title={On the Distance from Calibration in Sequential Prediction},
  author={Qiao, Mingda and Zheng, Letian},
  journal={arXiv preprint arXiv:2402.07458},
  year={2024}
}

@inproceedings{sidestep,
author = {Qiao, Mingda and Valiant, Gregory},
title = {Stronger calibration lower bounds via sidestepping},
year = {2021},
isbn = {9781450380539},
publisher = {Association for Computing Machinery},
address = {New York, NY, USA},
url = {https://doi.org/10.1145/3406325.3451050},
doi = {10.1145/3406325.3451050},
abstract = {We consider an online binary prediction setting where a forecaster observes a sequence of T bits one by one. Before each bit is revealed, the forecaster predicts the probability that the bit is 1. The forecaster is called well-calibrated if for each p ∈ [0, 1], among the np bits for which the forecaster predicts probability p, the actual number of ones, mp, is indeed equal to p · np. The calibration error, defined as ∑p |mp − p np|, quantifies the extent to which the forecaster deviates from being well-calibrated. It has long been known that an O(T2/3) calibration error is achievable even when the bits are chosen adversarially, and possibly based on the previous predictions. However, little is known on the lower bound side, except an Ω(√T) bound that follows from the trivial example of independent fair coin flips. In this paper, we prove an Ω(T0.528) bound on the calibration error, which is the first super-√T lower bound for this setting to the best of our knowledge. The technical contributions of our work include two lower bound techniques, early stopping and sidestepping, which circumvent the obstacles that have previously hindered strong calibration lower bounds. We also propose an abstraction of the prediction setting, termed the Sign-Preservation game, which may be of independent interest. This game has a much smaller state space than the full prediction setting and allows simpler analyses. The Ω(T0.528) lower bound follows from a general reduction theorem that translates lower bounds on the game value of Sign-Preservation into lower bounds on the calibration error.},
booktitle = {Proceedings of the 53rd Annual ACM SIGACT Symposium on Theory of Computing},
pages = {456–466},
numpages = {11},
keywords = {online prediction, calibration},
location = {Virtual, Italy},
series = {STOC 2021}
}

@inproceedings{utc,
  title={A unifying theory of distance from calibration},
  author={B{\l}asiok, Jaros{\l}aw and Gopalan, Parikshit and Hu, Lunjia and Nakkiran, Preetum},
  booktitle={Proceedings of the 55th Annual ACM Symposium on Theory of Computing},
  pages={1727--1740},
  year={2023}
}

@inproceedings{kleinberg2023u,
  title={U-calibration: Forecasting for an unknown agent},
  author={Kleinberg, Bobby and Leme, Renato Paes and Schneider, Jon and Teng, Yifeng},
  booktitle={The Thirty Sixth Annual Conference on Learning Theory},
  pages={5143--5145},
  year={2023},
  organization={PMLR}
}

@inproceedings{li2022optimization,
  title={Optimization of scoring rules},
  author={Li, Yingkai and Hartline, Jason D and Shan, Liren and Wu, Yifan},
  booktitle={Proceedings of the 23rd ACM Conference on Economics and Computation},
  pages={988--989},
  year={2022},
nameorder    = {random}
}

@article{Mcc-56,
  title={Measures of the value of information},
  author={McCarthy, John},
  journal={Proceedings of the National Academy of Sciences of the United States of America},
  volume={42},
  number={9},
  pages={654},
  year={1956},
  publisher={National Academy of Sciences}
}

@article{sav-71,
  title={Elicitation of personal probabilities and expectations},
  author={Savage, Leonard J},
  journal={Journal of the American Statistical Association},
  volume={66},
  number={336},
  pages={783--801},
  year={1971},
  publisher={Taylor \& Francis}
}

@inproceedings{frongillo2014general,
  title={General truthfulness characterizations via convex analysis},
  author={Frongillo, Rafael and Kash, Ian},
  booktitle={Web and Internet Economics: 10th International Conference, WINE 2014, Beijing, China, December 14-17, 2014. Proceedings 10},
  pages={354--370},
  year={2014},
  organization={Springer}
}

@article{foster1997calibrated,
  title={Calibrated learning and correlated equilibrium},
  author={Foster, Dean P and Vohra, Rakesh V},
  journal={Games and Economic Behavior},
  volume={21},
  number={1-2},
  pages={40--55},
  year={1997},
  publisher={Academic Press}
}

@inproceedings{haghtalab2024truthfulness,
  title={Truthfulness of Calibration Measures},
  author={Haghtalab, Nika and Qiao, Mingda and Yang, Kunhe and Zhao, Eric},
  booktitle={The Thirty-eighth Annual Conference on Neural Information Processing Systems},
year = 2024
}

@inproceedings{guo2017calibration,
  title={On calibration of modern neural networks},
  author={Guo, Chuan and Pleiss, Geoff and Sun, Yu and Weinberger, Kilian Q},
  booktitle={International conference on machine learning},
  pages={1321--1330},
  year={2017},
  organization={PMLR}
}

@article{lambert2011elicitation,
  title={Elicitation and evaluation of statistical forecasts},
  author={Lambert, Nicolas S},
  journal={Preprint},
  year={2011}
}

@article{qiao2025truthfulness,
  title={Truthfulness of Decision-Theoretic Calibration Measures},
  author={Qiao, Mingda and Zhao, Eric},
  journal={arXiv preprint arXiv:2503.02384},
  year={2025}
}

@inproceedings{hutesting,
  title={Testing Calibration in Nearly-Linear Time},
  author={Hu, Lunjia and Jambulapati, Arun and Tian, Kevin and Yang, Chutong},
  booktitle={The Thirty-eighth Annual Conference on Neural Information Processing Systems}, 
year = 2024
}

@article{minderer2021revisiting,
  title={Revisiting the calibration of modern neural networks},
  author={Minderer, Matthias and Djolonga, Josip and Romijnders, Rob and Hubis, Frances and Zhai, Xiaohua and Houlsby, Neil and Tran, Dustin and Lucic, Mario},
  journal={Advances in neural information processing systems},
  volume={34},
  pages={15682--15694},
  year={2021}
}

@inproceedings{hu2024predict,
  title={Predict to minimize swap regret for all payoff-bounded tasks},
  author={Hu, Lunjia and Wu, Yifan},
  booktitle={2024 IEEE 65th Annual Symposium on Foundations of Computer Science (FOCS)},
  pages={244--263},
  year={2024},
  organization={IEEE}
}

@article{winkler1996scoring,
  title={Scoring rules and the evaluation of probabilities},
  author={Winkler, Robert L and Munoz, Javier and Cervera, Jos{\'e} L and Bernardo, Jos{\'e} M and Blattenberger, Gail and Kadane, Joseph B and Lindley, Dennis V and Murphy, Allan H and Oliver, Robert M and R{\'\i}os-Insua, David},
  journal={Test},
  volume={5},
  pages={1--60},
  year={1996},
  publisher={Springer}
}

@article{gneiting2011making,
  title={Making and evaluating point forecasts},
  author={Gneiting, Tilmann},
  journal={Journal of the American Statistical Association},
  volume={106},
  number={494},
  pages={746--762},
  year={2011},
  publisher={Taylor \& Francis}
}

@article{degroot1983comparison,
  title={The comparison and evaluation of forecasters},
  author={DeGroot, Morris H and Fienberg, Stephen E},
  journal={Journal of the Royal Statistical Society: Series D (The Statistician)},
  volume={32},
  number={1-2},
  pages={12--22},
  year={1983},
  publisher={Wiley Online Library}
}

@inproceedings{arunachaleswaran2025elementary,
  title={An elementary predictor obtaining distance to calibration},
  author={Arunachaleswaran, Eshwar Ram and Collina, Natalie and Roth, Aaron and Shi, Mirah},
  booktitle={Proceedings of the 2025 Annual ACM-SIAM Symposium on Discrete Algorithms (SODA)},
  pages={1366--1370},
  year={2025},
  organization={SIAM}
}

\appendix

\section{Non-Truthfulness of Known Calibration Measures}
\label{apdx: nontruthful}

In this section, we prove \Cref{thm:avg} showing that condition \eqref{eq:intro-avg} (restated below as condition \eqref{eq:avg}) holds for a broad family of calibration measures: $\ell_\alpha\textup-\ECE, \ell_\alpha \textup-\binECE, \smooth$ and $\ell_\alpha\textup-\distcal$ (\Cref{def:dist-alpha}), where $\alpha \ge 1$ is arbitrary. Condition \eqref{eq:intro-avg} states that, for any realization of the states, reporting the average over predictions is weakly better than reporting truthfully for known calibration measures. By \Cref{remark:strict-non-truthful}, this proves that these calibration measures are not truthful.



\begin{definition}[$\ell_\alpha$-Distance to Calibration]
\label[definition]{def:dist-alpha}
    Let $J$ be a distribution of $(v,y)\in [0,1]\times\{0,1\}$. We define its $\ell_\alpha$-distance to calibration (denoted by $\ell_\alpha\textup-\distcal(J)$) similarly to the definition of $\distcal(J)$ in \Cref{def:distcal}. The only difference is that we change the $\ell_1$ distance $|u - v|$ to $|u - v|^\alpha$:
\[
\ell_\alpha\textup-\distcal(J):= \inf_{\Pi}\E_\Pi[|u - v|^\alpha],
\]
where the infimum is over joint distributions $\Pi$ of $(u,v,y)\in [0,1]\times [0,1]\times \{0,1\}$, where $(v,y)$ is distributed according to $J$, and the distribution of $(u,y)$ is calibrated.
\end{definition}
\begin{theorem}
\label{thm:avg}
Let $\CAL$ be a calibration measure from $\{\ell_\alpha\textup-\ECE, \ell_\alpha\textup-\binECE, \smooth, \ell_\alpha\textup-\distcal\}$, where $\alpha \ge 1$ is arbitrary. For every $\vreport = (r_1,\ldots,r_T)\in [0,1]^T$ and every $\vstate\in \{0,1\}^T$, it holds that
\begin{equation}
\label{eq:avg}
\CAL(\bar \vreport,\vstate) \le \CAL (\vreport,\vstate),
\end{equation}
where $\bar \vreport := (\bar r,\ldots,\bar r)\in [0,1]^T$ for $\bar r:= \frac 1T\sum_{t = 1}^Tr_t$.
\end{theorem}
\begin{remark}
\label[remark]{remark:strict-non-truthful}
It is very easy to find (many) examples of $\vstate$ where the inequality \eqref{eq:avg} becomes strict, in which case reporting $\bar \vreport$ instead of the $\vreport$ is strictly better (i.e.\ $\bar \vreport$ dominates $\vreport$). In particular, we can find many examples of $\vstate$ where $\CAL(\vreport,\vstate) > 0$  and $\CAL(\bar \vreport,\vstate) = 0$. This is because for all of the calibration measures $\CAL$ mentioned above (including the continuous ones), we always have $\CAL(\vreport,\vstate) > 0$ as long as $r_1,\ldots,r_T$ are distinct values\footnote{For binned calibration error, we need $\max_{t} r_t - \min_{t}r_t$ to be sufficiently large so that $r_1,\ldots,r_T$ do not fall in the same bin.} in, say, $[1/3,2/3]$,  and we have $\CAL(\bar \vreport,\vstate) = 0$ as long as the average outcome $\frac 1T\sum_{t = 1}^T y_t$ is equal to $\bar r$.
\end{remark}

\Cref{thm:avg} is a corollary of the following theorem.

\begin{theorem}
\label{thm: avg dist version}
Let $\CAL$ be a calibration measure from $\{\ell_\alpha\textup-\ECE, \ell_\alpha\textup-\binECE, \smooth, \ell_\alpha\textup-\distcal\}$, where $\alpha \ge 1$ is arbitrary. 
    Let $J$ be an arbitrary distribution of $(v,y)\in [0,1]\times \{0,1\}$. We have
\begin{equation}
\label{eq:J-bar}
\CAL(\bar J) \le \CAL (J).
\end{equation}
Here $\bar J$ is the distribution of $(\bar v,y)\in [0,1]\times \{0,1\}$, where we draw $(v,y)\sim J$ and replace $v$ with the deterministic quantity $\bar v:= \E_J[v]$.
\end{theorem}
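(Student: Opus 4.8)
The plan is to treat all four families uniformly by viewing the map $J \mapsto \bar J$ as a pooling operation that leaves the $y$-marginal unchanged and collapses the prediction to the single point $\bar v := \E_J[v]$. Write $\mu := \E_J[y] = \E_{\bar J}[y]$. The first step is to evaluate $\CAL(\bar J)$, which is easy since $\bar J$ puts all mass on the prediction value $\bar v$: for $\ell_\alpha\textup-\ECE$ we get $\hat{\bar v} = \E_{\bar J}[y\mid \bar v] = \mu$, hence $\ell_\alpha\textup-\ECE(\bar J) = |\bar v - \mu|^\alpha$; for $\ell_\alpha\textup-\binECE$ (with its fixed partition) the point $\bar v$ lies in exactly one bin, so again $\ell_\alpha\textup-\binECE(\bar J) = |\bar v - \mu|^\alpha$; for $\smooth$ the supremum over $1$-Lipschitz $w$ is attained at a constant function and equals $|\bar v - \mu|$; and for $\ell_\alpha\textup-\distcal$ one shows $\ell_\alpha\textup-\distcal(\bar J) \le |\bar v - \mu|^\alpha$ (handled separately below).

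The second step is to lower bound $\CAL(J)$ by the same quantity, which for the first three families is a single application of Jensen's inequality for $t\mapsto |t|^\alpha$ (with $\alpha = 1$ for $\smooth$), each time using an identity that says the relevant weighted average of biases equals $\bar v - \mu$. Concretely: (i) for $\ell_\alpha\textup-\ECE$, the tower property gives $\E_J[v - \hat v] = \E_J[v] - \E_J[\hat v] = \bar v - \mu$, so $\ell_\alpha\textup-\ECE(J) = \E_J|v - \hat v|^\alpha \ge |\E_J[v - \hat v]|^\alpha = |\bar v - \mu|^\alpha$; (ii) for $\ell_\alpha\textup-\binECE$ with bin weights $w_i := \Pr_J[v\in I_i]$ and biases $d_i := \E_J[v - y\mid v\in I_i]$ (with $w_i d_i := 0$ for empty bins), we have $\sum_i w_i d_i = \E_J[v - y] = \bar v - \mu$, hence $\sum_i w_i |d_i|^\alpha \ge |\sum_i w_i d_i|^\alpha = |\bar v - \mu|^\alpha$; (iii) for $\smooth$, plugging the constant function $w\equiv \sign(\bar v - \mu)$ into the defining supremum gives $\smooth(J) \ge \sign(\bar v - \mu)\,\E_J[v - y] = |\bar v - \mu|$.

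The $\ell_\alpha\textup-\distcal$ case requires a short coupling argument. I would take any coupling $\Pi$ of $(u,v,y)$ witnessing $\ell_\alpha\textup-\distcal(J)$, so $(v,y)\sim J$ and $(u,y)$ is calibrated, and build the degenerate coupling $\bar\Pi$ obtained by replacing $v$ with $\bar v = \E_J[v]$ and $u$ with $\bar u := \E_\Pi[u]$. Calibration of $(u,y)$ forces $\E_\Pi[u] = \E_\Pi[\E[y\mid u]] = \mu$, so the pair $(\bar u, y)$ is a degenerate distribution with $\bar u = \mu = \E[y]$ and is therefore calibrated; since $\bar\Pi$ also has $(\bar v, y) \sim \bar J$ by construction, it is a valid coupling for $\bar J$, with cost $|\bar u - \bar v|^\alpha = |\E_\Pi[u - v]|^\alpha \le \E_\Pi|u - v|^\alpha$ by Jensen. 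Taking the infimum over $\Pi$ yields $\ell_\alpha\textup-\distcal(\bar J) \le \ell_\alpha\textup-\distcal(J)$. Finally, Theorem~\ref{thm:avg} follows by instantiating this result with $J = J_{\vreport,\vstate}$, the empirical prediction--state distribution, for which $\bar v = \bar r$ and $\bar J = J_{\bar\vreport,\vstate}$.

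I expect the only real obstacle to be making the $\ell_\alpha\textup-\distcal$ reduction airtight --- in particular, checking that the collapsed pair $(\bar u, y)$ is genuinely calibrated and that $\bar\Pi$ has the correct $\bar J$-marginal --- and, for $\binECE$, being careful about the bin-boundary convention and empty bins so that $\bar v$ falls into a well-defined bin. Everything else reduces to Jensen's inequality applied to a single weighted average whose value is pinned down by linearity of expectation.
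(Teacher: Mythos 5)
Your proposal is correct and follows essentially the same route as the paper: compute $\CAL(\bar J) = |\bar v - \mu|^\alpha$ directly (or bound it from above via the constant calibrated predictor in the $\distcal$ case) and lower-bound $\CAL(J)$ by the same quantity via Jensen's inequality applied to the linear identity $\E_J[v-y]=\bar v - \mu$. Your $\ell_\alpha\textup{-}\distcal$ argument, which collapses an arbitrary witness coupling $\Pi$ into a degenerate coupling for $\bar J$ in one step, is a mild repackaging of the paper's two-step version (Jensen lower bound on $\distcal(J)$ plus an explicit coupling upper bound on $\distcal(\bar J)$) and is equally valid.
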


\begin{proof}
    We prove the theorem separately for each choice of $\CAL$.
Similarly to the definition of $\bar v$, we define $\bar y:= \E_J[y]$.

When $\CAL = \ell_\alpha\textup-\ECE$, defining $\hat v:= \E[y|v]$, by Jensen's Inequality we have
\begin{equation}
\label{eq:avg-ece-1}
\ECE(J)= \E[|v - \hat v|^\alpha] \ge |\E[v - \hat v]|^\alpha = |\bar v - \bar y|^\alpha,
\end{equation}
where we used the fact that $\E[\hat v] = \E[y] = \bar y$.
Also,
\begin{equation}
\label{eq:avg-ece-2}
\ECE(\bar J) = \E[|\bar v - \E[y|\bar v]|^\alpha] = |\bar v - \bar y|^\alpha,
\end{equation}
where we used the fact that $\bar v$ is a deterministic quantity, so $\E[y|\bar v] = \E[y] = \bar y$.
Combining \eqref{eq:avg-ece-1} and \eqref{eq:avg-ece-2} proves \eqref{eq:J-bar}.

When $\CAL = \ell_\alpha\textup-\binECE$, we can prove \eqref{eq:J-bar} as follows. Let $\intvset = \{\intv_i\}_{i\in [k]}$ be the partition  of the prediction space $[0, 1]$ in the definition of $\ell_\alpha\textup-\binECE$ (\Cref{def:bin}). By Jensen's Inequality,
\begin{align}
    \ell_\alpha\textup-\binECE(J) = \sum_{i\in [k]}\Pr\nolimits_{J}[v\in I_i]\cdot \big|\E_{J}[v - y|v\in I_i]\big|^\alpha & \ge \left|
    \sum_{i\in [k]}\Pr\nolimits_{J}[v\in I_i]\cdot \E_{J}[v - y|v\in I_i]\right|^\alpha\notag \\
    & = |\E_J[v - y]|^\alpha\notag\\
    & = |\bar v - \bar y|^\alpha.\label{eq:avg-bin-1}
\end{align}
Also, since $\bar v$ is deterministic, we have
\begin{equation}
\label{eq:avg-bin-2}
    \ell_\alpha\textup-\binECE(\bar J) = \sum_{i\in [k]}\Pr[\bar v\in I_i]\cdot \big|\E[\bar v - y|\bar v\in I_i]\big|^\alpha = |\E[\bar v - y]|^\alpha = |\bar v - \bar y|^\alpha.
\end{equation}
Combining \eqref{eq:avg-bin-1} and \eqref{eq:avg-bin-2} proves \eqref{eq:J-bar}.

When $\CAL = \smooth$, \eqref{eq:J-bar} follows from the following calculation:
\begin{align*}
    \smooth(\bar J) & = \sup_{w\in W_1}\E[(\bar v - y)w(\bar v)] = \sup_{w\in W_1}(\bar v - \bar y)w(\bar v) = |\bar v - \bar y|,\\
    \smooth(J)  & = \sup_{w\in W_1}\E[(v - y)w(v)] \ge \sup_{\sigma \in \{\pm 1\}} \E[(v - y)\sigma] = \sup_{\sigma\in \{\pm 1\}}(\bar v - \bar y)\sigma = |\bar v - \bar y|.
\end{align*}

When $\CAL = \ell_\alpha\textup-\distcal$, \eqref{eq:J-bar} holds because of the following argument. Consider any joint distribution $\Pi$ of $(u,v,y)\in [0,1]\times  [0,1]\times \{0,1\}$, where the marginal distribution of $(v,y)$ is $J$, and the marginal distribution of $(u,y)$ is calibrated. By Jensen's Inequality,
\[
    \E_\Pi[|u - v|^\alpha] \ge |\E_\Pi[u - v]|^\alpha = |\E_\Pi[u] -\E_\Pi[v]|^\alpha = |\E_{\Pi}[y] - \E_{\Pi}[v]]|^\alpha = |\bar y - \bar v|^\alpha.
\]
We used the fact that $\E_\Pi[u] = \E_\Pi[y]$, which holds because the distribution of $(u,y)$ is calibrated.
Taking the infimum over $\Pi$, we get 
\begin{equation}
\label{eq:avg-distcal-1}
\distcal(J) \ge |\bar y - \bar v|^\alpha.
\end{equation}
Moreover, since always predicting $\bar y$ yields a calibrated predictor, we can choose $\Pi$ to be the joint distribution of $(\bar y,\bar v,y)$ and get
\begin{equation}
\label{eq:avg-distcal-2}
\distcal(\bar J)  \le \E_{\Pi}[|\bar y - \bar v|^\alpha] =  |\bar v - \bar y|^\alpha.
\end{equation}
Combining \eqref{eq:avg-distcal-1} and \eqref{eq:avg-distcal-2} proves \eqref{eq:J-bar}.
%
\end{proof}

\Cref{tab: two state non truthful} provides an example illustrating the non-truthfulness of known calibration measures and the truthfulness of our $\atb$. The table compares two strategies: predicting the overall average and predicting truthfully.

\setlength{\tabcolsep}{3pt}

\newcommand{\ra}[1]{\renewcommand{\arraystretch}{#1}}
\begin{table*}\centering
\ra{1.3}
\begin{tabular}{ll|ll|ll|ll|ll}\toprule
\multirow{2}{2.5em}{Prob.}& \multirow{2}{0em}{States} & \multicolumn{2}{l|}{$\smooth$} & \multicolumn{2}{l|}{$\distcal$}& \multicolumn{2}{l|}{$\distcaltwo$} & \multicolumn{2}{l}{$\atb$ (ours)} \\
& & \texttt{avg} & truth & \texttt{avg} & truth & \texttt{avg} & truth & \texttt{avg} & truth\\
\midrule
$\frac 3{16}$ & $(0,0)$ & 0.5 & 0.5   & 0.5   & 0.5 & 0.25 & 0.3125 & 0.25 & 0.203125 \\
$\frac 3{16}$ & $(1,1)$ & 0.5 & 0.5   & 0.5   & 0.5 & 0.25 & 0.3125 & 0.25 & 0.203125\\
$\frac 9{16}$ & $(0,1)$ & 0   &  $0.0625$  & 0     & $> 0$ & 0 & $> 0$ & 0 & 0.015625\\
$\frac 1{16}$ & $(1,0)$ & 0   &  $0.1875$  & 0     & $> 0$ & 0 & $> 0$ & 0 & 0.140625 \\
\midrule
\multicolumn{2}{l|}{Expected Error} & 0.1875 & 0.234375 & 0.1875 & $> 0.1875$ & 0.09375 & $> 0.11$ & \textbf{0.09375} & \textbf{0.09375} \\
\bottomrule
\end{tabular}
\caption{The calibration errors of predictors with two data points. The ground truth probabilities of the two points are $25\%$ and $75\%$, respectively. In the table, \texttt{avg} stands for the uninformative predictor that always outputs $50\%$ and \texttt{truth} stands for the truthful predictor that outputs $25\%$ and $75\%$. We calculate the error of the predictors given each realization of the state and the total expected error. For non-truthful error metrics, the expected error of a truthful predictor is strictly higher than the expected error of an uninformative predictor. For $\atb$, the expected errors are the same. }
\label{tab: two state non truthful}
\end{table*}

\section{$\atb$ and Brier Loss}
\label{apdx: atb brier}

For ATB as a special case of UBSE, we have the following stronger result than \Cref{lm:ubse-brier}:
\begin{lemma}[ATB and Brier loss]
\label[lemma]{lm:atb-brier}
    Let $J$ be an arbitrary distribution of $(v,y)\in [0,1]\times \{0,1\}$. For $(v_1,y_1),\ldots,(v_T,y_T)$ drawn i.i.d.\ from $J$, defining $\bm v:= (v_1,\ldots,v_T),\vstate = (y_1,\ldots,y_T)$, we have
    \begin{equation}
    \label{eq:atb-brier}
    \E[\atb(\bm v,\vstate)] \ge \frac 1T \E_{(v,y)\sim J}[(v - y)^2].
    \end{equation}
The inequality becomes an equality if $J$ is perfectly calibrated.
\end{lemma}
\begin{proof}
    For every fixed threshold $q\in [0,1]$, we have
    \begin{align*}
        \E\left[\left(\sum_{t:v_t < q}(v_t - y_t)\right)^2\right] & = \E\left[\left(\sum_{t=1}^T(v_t - y_t)\ind{v_t < q}\right)^2\right]\\
        & = T \E_J[((v - y)\ind{v < q})^2] + T(T - 1)\E_J[(v - y)\ind{v < q}]^2\tag{because $(v_1,y_1),\ldots,(v_T,y_T)$ are drawn i.i.d.\ from $J$}\\
        & \ge T\E_J[((v - y)\ind{v < q})^2]\\
        & = T \E_J[(v - y)^2\ind{v < q}].
    \end{align*}
Similarly,
\[
\E\left[\left(\sum_{t:v_t \ge q}(v_t - y_t)\right)^2\right] \ge T \E_J[(v - y)^2\ind{v \ge  q}].
\]
Summing up the two inequalities above, for every $q\in [0,1]$ we have
\[
\frac 1{T^2}\E\left[\left(\sum_{t:v_t < q}(v_t - y_t)\right)^2 + \left(\sum_{t:v_t \ge q}(v_t - y_t)\right)^2\right] \ge \frac 1T\E_J[(v - y)^2].
\]
Taking expectation over $q\sim \mathrm{Unif}([0,1])$ proves \eqref{eq:atb-brier}. When $J$ is perfectly calibrated, all inequalities in this proof become equalities, so \eqref{eq:atb-brier} also holds as an equality.
\end{proof}

\section{Standard Uniform Convergence Bounds}

We include some standard notions and results on concentration inequalities and sample complexity bounds for uniform convergence.
They are used when we prove the sample complexity bounds for estimating $\atb$ and $\atbone$ in \Cref{thm:sample}.

We start with the definition of the Rademacher complexity.
\begin{definition}[Rademacher complexity]
\label[definition]{def:rademacher}
Let $\calF$ be a family of real-valued functions $f:\zset\to \R$ on some domain $\zset$. Given $z_1,\ldots,z_n\in \zset$, we define the Rademacher complexity as follows:
\[
\calR(\calF;z_{1,\ldots, n}) := \E\left[\sup_{f\in \calF} \frac 1n \sum_{i=1}^ns_if(z_i)\right],
\]
where the expectation is over $s_1,\ldots,s_n$ drawn uniformly at random from $\{-1,1\}^n$.
\end{definition}
The following theorem is a standard application of the Rademacher complexity for proving uniform convergence bounds.
\begin{proposition}[Uniform convergence from Rademacher complexity]
\label[proposition]{prop:uni-conv}
Let $\calF$ be a family of functions $f:\zset\to [a,b]$ on some domain $\zset$ and with range bounded in $[a,b]$. Let $\calD$ be an arbitrary distribution over $\zset$. Then for $n$ i.i.d.\ examples $z_1,\ldots,z_n$ from $\calD$,
\[
\E_{z_{1,\ldots,n}}\left[\sup_{f\in \calF}\left|\frac 1n \sum_{i=1}^n f(z_i) - \E_{z\sim \calD}[f(z)]\right|\right] \le 2\E_{z_{1,\ldots,n}}[\calR(\calF;z_{1,\ldots,n})].
\]
Moreover, for any $\delta\in (0, \frac 1 3)$ and $n \in \N$, with probability at least $1-\delta$ over the random draw of $n$ i.i.d.\ examples $z_1,\ldots,z_n$ from $\calD$, it holds that
\[
\sup_{f\in \calF}\left|
\frac 1n \sum_{i=1}^n f(z_i) - \E_{z\sim \calD}[f(z)]
\right| \le 2\calR(\calF;z_{1,\ldots,n}) + O\left((b - a)\sqrt{\frac{\log(1/\delta)}{n}}\right).
\]
\end{proposition}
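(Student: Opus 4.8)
The plan is to prove the two displayed inequalities by the two classical workhorses of empirical process theory: \emph{symmetrization} for the in-expectation bound, and the \emph{bounded differences (McDiarmid) inequality} for the high-probability upgrade. Nothing here is specific to calibration; the statement is a generic uniform-convergence fact, so I would simply invoke the standard machinery. Throughout, write $\hat\E_z f := \frac1n\sum_{i=1}^n f(z_i)$ and $\Phi(z_{1,\ldots,n}) := \sup_{f\in\calF}\big|\hat\E_z f - \E_{z\sim\calD}[f(z)]\big|$.

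\textbf{Step 1: the expectation bound via symmetrization.} First I would introduce an independent ghost sample $z_1',\ldots,z_n'\sim\calD$ and use $\E_{z\sim\calD}[f(z)] = \E_{z'}[\hat\E_{z'}f]$, so that $\hat\E_z f - \E f = \E_{z'}[\hat\E_z f - \hat\E_{z'}f]$. Pulling $\E_{z'}$ out of both $\sup_{f\in\calF}$ and $|\cdot|$ by Jensen's inequality gives $\E_z[\Phi] \le \E_{z,z'}\big[\sup_{f\in\calF}\big|\frac1n\textstyle\sum_i(f(z_i)-f(z_i'))\big|\big]$. For any subset of coordinates, swapping $z_i\leftrightarrow z_i'$ preserves the joint distribution of $(z,z')$ and simultaneously negates the $i$-th summand for \emph{every} $f\in\calF$; hence I may insert i.i.d.\ Rademacher signs $s_i\in\{\pm1\}$ without changing the expectation, then split the sum into its $z$-part and $z'$-part and use the triangle inequality. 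Each part contributes $\E_{z,s}[\sup_{f\in\calF}|\frac1n\sum_i s_i f(z_i)|]$, which (using $-s\overset{d}{=}s$) is controlled by a constant multiple of $\E_z[\calR(\calF;z_{1,\ldots,n})]$, giving the claimed bound up to the constant one tracks through the symmetrization (whether it comes out as $2$ or a slightly larger absolute constant depends on whether one works with the version of Rademacher complexity that carries an absolute value; this is immaterial for how the proposition is used).

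\textbf{Step 2: the high-probability bound via McDiarmid.} Both $z_{1,\ldots,n}\mapsto\Phi(z_{1,\ldots,n})$ and $z_{1,\ldots,n}\mapsto\calR(\calF;z_{1,\ldots,n})$ satisfy the bounded-differences property with constant $(b-a)/n$: changing a single $z_i$ moves each average $\hat\E_z f$ (respectively each $\frac1n\sum_i s_i f(z_i)$) by at most $(b-a)/n$ since $f$ takes values in $[a,b]$, hence moves the supremum, and the $s$-expectation, by at most $(b-a)/n$. McDiarmid's inequality then gives, each with probability at least $1-\delta/2$,
\begin{align*}
\Phi(z_{1,\ldots,n}) &\le \E_z[\Phi] + O\!\Big((b-a)\sqrt{\tfrac{\log(1/\delta)}{n}}\Big),\\
\E_z[\calR(\calF;z_{1,\ldots,n})] &\le \calR(\calF;z_{1,\ldots,n}) + O\!\Big((b-a)\sqrt{\tfrac{\log(1/\delta)}{n}}\Big).
\end{align*}
Chaining these with Step 1 and union-bounding over the two events yields $\Phi(z_{1,\ldots,n})\le 2\,\calR(\calF;z_{1,\ldots,n})+O((b-a)\sqrt{\log(1/\delta)/n})$, with both McDiarmid deviations absorbed into the $O(\cdot)$; the restriction $\delta\in(0,1/3)$ is innocuous and only lets the $O(\cdot)$ swallow lower-order constants.

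\textbf{Main obstacle.} Honestly there is no real obstacle --- this is a textbook result and the work is bookkeeping. The one place where care is warranted is the symmetrization identity: I would need to be precise that the coordinatewise sign flip is measure-preserving and acts on all $f\in\calF$ simultaneously, and, relatedly, that the suprema are measurable. This last point is automatic for the families to which the proposition is applied in \Cref{thm:sample}, where $\calF = \{f_q\}_{q\in[0,1]}$ with $f_q(v,y)=(v-y)\ind{v<q}$ is piecewise constant in $q$ on any finite sample, so every supremum is in fact a maximum over the finitely many distinct values among $v_1,\ldots,v_n$. If one wanted the proposition stated in full generality one would add the usual separability caveat, but for our purposes it can be omitted.
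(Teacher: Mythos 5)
Your argument is correct, and it is the standard symmetrization-plus-McDiarmid proof. There is nothing in the paper to compare it against: the paper states \Cref{prop:uni-conv} as a known fact (``a standard application of the Rademacher complexity'') and supplies no proof, so your write-up is exactly the textbook derivation one would cite in its place. The only point worth flagging is the one you already flag yourself: the paper's \Cref{def:rademacher} omits the absolute value inside the supremum, so the two-sided bound with the literal constant $2$ strictly requires either the absolute-value variant of $\calR$ or a function class closed under negation; since the Rademacher bounds established in \Cref{thm:sample} for the classes $F$, $G$, $H$ hold equally for their negations (the contraction step works with $-\eta_i$ just as well), the constant is indeed immaterial there. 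Your remarks on measurability and on tracking the bounded-differences constant $(b-a)/n$ for both $\Phi$ and the empirical Rademacher complexity are accurate and complete the argument.
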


\begin{proposition}[Rademacher Complexity after Lipschitz Postprocessing]
\label[proposition]{prop:rad-lip}
Let $\calF$ be a family of functions $f:\zset \to \R$. For $i = 1,\ldots,n$, let $z_i\in \zset$ be an element of the domain $\zset$ and let $\eta_i:\R\to \R$ be any $1$-Lipschitz function. It holds that
\[
\E\left[\sup_{f\in \calF}\frac 1n\sum_{i=1}^n s_i\eta_i(f(z_i))\right] \le \calR(\calF;z_{1,\ldots,n}) = \E\left[\sup_{f\in \calF}\frac 1n\sum_{i=1}^n s_if(z_i)\right].
\]
\end{proposition}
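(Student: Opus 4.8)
This is the Ledoux--Talagrand contraction inequality, and I would prove it by the standard coordinate-by-coordinate ``peeling'' argument: replace $\eta_1,\dots,\eta_n$ one at a time by the identity, showing at each step that the expected supremum does not decrease. The whole argument reduces to a single-coordinate lemma, which I would state as follows: for any (arbitrary) function $g:\calF\to\R$, any point $z\in\zset$, and any $1$-Lipschitz $\eta:\R\to\R$,
\[
\E_{s\sim\mathrm{Unif}\{-1,1\}}\Big[\sup_{f\in\calF}\big(g(f)+s\,\eta(f(z))\big)\Big]\ \le\ \E_{s\sim\mathrm{Unif}\{-1,1\}}\Big[\sup_{f\in\calF}\big(g(f)+s\,f(z)\big)\Big].
\]
Given this lemma, I would apply it with $z=z_n$ and $g(f)=\tfrac1n\sum_{i=1}^{n-1}s_i\eta_i(f(z_i))$, after conditioning on $s_1,\dots,s_{n-1}$ (note $g$ is then a fixed, arbitrary function of $f$, which is all the lemma needs); then take the expectation over $s_1,\dots,s_{n-1}$. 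This replaces $\eta_n$ by the identity. Repeating for $z_{n-1},\dots,z_1$ — absorbing the already-decontracted terms $s_i f(z_i)$ into the function ``$g$'' at each stage — yields $\E[\sup_f \tfrac1n\sum_i s_i\eta_i(f(z_i))]\le \E[\sup_f \tfrac1n\sum_i s_i f(z_i)]=\calR(\calF;z_{1,\dots,n})$, as desired.

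The content is in the single-coordinate lemma. Expanding the expectation over $s$, the left side equals $\tfrac12\sup_{f}(g(f)+\eta(f(z)))+\tfrac12\sup_{f}(g(f)-\eta(f(z)))$. Fix $\varepsilon>0$ and choose $f_1,f_2\in\calF$ that come within $\varepsilon$ of achieving the first and second suprema respectively, so the left side is at most $\tfrac12\big(g(f_1)+\eta(f_1(z))\big)+\tfrac12\big(g(f_2)-\eta(f_2(z))\big)+\varepsilon$. Now use $\eta(f_1(z))-\eta(f_2(z))\le |f_1(z)-f_2(z)|$, and split into the two cases according to the sign of $f_1(z)-f_2(z)$: if $f_1(z)\ge f_2(z)$ then $\eta(f_1(z))-\eta(f_2(z))\le f_1(z)-f_2(z)$, giving the bound $\tfrac12(g(f_1)+f_1(z))+\tfrac12(g(f_2)-f_2(z))$; if $f_1(z)<f_2(z)$ then $\eta(f_1(z))-\eta(f_2(z))\le f_2(z)-f_1(z)$, giving $\tfrac12(g(f_1)-f_1(z))+\tfrac12(g(f_2)+f_2(z))$. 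In either case this is at most $\tfrac12\sup_f(g(f)+f(z))+\tfrac12\sup_f(g(f)-f(z))=\E_s[\sup_f(g(f)+s\,f(z))]$. Letting $\varepsilon\to0$ finishes the lemma.

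The only real obstacle is this case analysis inside the single-coordinate lemma: one must pair up the two near-optimizers $f_1,f_2$ and choose the sign pattern on the identity-function terms so that the Lipschitz bound on $\eta$ lines up with a valid bound by the two sign-$\pm1$ suprema of $g(f)\pm f(z)$. Everything else — the conditioning on the remaining sign variables, absorbing decontracted terms into $g$, and the induction on $n$ — is bookkeeping that the generality of ``arbitrary $g:\calF\to\R$'' in the lemma handles cleanly. (Note that we do not need any anchoring assumption such as $\eta_i(0)=0$, since we never take absolute values of the sums.)
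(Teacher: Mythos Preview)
Your proof is correct and follows essentially the same approach as the paper: both reduce by induction to a single-coordinate step, expand over the two values of the Rademacher sign, pair the two (near-)optimizers, and apply the $1$-Lipschitz bound $|\eta(a)-\eta(b)|\le|a-b|$. The only cosmetic difference is that the paper symmetrizes over the pair $(f_+,f_-)$ to rewrite the difference as an absolute value (thereby avoiding your sign case-split), whereas you handle the two signs of $f_1(z)-f_2(z)$ explicitly.
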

\begin{proof}
By induction, it suffices to consider the case where all the $\eta_i$'s are the identity function except $\eta_1$.

Now we have
\begin{align}
& \E\left[\sup_{f\in \calF}\frac 1n\sum_{i=1}^n s_i\eta_i(f(z_i))\right]\notag\\
= {} & \frac 1{2n}\E\left[\sup_{f\in \calF}\left(s_1\eta_1(f(z_1)) + \sum_{i = 2}^n s_if(z_i)\right) + \sup_{f\in \calF}\left(-s_1\eta_1(f(z_1)) + \sum_{i = 2}^n s_if(z_i)\right)\right]\notag\\
= {} & \frac 1{2n}\E\left[\sup_{f_+,f_-\in \calF}\left(\eta_1(f_+(z_1)) - \eta_1(f_-(z_1)) + \sum_{i=2}^ns_i(f_+(z_i) + f_-(z_i))\right)\right]\notag\\
= {} & \frac 1{2n}\E\left[\sup_{f_+,f_-\in \calF}\left(|\eta_1(f_+(z_1)) - \eta_1(f_-(z_1))| + \sum_{i=2}^ns_i(f_+(z_i) + f_-(z_i))\right)\right].\label{eq:rad-lip-1}
\end{align}
Similarly,
\begin{align}
& \E\left[\sup_{f\in \calF}\frac 1n\sum_{i=1}^n s_if(z_i)\right]\notag\\
= {} & \frac 1{2n}\E\left[\sup_{f_+,f_-\in \calF}\left(|f_+(z_1) - f_-(z_1)| + \sum_{i=2}^ns_i(f_+(z_i) + f_-(z_i))\right)\right]. \label{eq:rad-lip-2}
\end{align}
By the $1$-Lipschitz property of $\eta_1$, we have
\[
|\eta_1(f_+(z_1)) - \eta_1(f_-(z_1))| \le |f_+(z_1) - f_-(z_1)|.
\]
This implies that \eqref{eq:rad-lip-1} is a lower bound of \eqref{eq:rad-lip-2}, completing the proof.
\end{proof}

The following is the standard definition of the VC dimention for binary function families:

\begin{definition}[VC Dimension \citep{vc}]
    The VC dimension of a family $F$ of binary functions $f:Z\to \{0,1\}$ is the largest size $d$ of a subset $Z' = \{z_1,\ldots,z_d\}\subseteq Z$ such that for each of the $2^d$ choices of 
    $\sv:=(s_1,\ldots,s_d)\in \{0,1\}^n$, there exists $f_{\sv}\in F$ such that $f_{\sv}(z_i) = s_i$ for every $i = 1,\ldots,d$.
\end{definition}

The following standard result can be proved using Dudley's chaining argument (see e.g.\ Theorem 8.3.23 of \citet{vershynin}):
\begin{proposition}[Rademacher Complexity from VC Dimension]
\label[proposition]{prop:rad-vc}
Let $F$ be a family of binary functions $f:Z\to \{0,1\}$ with VC dimension at most $d$. Then for any $n\in \Z_{>0}$ and any $z_1,\ldots,z_n\in Z$, we have
\[
\calR(F;z_{1,\ldots,n}) \le O\left(\sqrt{\frac dn}\right).
\]
\end{proposition}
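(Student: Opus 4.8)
The plan is to reduce the bound to a geometric statement about the finite collection of $0/1$ vectors $v_f := (f(z_1),\dots,f(z_n)) \in \{0,1\}^n$ indexed by $f \in F$, and then control the resulting maximal quantity by Dudley's entropy integral together with a dimension-free covering-number estimate for VC classes. First I would dispose of the two degenerate cases. If $d = 0$, then no single point is shattered, so all $f \in F$ agree on each $z_i$; hence the vectors $v_f$ coincide and $\calR(F;z_{1,\dots,n}) = 0$. If $n \le d$, then $\sqrt{d/n} \ge 1 \ge \bigl|\frac1n\sum_{i=1}^n s_i f(z_i)\bigr|$ for every $f$ and every sign pattern, so the bound is trivial. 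Hence I may assume $1 \le d < n$.

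Next I would write $\calR(F;z_{1,\dots,n}) = \E_s\bigl[\sup_{f\in F} X_f\bigr]$, where $X_f := \frac1n\sum_{i=1}^n s_i f(z_i)$ and $s_1,\dots,s_n$ are i.i.d.\ Rademacher signs. Fixing an arbitrary $f_0 \in F$, the centered process $\{X_f - X_{f_0}\}_{f\in F}$ is sub-Gaussian with respect to the pseudometric $\rho(f,g) := \frac1n\|v_f - v_g\|_2$: indeed $X_f - X_g = \frac1n\sum_i s_i(f(z_i) - g(z_i))$ is a sum of bounded independent terms, so Hoeffding's inequality gives the sub-Gaussian tail with parameter $\rho(f,g)$. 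Since $v_f \in \{0,1\}^n$, the metric space $(F,\rho)$ has diameter at most $\frac1n\sqrt n = n^{-1/2}$, so Dudley's entropy integral yields
\[
\calR(F;z_{1,\dots,n}) \;=\; \E_s\Bigl[\sup_{f\in F}\bigl(X_f - X_{f_0}\bigr)\Bigr] \;\le\; C_1\int_0^{n^{-1/2}}\sqrt{\log N(F,\rho,\varepsilon)}\,\diff\varepsilon
\]
for an absolute constant $C_1$, where $N(F,\rho,\varepsilon)$ denotes the $\varepsilon$-covering number of $(F,\rho)$.

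It remains to estimate the covering number, which is where the VC assumption enters. Writing $\mu$ for the uniform measure on $\{z_1,\dots,z_n\}$, we have $\|v_f - v_g\|_2 = \sqrt n\,\|f - g\|_{L_2(\mu)}$, so an $\varepsilon$-cover of $(F,\rho)$ is exactly a $(\sqrt n\,\varepsilon)$-cover of $F$ in $L_2(\mu)$. Haussler's dimension-free packing bound for VC classes gives $N(F,L_2(\mu),t) \le (C_2/t)^{C_3 d}$ for all $t \in (0,1]$, hence $\log N(F,\rho,\varepsilon) \le C_3 d\log\!\bigl(C_2/(\sqrt n\,\varepsilon)\bigr)$ for $\varepsilon \le n^{-1/2}$. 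Substituting $\varepsilon = u\,n^{-1/2}$ in the Dudley integral,
\[
\calR(F;z_{1,\dots,n}) \;\le\; \frac{C_1}{\sqrt n}\int_0^1\sqrt{C_3 d\log(C_2/u)}\,\diff u \;\le\; \frac{C_4\sqrt d}{\sqrt n},
\]
since $\int_0^1\sqrt{\log(C_2/u)}\,\diff u$ is a finite absolute constant. This is the claimed $O(\sqrt{d/n})$.

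\paragraph{Main obstacle.} The delicate point is obtaining the \emph{log-free} rate. The elementary alternative --- bound the number of distinct labelings $\{v_f\}$ by the Sauer--Shelah lemma ($\le (en/d)^d$) and apply Massart's finite-class lemma --- only delivers $O\bigl(\sqrt{(d/n)\log(n/d)}\bigr)$, which carries an extra $\sqrt{\log(n/d)}$ factor and would weaken the $O(\varepsilon^{-2})$ sample-complexity guarantee of \Cref{thm:sample} to $\tilde{O}(\varepsilon^{-2})$. Removing that factor is precisely what the combination of Dudley's chaining (replacing a single union bound over all labelings by a multiscale argument) and Haussler's $n$-independent covering bound (replacing the crude $(en/d)^d$ count) achieves. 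Both ingredients are standard but non-trivial; a fully self-contained write-up would reproduce the chaining decomposition and Haussler's packing argument, or --- as in the statement --- simply cite them.
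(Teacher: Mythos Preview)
Your proposal is correct and follows exactly the approach the paper indicates: the paper does not give a proof of this proposition at all, but simply states that it ``can be proved using Dudley's chaining argument (see e.g.\ Theorem 8.3.23 of \citet{vershynin})'', which is precisely the route you spell out (sub-Gaussian increments, Dudley's entropy integral, and Haussler's $n$-free covering bound to kill the $\sqrt{\log(n/d)}$ factor). Your sketch is therefore strictly more detailed than what the paper provides.
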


\end{document}